\newcommand{\best}[1]{\textbf{\cellcolor{green!25}#1}}
\newcommand{\greencell}[1]{\cellcolor{green!25}#1}
\newcolumntype{L}{>{\raggedright\arraybackslash}p{3.8cm}}
\newcolumntype{C}{>{\centering\arraybackslash}p{2.2cm}}
\newcommand{\ms}[2]{#1\,{$\pm$\,#2}}
\newcommand{\marginalplain}{\pi}
\newcommand{\marginal}[1]{\marginalplain_{#1}}
\newcommand{\timet}{t}
\newcommand{\timeidx}{i}
\newcommand{\sampleidx}{n}
\newcommand{\totsteps}{I}
\newcommand{\timestep}[1]{\timet_{#1}}
\newcommand{\de}{\text{d}}
\newcommand{\responseplain}{X}
\newcommand{\obsplain}{Y}
\newcommand{\altrespplain}{Z}
\newcommand{\statevar}{\bm{y}}
\newcommand{\responseat}[1]{\bm{\responseplain}_{#1}}
\newcommand{\altresponseat}[1]{\bm{\altrespplain}_{#1}}
\newcommand{\obsat}[1]{\bm{\obsplain}_{#1}}
\newcommand{\obsatall}[1]{\bm{\obsplain}_{#1}^{\text{all}}}
\newcommand{\tottrajec}{N}
\newcommand{\drift}{\bm b}
\newcommand{\volatility}{\bm g}
\newcommand{\truevolatility}{\volatility_{0}}
\newcommand{\brownianm}{\bm{W}_{\timet}}
\newcommand{\truedrift}{\drift_{0}}
\newcommand{\allparam}{\bm \theta}
\newcommand{\stateconditionalplain}{f}
\newcommand{\empconditional}[2]
{\hat{\stateconditionalplain}(#1;#2)}
\newcommand{\empjoint}[2]{\hat{\stateconditionalplain}(#1,#2)}
\newcommand{\empjointnoarg}{\hat{\stateconditionalplain}}
\newcommand{\predconditional}[2]
{\stateconditionalplain_{\allparam}(#1|#2)}
\newcommand{\predjoint}[2]{\stateconditionalplain_{\allparam}(#1,#2)}
\newcommand{\predjointnoarg}
{\stateconditionalplain_{\allparam}}
\newcommand{\marginaldenplain}{f}
\newcommand{\timemeasure}{h}
\newcommand{\emptimemeasure}{\hat{h}}
\newcommand{\barycentershort}{\marginaldenplain_{\text{bary}}}
\DeclareMathOperator{\MMD}{MMD}
\DeclareMathOperator{\MMDu}{MMD^2_{U,U}}
\DeclareMathOperator{\diag}{diag}
\newcommand{\resprange}{\R^d}
\newcommand{\timerange}{\mathcal{T}}
\newcommand{\totsimummd}{M}
\newcommand{\simuidx}{m}
\def\eqref#1{equation~\ref{#1}}
\def\1{\bm{1}}
\DeclareMathAlphabet{\mathsfit}{\encodingdefault}{\sfdefault}{m}{sl}
\SetMathAlphabet{\mathsfit}{bold}{\encodingdefault}{\sfdefault}{bx}{n}
\newcommand{\E}{\mathbb{E}}
\newcommand{\R}{\mathbb{R}}
\newcommand{\KL}{D_{\mathrm{KL}}}
\DeclareMathOperator*{\argmin}{arg\,min}
\newtheorem{proposition}{Proposition}[section]
\newtheorem{assumption}{Assumption}[section]
\newtheorem{definition}{Definition}[section]
\definecolor{SkyBlue}{RGB}{14, 118, 188}
\definecolor{BrightRed}{RGB}{223,82, 78}
\def\keywordname{{\bfseries \emph Keywords}}%
\def\keywords#1{\par\addvspace\medskipamount{\rightskip=0pt plus1cm
\def\and{\ifhmode\unskip\nobreak\fi\ $\cdot$
}\noindent\keywordname\enspace\ignorespaces#1\par}}
\def\R{\mathbb{R}}
\def\E{\mathbb{E}}
\title{Oh SnapMMD! Forecasting Stochastic Dynamics Beyond the Schrödinger Bridge’s End
}
\author{Renato Berlinghieri$^*$ \\ MIT \And Yunyi Shen$^*$ \\ MIT \And Jialong Jiang \\ The Rockefeller University \And Tamara Broderick \\ MIT
}
\begin{document}

\maketitle

\begin{abstract}
Scientists often want to make predictions beyond the observed time horizon of “snapshot” data following latent stochastic dynamics. For example, in time course single-cell mRNA profiling, scientists have access to cellular transcriptional state measurements (snapshots) from different biological replicates at different time points, but they cannot access the trajectory of any one cell because  measurement destroys the cell. Researchers want to forecast (e.g.)\ differentiation outcomes from early state measurements of stem cells. Recent Schrödinger-bridge (SB) methods are natural for interpolating between snapshots. But past SB papers have not addressed forecasting --- likely since existing methods either (1) reduce to following pre-set reference dynamics (chosen before seeing data) or (2) require the user to choose a fixed, state-independent volatility since they minimize a Kullback–Leibler divergence. Either case can lead to poor forecasting quality. In the present work, we propose a new framework, SnapMMD, that learns dynamics by directly fitting the joint distribution of both state measurements and observation time with a maximum mean discrepancy (MMD) loss. Unlike past work, our method allows us to infer unknown and state-dependent volatilities from the observed data. We show in a variety of real and synthetic experiments that our method delivers accurate forecasts. Moreover, our approach allows us to learn in the presence of incomplete state measurements and yields an $R^2$-style statistic that diagnoses fit. We also find that our method's performance at interpolation (and general velocity-field reconstruction) is at least as good as (and often better than) state-of-the-art in almost all of our experiments. 
\end{abstract}

\section{Introduction}
\label{sec:intro}
Many scientific modeling problems require forecasting stochastic dynamics from snapshot data. Here, snapshot data represents observations taken at different time points, without access to individual trajectories. And forecasting represents predicting future states beyond the observed times. For example, single-cell RNA sequencing (scRNA-seq) is widely used to study dynamic processes such as development, immune activation, and cancer progression. Each measurement in scRNA-seq destroys the cell, so scientists observe independent biological replicates at discrete times rather than a single replicate across multiple times. Despite the absence of individual cell trajectories, researchers often aim to forecast future cellular states; for instance, researchers are interested in forecasting differentiation outcomes of stem cells, immune cell activity after initial signal stimulation, or long-term cancer cell response to drugs with transcriptomic snapshots taken shortly after treatments. A common additional challenge is incomplete state measurement. For instance, although protein expression level mediates the dynamics of gene expression, protein level cannot be measured by scRNA-seq. 

Recent work has addressed the problem of interpolating between snapshots through Schrödinger bridge (SB) methods \citep{pavon2021data,de2021diffusion,Vargas2021,koshizuka2022neural,Wang2023} and their multi-marginal extensions \citep{shen2024multi,zhang2024joint,guan2024identifying,chen2024deep,Lavenant2021}. These methods reconstruct likely trajectories between snapshots at consecutive times. SB methods interpolate via regularized couplings, often chosen to be entropy-regularized. In settings where the goal is to fill in missing timepoints between observed snapshots, these techniques perform well. However, SB methods have not yet been systematically evaluated at forecasting, and we expect them to struggle for two reasons. (1) In the present work, we show that forecasting with many SB methods requires following reference dynamics defined before observing any data.
(2) All SB-based methods optimize a Kullback--Leibler divergence between data and a nominal diffusion model, and this optimization formulation requires a known, fixed volatility. In practice, volatility is often unknown or state-dependent, as in scRNA-seq, where noise arises from a mix of biological and technical sources. We see in our experiments that missing the state dependence or choosing an inappropriate (though standard) volatility value can lead to poor forecasts.
Finally, we note that SB methods often rely on iterative Sinkhorn solvers that offer limited interpretability and few tools for diagnosing model fit.

In this work, we propose a new framework, SnapMMD, that shifts the modeling focus from interpolation to accurate, interpretable forecasting. Our approach begins with the observation that in typical trajectory-inference settings \citep{Lavenant2021}, each sample can be viewed as an i.i.d.\ draw from the \emph{joint} distribution of a system's state (e.g., mRNA expression level) and the time of measurement. Then, we characterize a parametric family of stochastic differential equations (SDEs) and seek the member of this family whose implied joint distribution over state and time best matches the empirical joint distribution observed in the data. We perform this matching using maximum mean discrepancy (MMD), a kernel-based measure of distance between distributions that allows us to compare the model-predicted state-time distribution to the observed one without requiring access to individual trajectories or likelihoods. With a specific kernel choice, we show that this matching problem reduces to a distributional least-squares objective. Our formulation yields three key benefits: (1) accurate forecasts beyond the observed time horizon, (2) robust interpolation and model learning even with incomplete state measurements, and (3) interpretable model outputs, including an explicit velocity field and an $R^2$-like metric that quantifies model fit.

We evaluate SnapMMD across a range of synthetic and real-world systems, including a synthetic gene regulatory network and real time-course single-cell RNA sequencing datasets. Our primary focus is on forecasting, where SnapMMD consistently outperforms Schrödinger bridge baselines, providing more accurate one-step ahead predictions beyond the observed time window. We also assess SnapMMD on interpolation tasks --- inferring intermediate dynamics between observed snapshots. In these experiments, our method matches or exceeds the performance of both SB-based and flow-matching baselines in almost all cases. Crucially, SnapMMD handles incomplete state measurements with ease, reconstructs an interpretable velocity field that supports downstream scientific analysis, and provides an $R^2$-style metric to evaluate model fit. Code to reproduce the experiments is available at \href{https://github.com/YunyiShen/snapMMD}{https://github.com/YunyiShen/snapMMD}.

\section{Setup and Background}
\label{sec:setup}
Though our work has application beyond scRNA-seq, we next describe our data and goals using scRNA-seq terminology to clarify and concretize our notation.

\textbf{Data Setup.}  
We consider single-cell mRNA measurements collected at $\totsteps$ distinct time points, labeled $\timestep{1} < \timestep{2} < \cdots < \timestep{\totsteps}$. For convenience, we set $\timestep{1} = 0$. We do not require these time points to be equally spaced. At each time $\timestep{\timeidx}$, the observed data consist of $\tottrajec_{\timeidx}$ cells, each providing a single mRNA expression level measurement (representing a cell state) in $\mathbb{R}^d$, denoted $\obsat{\timestep{\timeidx}}^{\sampleidx}$. After a cell's mRNA level is measured, the cell is destroyed. So each cell appears exactly once in the dataset. We therefore collect $\tottrajec = \sum_{\timeidx=1}^{\totsteps} \tottrajec_{\timeidx}$ total observations across all time points. We write $\obsatall{\timestep{\timeidx}}$ for the full set of measurements taken at time $\timestep{\timeidx}$.

\textbf{Goal.} If a cell's mRNA expression level were not measured, the cell would remain alive, and its mRNA expression would evolve continuously over time along a (latent) trajectory. 
Formally, we denote the latent trajectory of the $\sampleidx$th cell observed at the $\timeidx$th time step by $\responseat{\timet}^{(\timeidx, \sampleidx)}$. The observed state of the cell at time $\timestep{\timeidx}$ is a single point on this trajectory $\obsat{\timestep{\timeidx}}^{\sampleidx} = \responseat{\timet = \timestep{\timeidx}}^{ (\timeidx, \sampleidx) }$. We assume these latent trajectories are independent realizations from an underlying latent distribution. 
Consequently, the observed snapshot measurements are also independent.
Our objective is
to infer a probabilistic model, chosen from a specified parametric family, that best captures the distribution of these unobserved trajectories. Our model should provide a distribution over forecasted trajectories beyond the measured time and also over interpolated trajectories between observed times.

\textbf{Dataset and independence assumptions.} By the assumptions above, the dataset $\{\obsat{\timestep{\timeidx}}^{\sampleidx}\}_{\timeidx,\sampleidx}$ represents independent observations. But cells measured at different times can be expected to reflect different state distributions. So, if we treat the observation times of data points as fixed and known, it's not reasonable to assume the observations are both independent and identically distributed (i.i.d.). However, a key insight is that we can treat the observation times 
themselves as random draws from an underlying distribution $\timemeasure(\timestep{})$. Then a dataset of pairs $\{(\obsat{\timestep{\timeidx}}^{\sampleidx}, \timestep{\timeidx})\}_{\timeidx,\sampleidx}$ can be treated as i.i.d.\ samples from a joint state-time distribution. While it follows that $\{\obsat{\timestep{\timeidx}}^{\sampleidx}\}_{\timeidx,\sampleidx}$  are now i.i.d.\ as well, using the pairs $\{(\obsat{\timestep{\timeidx}}^{\sampleidx}, \timestep{\timeidx})\}_{\timeidx,\sampleidx}$ as our data aligns better with real biological experiments. Instead of sequencing each sample immediately after collection, experimenters often tag each sample with a unique identifier that encodes the time it was collected. Then, all cells are sequenced together in a single batch, with the time information retrieved from the tags. 

\textbf{Model.} In addition to the time distribution $\timemeasure(\timestep{})$, we model each latent trajectory of the $(\timeidx, \sampleidx)$ cell with a stochastic differential equation (SDE) driven by a $d$-dimensional Brownian motion $\brownianm^{ (\timeidx,\sampleidx) }$, independent across particles:
\begin{equation}
    \de\responseat{\timet}^{ (\timeidx,\sampleidx) } = \truedrift(\responseat{\timet}^{ (\timeidx,\sampleidx) },t) \de\timet + \truevolatility(\responseat{\timet}^{ (\timeidx,\sampleidx) },\timet) \de\brownianm,~~\responseat{\timet = 0}^{ (\timeidx,\sampleidx) }\sim \marginal{0}.
    \label{eq:mainsde}
\end{equation}
We assume that the drift $\truedrift(\cdot,\cdot): \R^{d}\times [0,\timet_{\totsteps}] \to \R^{d}$ and initial marginal distribution $\marginal{0}$ are unknown. 
Previous Schrödinger bridge methods typically assume fixed, known volatility \citep[e.g.][]{de2021diffusion,Vargas2021,koshizuka2022neural,Wang2023,shen2024multi,zhang2024joint,guan2024identifying,chen2024deep}. By contrast, we allow the common case where the volatility function $\truevolatility$ can be unknown and also state- and time-dependent.
For example, in scRNA-seq, transcriptional noise varies with gene identity, cell state, and developmental stage --- and is further confounded by technical artifacts such as amplification bias and stochastic capture. Volatility in these systems reflects true biological uncertainty and is rarely known in advance.

Finally, we assume standard SDE regularity conditions. The first assumption below ensures a strong solution to the SDE exists; see \citet[][Chapter 3, Theorem 3.1]{pavliotis2016stochastic}. The second ensures that the process does not exhibit unbounded variability.
\begin{assumption}
\label{assumption-lipschitz}
The drifts and volatility are $L$ and $L'$-Lipschitz respectively; i.e., for all $t\in [0,\timet_{\totsteps}]$, $\bm{x_1}, \bm{x_2} \in \R^d$, $\|\truedrift(\bm{x_1}, t)-\truedrift(\bm{x_2}, t)\| \le L \| \bm{x_1}-\bm{x_2} \|$ and $|\truevolatility(\bm{x_1}, t)-\truevolatility(\bm{x_2}, t)|\le L' \| \bm{x_1}-\bm{x_2} \|$, where $\|\cdot \|$ denotes the usual $L^2$ norm of a vector. And we have at most linear growth; i.e., there exist $K, K'<\infty$ and constant $c$ such that $\| \truedrift(\bm{x_1}, t)\| <K \| \bm{x_1} \|+c$ and $\| \truevolatility(\bm{x_1}, t)\| <K' \| \bm{x_1} \|+c'$. 
\end{assumption} 
\begin{assumption}
\label{assumption-bddsecondmoments}
At each time step $\timestep{\timeidx}$, the distribution
of the $\tottrajec_{\timeidx}$ particles has bounded second moments. Moreover, the initial distribution $\marginal{0}$ also has bounded second moments.
\end{assumption}

\textbf{Forecasting with Schrödinger bridges often reduces to the pre-set reference.} In principle, SB methods reconstruct distributions of trajectories by solving a constrained optimization problem that matches observed marginals using a Kullback–Leibler divergence to a predefined reference process, typically Brownian motion; see \cref{eq:sb-problem} in \cref{app:sb-forecasting} for a detailed formulation. However, as we formalize in \cref{prop:sb-forecasting}, forecasting with many SB methods reduces to propagating the final observed marginal distribution forward according to the reference dynamics. As a result, SB forecasts inherently depend only on the reference SDE and the last observed snapshot. This lack of non-trivial data dependence limits flexibility and predictive power.

\section{Our method}
\label{sec:method}

To address the forecasting limitations of SBs described above, we propose an alternative approach.
SBs compare the observed data and candidate model directly via Kullback–Leibler divergence.
We instead formulate an optimization problem that directly matches the joint distribution of state–time pairs predicted by a candidate model to the empirical distribution observed in the data. Below, we describe precisely how we frame and solve this optimization problem using MMD, introduce an interpretable diagnostic metric for assessing model fit, and detail how this framework naturally extends to scenarios involving incomplete state measurements.

\subsection{A least squares approach}
We formalize our approach by (1) decomposing the joint state--time empirical distribution into marginal and conditional components and (2) detailing how the resulting matching problem  reduces to a least-squares formulation when using Maximum Mean Discrepancy (MMD) for distance.

\textbf{Empirical distributions.}
We let $\empconditional{\statevar}{\timestep{}}$ denote the empirical measure over state $\statevar$ at any observed time $\timestep{\timeidx}$. For unobserved times, we give it a placeholder distribution. Likewise, we let $\emptimemeasure(\timestep{})$ denote the (marginal) empirical measure over observed times. Precisely, we have
\begin{equation}
    \label{eq:empirical_dists}
\empconditional{\statevar}{\timestep{}} = \left\{ \begin{array}{ll} (\tottrajec_{\timeidx})^{-1} \sum_{\sampleidx=1}^{\tottrajec_{\timeidx}} \delta_{\obsat{\timet_{\timeidx}}^{\sampleidx}}(\statevar) & \timestep{} = \timestep{\timeidx} \\
    \delta_{\bm{0}}(\statevar) & \textrm{else}
    \end{array}
    \right.,
    \quad \textrm{ and } \quad
        \emptimemeasure(\timestep{}) = \sum_{\timeidx=1}^{\totsteps} \left(\frac{\tottrajec_{\timeidx}}{\sum_{j=1}^{\totsteps} \tottrajec_{j}}\right) \delta_{\timestep{\timeidx}}(\timestep{}). 
\end{equation}
Let $\empjoint{\statevar}{\timestep{}}$ denote the empirical joint distribution over observed state-time pairs. This joint decomposes into the empirical marginal and conditional described above: $\empjoint{\statevar}{\timestep{}} = \emptimemeasure(\timestep{}) \empconditional{\statevar}{\timestep{}}$.

\textbf{Directly matching the joint state--time distributions.} We plan to estimate the parameters \(\allparam\) of a candidate SDE model by aligning its predicted joint distribution over state--time pairs with the empirical distribution. 
Let $\predconditional{\statevar}{\timestep{}}$ denote the predicted state distribution at time $t$. We will minimize a discrepancy between (1) the empirical joint $\empjoint{\statevar}{\timestep{}}$ and (2) the joint implied by the predictive conditional $\predconditional{\statevar}{\timestep{}}$ together with the empirical marginal $\emptimemeasure(\timestep{})$: namely, $\predjoint{\statevar}{\timestep{}} := \emptimemeasure(\timestep{}) \predconditional{\statevar}{\timestep{}}$.

For the discrepancy, we choose the MMD \citep{gretton2012kernel}. MMD computes the squared distance between the kernel mean embeddings of two distributions in a reproducing kernel Hilbert space (RKHS). 
We choose a kernel that factors across state and time; this choice lets us break the joint MMD into a weighted sum of marginal MMDs at each time point. The resulting optimization objective is reminiscent of least-squares regression, with time acting as a discrete index. We formalize this idea in the following result.
\begin{proposition}
\label{prop:MMDdecomposition}
Let \(f(\statevar,t) = f(\statevar\mid t) h(t)\) and \(g(\statevar,t) = g(\statevar\mid t) h(t)\)
be joint distributions over \(\statevar\in \resprange\) (for dimension $d$) and discrete time \(t \in \timerange\), where \(h(t)\) is a probability mass function and $f(\statevar\mid t),  g(\statevar\mid t)$ are conditional distributions. Use the kernel \(K((\statevar,t), (\statevar',t')) = K_{\statevar}(\statevar,\statevar') \delta(t - t')\), where \(K_{\statevar}\) is positive definite on the state space and, for all $t\in \timerange$, $\E_{\statevar\sim f(\statevar\mid t), \statevar'\sim f(\statevar\mid t)}K_{\statevar}(\statevar,\statevar')<\infty$, $\E_{\statevar\sim f(\statevar\mid t), \statevar'\sim g(\statevar\mid t)}K_{\statevar}(\statevar,\statevar')<\infty$, and $\E_{\statevar\sim g(\statevar\mid t), \statevar'\sim g(\statevar\mid t)}K_{\statevar}(\statevar,\statevar')<\infty$.\footnote{Many practical kernels satisfy these assumptions, e.g.,  radial basis function, Mat\'ern, and Laplace.} Then:
\[
\MMD_K^2(f, g) = \sum_{t \in \timerange} h^2(t) \, \MMD_{K_{\statevar}}^2\left(f(\cdot\mid t), g(\cdot\mid t)\right).
\]
\end{proposition}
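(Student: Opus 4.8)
The plan is to start from the standard expectation (U-statistic population) representation of the squared MMD and exploit the product structure of the kernel. Recall that for a positive-definite kernel $K$ on the product space $\resprange\times\timerange$,
\[
\MMD_K^2(f,g) = \E_{(\statevar,t),(\statevar',t')\sim f}K - 2\,\E_{(\statevar,t)\sim f,\,(\statevar',t')\sim g}K + \E_{(\statevar,t),(\statevar',t')\sim g}K,
\]
where in each term the two pairs are drawn independently. First I would check that $K((\statevar,t),(\statevar',t')) = K_{\statevar}(\statevar,\statevar')\,\delta(t-t')$ is genuinely positive definite on $\resprange\times\timerange$: on the discrete time set the Kronecker delta is the Gram matrix of the orthonormal one-hot feature map, hence positive definite, and a pointwise product of positive-definite kernels is positive definite, so $\MMD_K^2$ is well defined. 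The three expectations above are finite because, by the factorization $f(\statevar,t)=f(\statevar\mid t)h(t)$ below and the discreteness (finiteness, in our setting) of $\timerange$, each reduces to a finite sum of the assumed finite quantities $\E_{\statevar,\statevar'}K_{\statevar}$.

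Next I would substitute $f(\statevar,t)=f(\statevar\mid t)h(t)$, $g(\statevar,t)=g(\statevar\mid t)h(t)$, and the factored kernel into the first term, writing the expectation as an iterated sum over times and integral over states:
\[
\E_{(\statevar,t),(\statevar',t')\sim f}K = \sum_{t\in\timerange}\sum_{t'\in\timerange} h(t)h(t')\,\delta(t-t')\,\E_{\statevar\sim f(\cdot\mid t),\,\statevar'\sim f(\cdot\mid t')}K_{\statevar}(\statevar,\statevar').
\]
Since $t$ is discrete, $\delta(t-t')$ is the Kronecker delta, so only the diagonal $t=t'$ survives and the double sum collapses to $\sum_{t\in\timerange} h^2(t)\,\E_{\statevar,\statevar'\sim f(\cdot\mid t)}K_{\statevar}$; the interchange of the time-sum with the state-integrals is licensed by Tonelli/Fubini using the stated finiteness of the kernel expectations. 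The identical manipulation on the cross term and on the third term yields $\sum_{t}h^2(t)\,\E_{\statevar\sim f(\cdot\mid t),\,\statevar'\sim g(\cdot\mid t)}K_{\statevar}$ and $\sum_{t}h^2(t)\,\E_{\statevar,\statevar'\sim g(\cdot\mid t)}K_{\statevar}$.

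Finally I would collect the three sums over the common index $t$, factor out $h^2(t)$, and recognize the bracketed quantity at each fixed $t$ as exactly the state-space squared MMD,
\[
\E_{\statevar,\statevar'\sim f(\cdot\mid t)}K_{\statevar} - 2\,\E_{\statevar\sim f(\cdot\mid t),\,\statevar'\sim g(\cdot\mid t)}K_{\statevar} + \E_{\statevar,\statevar'\sim g(\cdot\mid t)}K_{\statevar} = \MMD_{K_{\statevar}}^2\!\left(f(\cdot\mid t),g(\cdot\mid t)\right),
\]
which gives the claimed identity. The only genuinely delicate points are (i) reading $\delta(t-t')$ as a Kronecker delta on the discrete time set, so that no measure-theoretic subtleties with Dirac masses arise and the "diagonal collapse" is just a change in a sum, and (ii) justifying the swap of the time-sum with the state-integrals, which is precisely where the three finiteness hypotheses on $K_{\statevar}$ enter; the rest is bookkeeping.
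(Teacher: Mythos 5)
Your proposal is correct and follows essentially the same route as the paper's proof: expand $\MMD_K^2$ into its three expectation terms, substitute the factored kernel and the conditional-times-marginal factorization, collapse the double time-sum via the Kronecker delta, and recombine the three diagonal sums into the per-time squared MMD. The extra remarks on positive definiteness of the product kernel and the Fubini justification are sound additions but do not change the argument.
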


This result (proof in \cref{app:proof-main-prop}) shows that aligning joint distributions here boils down to matching conditional state distributions across time. 
We apply \cref{prop:MMDdecomposition} with the two joint distributions from above, $\predjoint{y}{\timestep{}}$ and $\empjoint{y}{\timestep{}}$, to obtain the following optimization objective:
\begin{align}
   \MMD_K^2(\predjointnoarg, \empjointnoarg)
=\sum_{\timeidx=1}^{\totsteps} \left(\frac{\tottrajec_{\timeidx}}{\sum_{j=1}^{\totsteps}\tottrajec_{j}}\right)^2\,\MMD_{K_{\statevar}}^2 ( \predconditional{\cdot}{\timestep{\timeidx}}, \empconditional{\cdot}{\timestep{\timeidx}} ).
\end{align}
It remains to estimate the righthand MMDs and also to choose the MMD state-space kernel, the model $\predconditional{\cdot}{\timestep{}}$, and the optimization algorithm.

\textbf{Estimating MMD at each time point.} In practice, we approximate $\MMD_{K_{\statevar}}^2 ( \predconditional{\cdot}{\timestep{}}, \empconditional{\cdot}{\timestep{}} )$ using the MMD's U-statistic estimator (Lemma 6, \citep{gretton2012kernel}). To that end, we simulate $\totsimummd$ trajectories from the candidate model. For the $\simuidx$th trajectory, we record the state snapshot $\altresponseat{\timestep{\timeidx}}^{\simuidx}$ at time $\timestep{\timeidx}$. 
The U-statistic estimator, which is unbiased and consistent \citep{hall2004generalized}, is then given by
\[
\begin{aligned}
\MMDu( \predconditional{\cdot}{\timestep{\timeidx}}, \empconditional{\cdot}{\timestep{\timeidx}} )
=&\,\frac{1}{\tottrajec_{\timeidx}(\tottrajec_{\timeidx}-1)}\sum_{\sampleidx\ne \sampleidx'} K_{\statevar}({\obsat{\timestep{\timeidx}}^{\sampleidx}},{\obsat{\timestep{\timeidx}}^{\sampleidx'}}) -\frac{2}{\tottrajec_{\timeidx}\totsimummd}\sum_{\sampleidx,\simuidx}K_{\statevar}({\obsat{\timestep{\timeidx}}^{\sampleidx}},{\altresponseat{\timestep{\timeidx}}^{\simuidx}}) \\
&\quad +\frac{1}{\totsimummd(\totsimummd-1)}\sum_{\simuidx\ne \simuidx'}K_{\statevar}({\altresponseat{\timestep{\timeidx}}^{\simuidx}},{\altresponseat{\timestep{\timeidx}}^{\simuidx'}}).
\end{aligned}
\]

The overall optimization problem for parameter fitting then becomes:
\begin{align}
    \label{eq:leastsquare}
\hat{\allparam} = \argmin_{\allparam}\; \sum_{\timeidx=1}^{\totsteps} w_{\timeidx} \MMDu( \predconditional{\cdot}{\timestep{\timeidx}}, \empconditional{\cdot}{\timestep{\timeidx}} )
\quad
\textrm{ with weights }
    \quad
    w_{\timeidx} := \left(\frac{\tottrajec_{\timeidx}}{\sum_{j=1}^{\totsteps}\tottrajec_{j}}\right)^2.
\end{align}
The least squares objective from \cref{eq:leastsquare} naturally extends classical regression to distributional settings by measuring discrepancy in the RKHS defined by kernel mean embeddings. Specifically, this least squares framework reduces to classical Euclidean regression if each predicted distribution is a Dirac measure and the kernel is linear. 

\textbf{Choosing kernel, optimizer, and SDE model.}
In practice, we use the radial basis function (RBF) kernel for the state space; we determine the length scale by the median heuristic \citep{garreau2017large} applied to pairwise distances in the data. Additionally, we scale time to lie in  $[0,1]$. For optimization, we compute  gradients with respect to the parameters using the stochastic adjoint method \citep{li2020scalable}. We use the Adam optimizer to perform the parameter updates. 

\begin{table}[t]
\centering
\caption{MMD for forecast and interpolation tasks. For forecast, we report mean and standard deviation over 10 random seeds. For interpolation, we aggregate over 10 seeds and validation time points. The best method (lowest MMD) is shown in bold in a green cell; we also highlight in green any other methods whose mean is contained in the one-standard deviation interval for the best method. We show the top three interpolation methods here; full results can be found in \protect\cref{app:complete-summary-interpolation}.}
\small
\begin{tabular}{lcccccc}
\toprule
      & \multicolumn{3}{c}{\textbf{Forecast}} & \multicolumn{3}{c}{\textbf{Interpolation}} \\ 
\cmidrule(lr){2-4}\cmidrule(lr){5-7}
\textbf{Task}
        & \textbf{Ours} & \texttt{SBIRR-ref} & \texttt{SB-forward}
        & \textbf{Ours} & \texttt{SBIRR} & \texttt{DMSB} \\ \midrule
LV
      & \best{\ms{0.01}{0.01}} & \ms{0.14}{0.02} & \ms{0.71}{0.50}
      & \best{\ms{0.02}{0.01}} & \ms{0.04}{0.03} & \ms{1.15}{0.40} \\ \midrule
ReprParam
      & \best{\ms{0.02}{0.02}} & \ms{0.47}{0.30} & \ms{0.42}{0.20}
      & \best{\ms{0.04}{0.04}} & \ms{0.16}{0.10} & \ms{1.57}{0.40} \\
ReprSemiparam
      & \best{\ms{0.08}{0.03}} & \ms{0.29}{0.11} & \ms{1.15}{0.30}
      & \best{\ms{0.31}{0.40}} & \greencell{\ms{0.48}{0.30}} & \ms{1.57}{0.40} \\
ReprProtein
      & \best{\ms{0.01}{0.01}} & \ms{1.26}{0.06} & \ms{1.22}{0.09}
      & \best{\ms{0.02}{0.02}} & \ms{0.34}{0.30} & \ms{1.28}{0.40} \\ \midrule
GoM
      & \ms{0.66}{0.03} & \best{\ms{0.35}{0.03}} & \ms{0.62}{0.05}
      & \ms{0.29}{0.20} & \best{\ms{0.07}{0.06}} & \ms{0.15}{0.09} \\ \midrule
PBMC
      & \best{\ms{0.00}{0.00}} & \ms{0.10}{0.06} & \ms{0.51}{0.20}
      & \best{\ms{0.01}{0.01}} & \greencell{\ms{0.01}{0.01}} & \ms{0.56}{0.10} \\
\bottomrule
\end{tabular}
\label{tab:mmd_main_text}
\end{table}

We pick the candidate conditional $\predconditional{\cdot}{\timestep{}}$ based on domain knowledge about the underlying process. This SDE can either follow a parametric form, in which drift and volatility are governed by functions with finitely many parameters (as in the Lotka--Volterra experiment; see \cref{sec:lv-main}), or a more flexible design that incorporates neural network architectures for the drift or volatility terms. For example, in the repressilator experiment (\cref{sec:repr-main}), we compare a purely parametric model to a semiparametric approach in which we used a multilayer perceptron to approximate part of the drift in the system; see \cref{eq:mlpactive} for more details. This neural network component allows the model to capture complex, nonlinear effects that would be difficult to represent in a simple parametric setting.

\textbf{Handling incomplete state measurements.} In many practical applications --- such as our mRNA sequencing example --- it is common for only a subset of relevant state variables to be observed. For instance, while mRNA concentrations are routinely measured, corresponding protein levels (which are also important for modeling the underlying dynamical system) are often unavailable. Our framework can handle these missing-data settings because it relies on matching the joint distribution of time and the observed dimensions, rather than requiring all dimensions to be measured. More precisely, since our loss (\cref{eq:leastsquare}) is defined over the observed state variables (together with time), the model is trained to match the marginal distribution of the observed variables along with time, without making any additional assumptions or imputations for the missing dimensions. We illustrate with an experiment in \cref{sec:experiments} (\cref{fig:repr-main}, lower row).

\subsection{Evaluating model fit with an $R^2$-style metric}
\label{sec:r2-main}

In traditional regression, the standard \(R^2\) metric offers a  straightforward diagnostic and basis for model comparison; we propose a similar metric, but for distributional data.
In particular, recall that the \(R^2\) metric quantifies how well a model explains the data by comparing residual variability against total variability around a baseline constant prediction, namely the mean of the data responses. Given the least-squares objective of \cref{eq:leastsquare}, we might consider a similar metric in the present distributional case. But first we need to choose an appropriate baseline.

\textbf{A distributional baseline.}
Analogous to the response-mean baseline in traditional regression, we want to find the best constant (time-independent) model within our RKHS-based least squares framework. The barycenter of distributional data is the distribution minimizing the sum of squared RKHS distances to all observed distributions. For distributions embedded in an RKHS, \citet{cohen2020estimating} showed that the barycenter is  the weighted mixture of the empirical distributions:
\begin{equation}
    \label{eq:barycenter}
    \barycentershort(\statevar) := \argmin_{\marginaldenplain} \sum_{\timeidx=1}^{\totsteps} w_{\timeidx}\, \MMD_{K_{\statevar}}^2 ( f(\statevar\mid \timestep{\timeidx}), \empconditional{\statevar}{\timestep{\timeidx}} ) = \frac{1}{\sum_{\timeidx=1}^{\totsteps} w_{\timeidx}\tottrajec_{\timeidx}}\sum_{\timeidx=1}^{\totsteps}\sum_{\sampleidx=1}^{\tottrajec_{\timeidx}}w_{\timeidx}\delta_{\obsat{\timet_{\timeidx}}^{\sampleidx}}(\statevar),
\end{equation}
with weights \(w_{\timeidx}\) as in \cref{eq:leastsquare}.

\textbf{Our RKHS-based \(R^2\) metric.}
With this baseline in hand, we define our metric.

\begin{definition}[RKHS-based \(R^2\) metric]
\label{def:R2_metric}

Let $\predconditional{\cdot}{\timestep{}}$ denote the model-predicted state distribution at time $\predconditional{\cdot}{\timestep{}}$. Let $ \empconditional{\cdot}{\timestep{\timeidx}}$ be defined as in \cref{eq:empirical_dists}, 
\(\barycentershort\) as in \cref{eq:barycenter}, and \(w_{\timeidx}\) as in \cref{eq:leastsquare}.
The RKHS-based coefficient of determination \(R^2\) is
\begin{equation}
    \label{eq:R2_metric}
    R^2 = 1 - \frac{\sum_{\timeidx=1}^{\totsteps} w_{\timeidx}\,\MMD_{K_{\statevar}}^2(\predconditional{\cdot}{\timestep{\timeidx}}, \empconditional{\cdot}{\timestep{\timeidx}})}{\sum_{\timeidx=1}^{\totsteps} w_{\timeidx}\,\MMD_{K_{\statevar}}^2(\barycentershort, \empconditional{\cdot}{\timestep{\timeidx}})}.
\end{equation}
\end{definition}

The numerator captures how well the candidate model explains the observed data by measuring the discrepancy between predicted and empirical distributions at each time. The denominator quantifies the total variability around the barycenter, akin to total variance in standard regression. Thus, \(R^2\) represents the fraction of total variability explained by the candidate model. 

The \(R^2\) metric defined in \cref{eq:R2_metric} naturally provides a standardized and interpretable criterion to compare candidate SDE models, guiding model selection and diagnosing fit quality. Values of \(R^2\) close to 1 indicate a high-quality fit, while values near or below 0 signal poor model performance relative to the simple barycenter model. Finally, note that \(R^2\) is always upper bounded by 1 due to the non-negativity of the MMD, and it may become negative if the candidate model performs worse than the barycenter, analogous to regression models without an intercept. We discuss how we use this metric in our experiments in \cref{app:use-r2-experiments}.

\section{Experiments}
\label{sec:experiments}
\begin{figure}[t!]
    \centering
    \includegraphics[width=\linewidth]{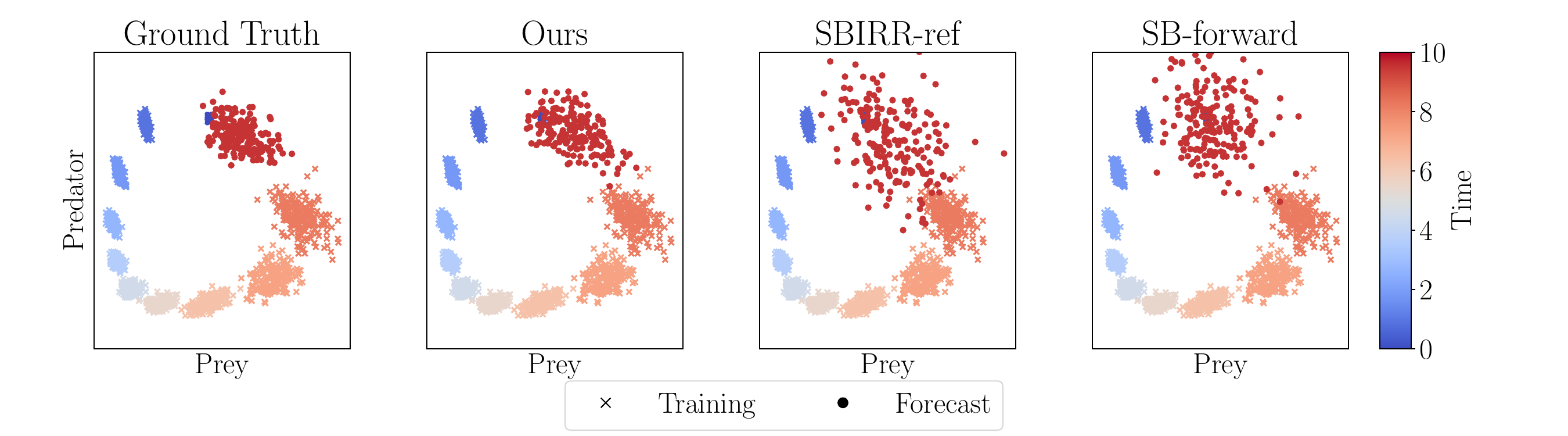}
    \caption{Lotka-Volterra results (\protect\cref{sec:lv-main}). We show 200 samples at each of 10 training times and 1 forecast time (red). Forecast points overlap with the training points at time 0 (blue).}
    \label{fig:LV}
\end{figure}

In synthetic and real-data experiments,
we find that our SnapMMD method consistently provides better forecasts than competitors. We also find that, in almost all experiments, SnapMMD provides better or matching interpolation performance relative to competitors. In \cref{app:vector-field-reconstruction}, we further demonstrate that our method outperforms competitors on vector field reconstruction.

\textbf{Beyond cell states.} Though we use cell-state terminology above for concreteness, our experiments also include applications beyond cellular dynamics. E.g., states can instead represent predator and prey counts (\cref{sec:lv-main}) or spatial locations of particles following ocean currents (\cref{sec:gom-main}).

\textbf{Metrics of success.} To evaluate forecasting performance, we reserve a validation snapshot at a future time point beyond the training horizon.
For interpolation, we retain intermediate validation snapshots between training time points.
In both forecasting and interpolation tasks, we measure discrepancy between the validation data and predictions with two metrics: (1) the MMD and (2) the earth mover's distance (EMD)\footnote{For EMD, we use the implementation provided by \citet{trajnet}.} between the forecast distribution and the held-out empirical distribution at the validation time. We also provide visual comparisons.

\textbf{Forecasting (and vector field reconstruction) baselines.} We compare SnapMMD against two baselines: (1) Schrödinger bridge with iterative reference refinement (\texttt{SBIRR}) \citep{shen2024multi, zhang2024joint, guan2024identifying}, and (2) multimarginal Schrödinger bridge with shared forward drift (\texttt{SB-forward}) \citep{shen2024learning}. 
While these methods were designed and tested for interpolation (\texttt{SBIRR}) and vector field reconstruction (\texttt{SB-forward}), we can adapt them for forecasting. To forecast beyond the observed time horizon, we use (1) the best fitted reference of \texttt{SBIRR} (henceforth \texttt{SBIRR-ref}), and (2) the best-fitted forward drift of \texttt{SB-forward}.
Reference fitting is also used by \citet{zhang2024joint} and \citet{guan2024identifying}, but they focused exclusively on linear models, while \texttt{SBIRR-ref} allows general model families.
\texttt{SBIRR-ref} and \texttt{SB-forward} require a fixed, known volatility; we set it to 0.1 as in prior work \citep{Vargas2021, wang2021deep, shen2024multi}.

\begin{figure}[t!]
    \centering
    \includegraphics[width=\linewidth]{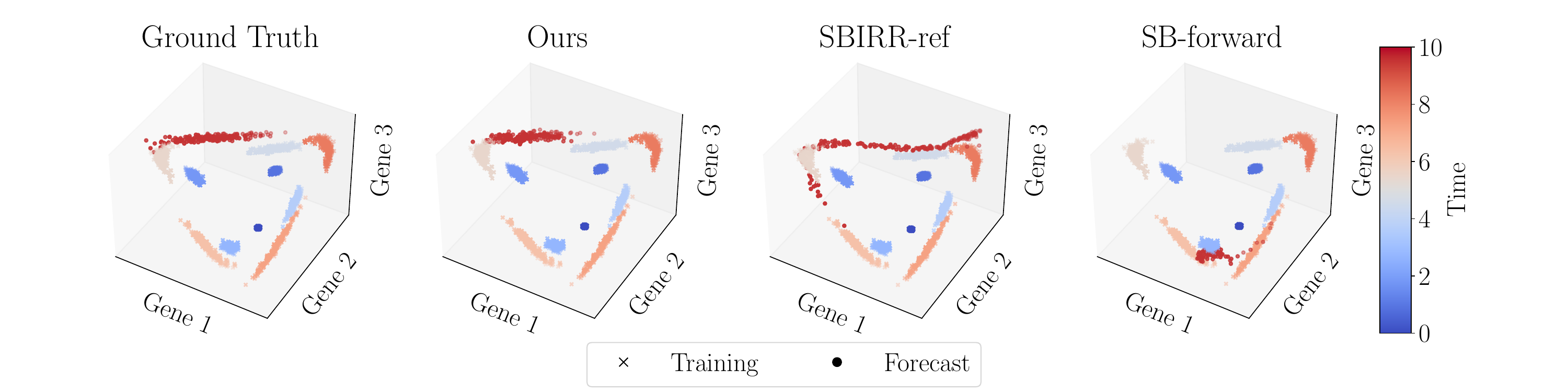}
    \includegraphics[width=\linewidth]{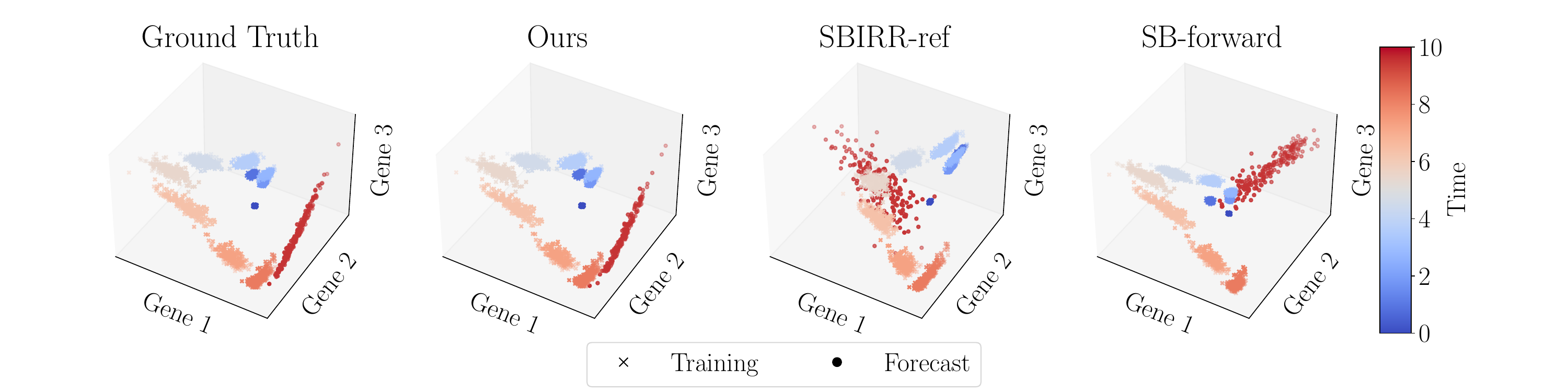}
    \protect\caption{Repressilator results: mRNA-only (upper, \protect\cref{sec:repr-main}) and mRNA and protein (lower, \protect\cref{sec:repr-protein}). We show 200 samples at each of 10 training times and 1 forecast time (red).}
    \label{fig:repr-main}
\end{figure}

\textbf{Interpolation baselines.} For interpolation, we compare SnapMMD against five methods: 
optimal transport-conditional flow matching (\texttt{OT-CFM}) and Schrödinger bridge-conditional flow matching (\texttt{SB-CFM}) \citep{tong2023improving}, simulation-free Schrödinger bridge (\texttt{SF2M}) \citep{tong2024simulation}, deep momentum multimarginal Schrödinger bridge (\texttt{DMSB}) \citep{chen2024deep}, and again to the Schrödinger bridge with iterative reference refinement (\texttt{SBIRR}) \citep{shen2024multi}, where now we use the output of the main algorithm (rather than the learned reference as in \texttt{SBIRR-ref}).
For each method, we use default code settings. We describe additional related work in \cref{app:related-work}.

\subsection{Lotka--Volterra system}
\label{sec:lv-main}

\textbf{Setup.} We simulated data from a two-dimensional Lotka–Volterra predator–prey system, where each coordinate’s volatility scales proportionally with its state variable. E.g., we set the volatility for the prey population $X$ to be $\sigma X$, with the same constant $\sigma$ across predator and prey. We train on 10 time points, each with 200 samples. For methods that take a model choice (ours and \texttt{SBIRR} variants), we use a parametric Lotka--Volterra model. See \cref{app:lv-app} for full details.

\textbf{Results.} In \cref{fig:LV}, we see that our method's forecast (red dots) is closer to ground truth than the baselines are. MMD (LV, Forecast in \cref{tab:mmd_main_text}) and EMD (LV, Forecast in \cref{tab:emd_combined}) agree that our method performs best. Our method is also best in the interpolation task (LV, Interpolation in \cref{tab:mmd_main_text}). See \cref{app:lv-app} for further results.

\subsection{Repressilator: mRNA only}
\label{sec:repr-main}
\textbf{Setup.} We simulated mRNA concentration data from a repressilator system, a biological clock composed of three genes that inhibit each other in a cyclic manner. As for Lotka--Volterra, we let each coordinate’s volatility scale proportionally with its state variable. We train on 10 time points, each with 200 samples. For methods that take a model choice, we consider two options. (1) We use the same parametric model as the data-generating process; see \cref{app:repr-app-parametric} for full results. (2) We use a semiparametric model with a multilayer perceptron; see \cref{app:repr_implementation-semiparametric} for details.

\textbf{Results.}
In \cref{fig:repr-main} (upper row), we see that, when using the semiparametric model, our method's forecast (red dots) is closer to ground truth than the baselines are. MMD (ReprSemiparam, Forecast in \cref{tab:mmd_main_text}) and EMD (ReprSemiparam, Forecast in \cref{tab:emd_combined}) agree that our method performs best. Our method is also tied for best in the interpolation task (ReprSemiparam, Interpolation in \cref{tab:mmd_main_text}). We find similar results when using the parametric model (\cref{tab:mmd_main_text,tab:emd_combined}, \cref{fig:repr-parametric-forecast}). See \cref{app:repr-app-parametric} and \cref{app:repr-app-semiparametric} for further results. 

\subsection{Repressilator: mRNA and protein}
\label{sec:repr-protein}
\textbf{Setup.}
A more-complete biochemical model of the repressilator includes both mRNA and protein (\cref{eq:repr-family-protein}) even though only mRNA concentration is actually observed in practice. We next generate simulated data using the more-complete model and keep only the mRNA concentrations in our observations. We again train on 10 time points, each with 200 samples. Since our method has the capacity to handle incomplete state observations, we can use the full mRNA-protein model with our method.
Since \texttt{SBIRR} methods do not have the capacity to handle models with latent variables, we use the mRNA-only model in these methods. 

\begin{figure}[!t]
    \centering
\includegraphics[width=\linewidth]{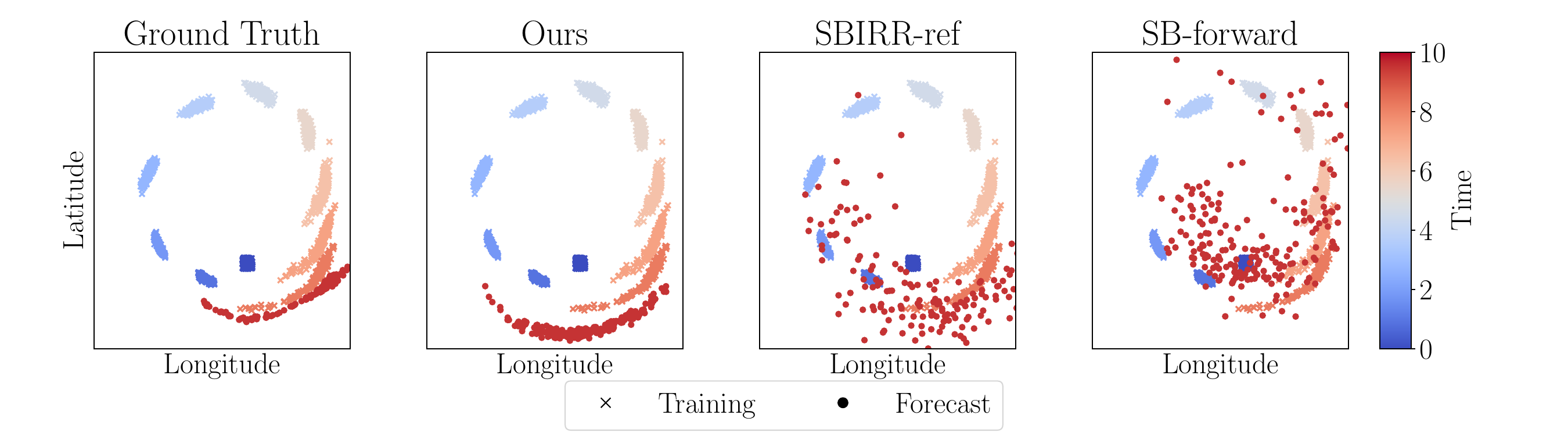}
    \caption{Gulf of Mexico results (\protect\cref{sec:gom-main}). We show 200 samples at each of 10 training times and 1 forecast time (red). }
    \label{fig:GoMforecast}
\end{figure}

\textbf{Results.} In \cref{fig:repr-main} (lower row), we see that our method's forecast (red dots) is closer to ground truth than the baselines are. MMD (ReprProtein, Forecast in \cref{tab:mmd_main_text}) and EMD (ReprProtein, Forecast in \cref{tab:emd_combined}) agree that our method performs best. We emphasize that none of the methods directly observe protein levels. But since our method is aware that protein levels are also driving the underlying dynamics, it is able to better forecast mRNA concentration. Our method is also best in the interpolation task (ReprProtein, Interpolation in \cref{tab:mmd_main_text}). See \cref{app:repr-app-missing} for further results.

\subsection{Ocean currents in the Gulf of Mexico}
\label{sec:gom-main}
\textbf{Setup.}
We use real ocean-current data from the Gulf of Mexico: namely, high-resolution (1 km) bathymetry data from the HYbrid Coordinate Ocean Model (HYCOM) reanalysis.\footnote{Dataset available at \url{https://www.hycom.org/data/gomb0pt01/gom-reanalysis}.} 
We extract a velocity field centered on a region that appears to exhibit a vortex. We then simulate the motion of particles --- representing buoys or ocean debris --- evolving under this field. The training data consist of 10 time points with 400 particles each. Since the data is real in this experiment, the models used by any method must be misspecified. For methods that take a model choice, we use a physically motivated model for the vortex, where the velocity field is the sum of a Lamb-Oseen vortex and a constant divergence field. The first term accounts for swirling, rotational dynamics typical of a vortex in low viscosity fluid like water, while the divergence field accounts for vertical motion or non-conservative forces that may cause a net expansion or contraction of the flow. See \cref{app:gom_implementation} for more details.

\textbf{Results.} In \cref{fig:GoMforecast}, we see that our method's forecast (red dots) more closely aligns with ground truth than the baselines do. EMD (GoM, Forecast in \cref{tab:emd_combined}) agrees that our method performs best. However, MMD (GoM, Forecast in \cref{tab:mmd_main_text}) prefers \texttt{SBIRR-ref} to our method (SnapMMD); 
we suspect we see this behavior because MMD using an RBF kernel can prefer a diffuse, but less accurate, cloud over a concentrated, geometrically correct one.
Recall that the MMD with RBF mixes two ingredients: (i) how tightly the forecast particles cluster among themselves and (ii) how far the forecast and ground truth particles are. Our forecast points sit on a lower-dimensional curve than the baselines' points, so the ``self-similarity'' part of the MMD score is higher. 
All methods perform well visually at the interpolation task (\cref{fig:GoM-interpol}), but \texttt{SBIRR} yields the best MMD (GoM, Interpolation in \cref{tab:mmd_main_text}) and EMD (GoM, Interpolation in \cref{tab:emd_combined}). \texttt{SBIRR} is built to interpolate every observed snapshot and then smooth between them, so with densely sampled times, it almost inevitably lands near the held-out validation points. Our method's aim to recover a smooth velocity field (rather than enforce exact interpolation) can be an advantage for forecasting, but less so for interpolation. See \cref{app:gom-interpol} for further results.

\subsection{T cell-mediated immune activation}
\label{sec:pbmc}

\textbf{Setup.}
We use a real single-cell RNA-sequencing dataset that tracks T cell–mediated immune activation in peripheral blood mononuclear cells (PBMCs) \citep{jiang2024d}. Scientists recorded gene-expression profiles every 30 minutes for 30 hours. 
We use the 41 snapshots collected between 0 h and 20 h --- prior to the onset of steady state; we take 20 alternating snapshots (at integer hours) for training and the remaining 21 for validation.
We use the 30-dimensional projection (``gene program'') of the original measurements released by \citet{jiang2024d} as our data. In \cref{fig:pbmc-forecasting} (leftmost four), we show the training snapshots at four time steps. \cref{fig:pbmc-training-progression} shows the full progression of the training points over  20 time steps. Since the data is real in this experiment, the model used by any method must be misspecified. For methods that use a model, we use the same model as in the semiparametric repressilator experiment (\cref{app:repr-setup-semiparametric}). Full details are in \cref{app:pbmc-biology}.

\textbf{Results.}
In \cref{fig:pbmc-forecasting} (rightmost four), we see that our method's forecast is closer to ground truth than the baselines are. MMD (PBMC, Forecast in \cref{tab:mmd_main_text}) agrees. We do not use EMD in this experiment as it suffers from the curse of dimensionality and is thus unreliable in our 30-dimensional setting; see the discussion at the end of Section 2.5.2 in \citet{chewi2024statistical}. Our method ties with \texttt{SBIRR} in the interpolation task (PMBC, Interpolation in \cref{tab:mmd_main_text}). See \cref{app:pbmc-interpol} for more results.

\begin{figure}[!t] \centering \includegraphics[width=\linewidth]{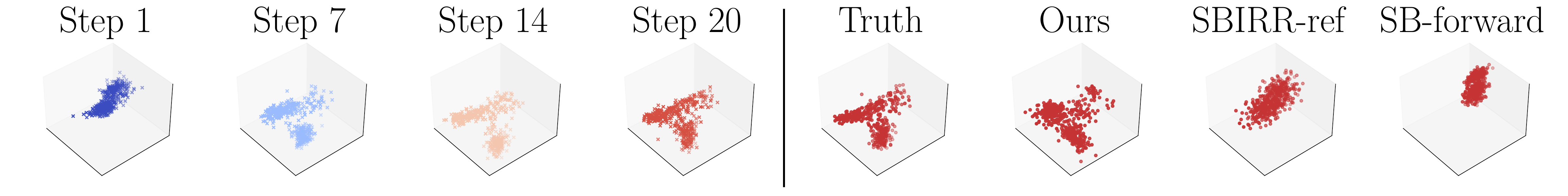} \caption{PBMC results (\protect\cref{sec:pbmc}). The axes in every plot are the same three principal components, computed over the full data: i.e., 41 time steps of the 30-dimensional gene programs.
Leftmost four panels: evolution of the training data at time steps 1, 7, 14, and 20. "Truth" panel: ground truth snapshot at time step 21. Rightmost three panels: model forecasts at time step 21.
} 
\label{fig:pbmc-forecasting} 
\end{figure}

\section{Discussion}
\label{sec:discussion}
In this work, we introduced a new method for learning SDEs from population-level snapshot data. Our approach is based on matching state--time distributions using a least squares scheme in a distributional space. Our proposed method handles real-life challenges such as unknown and state-dependent volatility, missing dimensions, and diagnostics for performance. Overall, our experiments indicate that our proposed framework outperforms existing methods in a wide range of applications. 

\textbf{Limitations.}
While many scientific applications (including those above) feature useful models, it can nonetheless be a limitation that our method requires specification of an appropriate model family. A poorly chosen model family can lead to bad performance. An overly wide family can face identifiability challenges. In fact, even the complete time series of marginal distributions need not always uniquely determine the drift and volatility functions of the SDE; see \cref{app:identifiability} for further discussion. Finally, our method is not simulation-free, so it requires compute that scales with sample size and state dimension; development of a simulation-free method is an interesting future direction.

\textbf{Social impact.} Our method might prove useful in areas of positive social impact, e.g.\ predicting a cell's reaction to a drug treatment or predicting the trajectory of oil particles in the ocean.

\section{Acknowledgements}
We would like to thank David Burt for helpful discussions during early stages of the work. We also thank Lazar Atanackovic and Katarina Petrovic for directing us to resources to implement the flow-matching baselines. This work was supported in part by
the Office of Naval Research under grant N00014-20-1-
2023 (MURI ML-SCOPE) and by the NSF TRIPODS
program (award DMS-2022448).

\bibliography{references.bib}

\begin{thebibliography}{}

\bibitem[Atanackovic et~al., 2024]{atanackovic2024meta}
Atanackovic, L., Zhang, X., Amos, B., Blanchette, M., Lee, L.~J., Bengio, Y., Tong, A., and Neklyudov, K. (2024).
\newblock Meta flow matching: Integrating vector fields on the {W}asserstein manifold.
\newblock {\em arXiv preprint arXiv:2408.14608}.

\bibitem[Bunne et~al., 2022]{bunne2022proximal}
Bunne, C., Papaxanthos, L., Krause, A., and Cuturi, M. (2022).
\newblock Proximal optimal transport modeling of population dynamics.
\newblock In {\em International Conference on Artificial Intelligence and Statistics}, pages 6511--6528. PMLR.

\bibitem[Chen et~al., 2018]{chen2018neural}
Chen, R.~T., Rubanova, Y., Bettencourt, J., and Duvenaud, D.~K. (2018).
\newblock Neural ordinary differential equations.
\newblock {\em Advances in Neural Information Processing Systems}, 31.

\bibitem[Chen et~al., 2024]{chen2024deep}
Chen, T., Liu, G.-H., Tao, M., and Theodorou, E. (2024).
\newblock Deep momentum multi-marginal {S}chr{\"o}dinger bridge.
\newblock {\em Advances in Neural Information Processing Systems}, 36.

\bibitem[Chewi et~al., 2025]{chewi2024statistical}
Chewi, S., Niles-Weed, J., and Rigollet, P. (2025).
\newblock {\em Statistical Optimal Transport}, volume 2364 of {\em Lecture Notes in Mathematics}.
\newblock Springer Cham.
\newblock École d'Été de Probabilités de Saint-Flour XLIX – 2019.

\bibitem[Cohen et~al., 2020]{cohen2020estimating}
Cohen, S., Arbel, M., and Deisenroth, M.~P. (2020).
\newblock Estimating barycenters of measures in high dimensions.
\newblock {\em arXiv preprint arXiv:2007.07105}.

\bibitem[De~Bortoli et~al., 2021]{de2021diffusion}
De~Bortoli, V., Thornton, J., Heng, J., and Doucet, A. (2021).
\newblock Diffusion {S}chr{\"o}dinger bridge with applications to score-based generative modeling.
\newblock {\em Advances in Neural Information Processing Systems}, 34:17695--17709.

\bibitem[Ding et~al., 2006]{ding2006orthogonal}
Ding, C., Li, T., Peng, W., and Park, H. (2006).
\newblock Orthogonal nonnegative matrix t-factorizations for clustering.
\newblock In {\em Proceedings of the 12th ACM SIGKDD international conference on Knowledge discovery and data mining}, pages 126--135.

\bibitem[Garreau et~al., 2017]{garreau2017large}
Garreau, D., Jitkrittum, W., and Kanagawa, M. (2017).
\newblock Large sample analysis of the median heuristic.
\newblock {\em arXiv preprint arXiv:1707.07269}.

\bibitem[Grathwohl et~al., 2019]{grathwohl2018ffjord}
Grathwohl, W., Chen, R.~T., Bettencourt, J., Sutskever, I., and Duvenaud, D. (2019).
\newblock Ffjord: Free-form continuous dynamics for scalable reversible generative models.
\newblock {\em 9th International Conference on Learning Representations}.

\bibitem[Gretton et~al., 2012]{gretton2012kernel}
Gretton, A., Borgwardt, K.~M., Rasch, M.~J., Sch{\"o}lkopf, B., and Smola, A. (2012).
\newblock A kernel two-sample test.
\newblock {\em The Journal of Machine Learning Research}, 13(1):723--773.

\bibitem[Guan et~al., 2024]{guan2024identifying}
Guan, V., Janssen, J., Rahmani, H., Warren, A., Zhang, S., Robeva, E., and Schiebinger, G. (2024).
\newblock Identifying drift, diffusion, and causal structure from temporal snapshots.
\newblock {\em arXiv preprint arXiv:2410.22729}.

\bibitem[Hall, 2004]{hall2004generalized}
Hall, A. (2004).
\newblock {\em Generalized method of moments}.
\newblock Wiley Online Library.

\bibitem[Hashimoto et~al., 2016]{hashimoto2016learning}
Hashimoto, T., Gifford, D., and Jaakkola, T. (2016).
\newblock Learning population-level diffusions with generative {RNN}s.
\newblock In {\em International Conference on Machine Learning}, pages 2417--2426. PMLR.

\bibitem[Ho et~al., 2020]{ho2020denoising}
Ho, J., Jain, A., and Abbeel, P. (2020).
\newblock Denoising diffusion probabilistic models.
\newblock {\em Advances in Neural Information Processing Systems}, 33:6840--6851.

\bibitem[Hong et~al., 2025]{hong2025trajectory}
Hong, W., Shi, Y., and Niles-Weed, J. (2025).
\newblock Trajectory inference with smooth {S}chr\"odinger bridges.
\newblock {\em arXiv preprint arXiv:2503.00530}.

\bibitem[Huguet et~al., 2022]{huguet2022manifold}
Huguet, G., Magruder, D.~S., Tong, A., Fasina, O., Kuchroo, M., Wolf, G., and Krishnaswamy, S. (2022).
\newblock Manifold interpolating optimal-transport flows for trajectory inference.
\newblock {\em Advances in Neural Information Processing Systems}, 35:29705--29718.

\bibitem[Jiang et~al., 2024]{jiang2024d}
Jiang, J., Chen, S., Tsou, T., McGinnis, C.~S., Khazaei, T., Zhu, Q., Park, J.~H., Strazhnik, I.-M., Vielmetter, J., Gong, Y., Hanna, J., Chow, E.~D., Sivak, D.~A., Gartner, Z.~J., and Thomson, M. (2024).
\newblock D-spin constructs gene regulatory network models from multiplexed {scRNA-seq} data revealing organizing principles of cellular perturbation response.
\newblock {\em BioRxiv}, pages 2023--04.

\bibitem[Koshizuka and Sato, 2022]{koshizuka2022neural}
Koshizuka, T. and Sato, I. (2022).
\newblock Neural {L}agrangian {S}chr\"odinger bridge: Diffusion modeling for population dynamics.
\newblock {\em 13th International Conference on Learning Representations}.

\bibitem[Lavenant et~al., 2024]{Lavenant2021}
Lavenant, H., Zhang, S., Kim, Y.-H., and Schiebinger, G. (2024).
\newblock Toward a mathematical theory of trajectory inference.
\newblock {\em Ann. Appl. Probab.}, 34(1A).

\bibitem[Li et~al., 2020]{li2020scalable}
Li, X., Wong, T.-K.~L., Chen, R.~T., and Duvenaud, D. (2020).
\newblock Scalable gradients for stochastic differential equations.
\newblock In {\em International Conference on Artificial Intelligence and Statistics}, pages 3870--3882. PMLR.

\bibitem[Lipman et~al., 2023]{lipman2022flow}
Lipman, Y., Chen, R.~T., Ben-Hamu, H., Nickel, M., and Le, M. (2023).
\newblock Flow matching for generative modeling.
\newblock {\em 13th International Conference on Learning Representations}.

\bibitem[Panagiotis, 2014]{panagiotis2014gulf}
Panagiotis, V. (2014).
\newblock {Gulf of Mexico} high-resolution (0.01\textdegree$\times$ 0.01\textdegree) bathymetric grid-version 2.0, february 2013.
\newblock {\em Distributed by: Gulf of Mexico Research Initiative Information and Data Cooperative (GRIIDC), Harte Research Institute, Texas A\&M UniversityCorpus Christi https://doi. org/10.7266/N7X63JZ5}, 430.

\bibitem[Pavliotis, 2016]{pavliotis2016stochastic}
Pavliotis, G.~A. (2016).
\newblock {\em Stochastic Processes and Applications}.
\newblock Springer.

\bibitem[Pavon et~al., 2021]{pavon2021data}
Pavon, M., Trigila, G., and Tabak, E.~G. (2021).
\newblock The data-driven {S}chr{\"o}dinger bridge.
\newblock {\em Communications on Pure and Applied Mathematics}, 74(7):1545--1573.

\bibitem[Saffman, 1995]{saffman1995vortex}
Saffman, P.~G. (1995).
\newblock {\em Vortex dynamics}.
\newblock Cambridge university press.

\bibitem[Schiebinger et~al., 2019]{schiebinger2019optimal}
Schiebinger, G., Shu, J., Tabaka, M., Cleary, B., Subramanian, V., Solomon, A., Gould, J., Liu, S., Lin, S., Berube, P., et~al. (2019).
\newblock Optimal-transport analysis of single-cell gene expression identifies developmental trajectories in reprogramming.
\newblock {\em Cell}, 176(4):928--943.

\bibitem[Shen et~al., 2024]{shen2024learning}
Shen, Y., Berlinghieri, R., and Broderick, T. (2024).
\newblock Learning a vector field from snapshots of unidentified particles rather than particle trajectories.
\newblock In {\em ICLR 2024 Workshop on AI4DifferentialEquations In Science}.

\bibitem[Shen et~al., 2025]{shen2024multi}
Shen, Y., Berlinghieri, R., and Broderick, T. (2025).
\newblock Multi-marginal {Schr\"odinger} bridges with iterative reference refinement.
\newblock In {\em International Conference on Artificial Intelligence and Statistics}. PMLR.

\bibitem[Song and Ermon, 2019]{song2019generative}
Song, Y. and Ermon, S. (2019).
\newblock Generative modeling by estimating gradients of the data distribution.
\newblock {\em Advances in Neural Information Processing Systems}, 32.

\bibitem[Song et~al., 2021]{song2020score}
Song, Y., Sohl-Dickstein, J., Kingma, D.~P., Kumar, A., Ermon, S., and Poole, B. (2021).
\newblock Score-based generative modeling through stochastic differential equations.
\newblock {\em 11th International Conference on Learning Representations}.

\bibitem[Tong et~al., 2024a]{tong2023improving}
Tong, A., Fatras, K., Malkin, N., Huguet, G., Zhang, Y., Rector-Brooks, J., Wolf, G., and Bengio, Y. (2024a).
\newblock Improving and generalizing flow-based generative models with minibatch optimal transport.
\newblock {\em Transactions on Machine Learning Research}.
\newblock Expert Certification.

\bibitem[Tong et~al., 2020]{tong2020trajectorynet}
Tong, A., Huang, J., Wolf, G., Van~Dijk, D., and Krishnaswamy, S. (2020).
\newblock Trajectory{N}et: A dynamic optimal transport network for modeling cellular dynamics.
\newblock In {\em International Conference on Machine Learning}, pages 9526--9536. PMLR.

\bibitem[Tong et~al., 2024b]{trajnet}
Tong, A., Huang, J., Wolf, G., Van~Dijk, D., and Krishnaswamy, S. (2024b).
\newblock Trajectory{N}et.
\newblock \url{https://github.com/KrishnaswamyLab/TrajectoryNet}.
\newblock commit hash: 810c89b.

\bibitem[Tong et~al., 2024c]{tong2024simulation}
Tong, A.~Y., Malkin, N., Fatras, K., Atanackovic, L., Zhang, Y., Huguet, G., Wolf, G., and Bengio, Y. (2024c).
\newblock Simulation-free {S}chr{\"o}dinger bridges via score and flow matching.
\newblock In {\em International Conference on Artificial Intelligence and Statistics}, pages 1279--1287. PMLR.

\bibitem[Vargas et~al., 2021]{Vargas2021}
Vargas, F., Thodoroff, P., Lamacraft, A., and Lawrence, N. (2021).
\newblock Solving {Schr{\"{o}}dinger} bridges via maximum likelihood.
\newblock {\em Entropy}, 23(9):1--30.

\bibitem[Wang et~al., 2024]{Wang2023}
Wang, B., Jennings, J., and Gong, W. (2024).
\newblock Neural structure learning with stochastic differential equations.
\newblock In {\em The Twelfth International Conference on Learning Representations}.

\bibitem[Wang et~al., 2021]{wang2021deep}
Wang, G., Jiao, Y., Xu, Q., Wang, Y., and Yang, C. (2021).
\newblock Deep generative learning via {Schr{\"o}dinger} bridge.
\newblock In {\em International Conference on Machine Learning}, pages 10794--10804. PMLR.

\bibitem[Yang and Uhler, 2019]{yang2018scalable}
Yang, K.~D. and Uhler, C. (2019).
\newblock Scalable unbalanced optimal transport using generative adversarial networks.
\newblock {\em 9th International Conference on Learning Representations}.

\bibitem[Zhang, 2024]{zhang2024joint}
Zhang, S.~Y. (2024).
\newblock Joint trajectory and network inference via reference fitting.
\newblock {\em arXiv preprint arXiv:2409.06879}.

\end{thebibliography}

\section*{Supplementary Material}
\label{sec:appendix}
\appendix
\renewcommand{\theequation}{A\arabic{equation}}
\renewcommand{\thefigure}{A\arabic{figure}}  

\setcounter{equation}{0}
\setcounter{section}{0}
\setcounter{subsection}{0}
\setcounter{subsubsection}{0}

\section{Additional Related Work}
\label{app:related-work}
In this section, we discuss related work covering generative modeling, Schrödinger bridges, flow matching, and other approaches to trajectory inference.

\textbf{Generative modeling.} The machine learning community has made substantial progress in sampling from complex, unknown distributions by transporting particles using diffusion models \citep{ho2020denoising, song2019generative, song2020score}, Schrödinger bridges \citep{de2021diffusion, pavon2021data, Vargas2021, Wang2023}, continuous normalizing flows \citep{chen2018neural, grathwohl2018ffjord}, and flow matching \citep{lipman2022flow}. These methods are primarily designed for generative modeling, where the goal is to transform a simple distribution (e.g., a Gaussian) into a data distribution. While they involve a notion of “time,” it typically serves as an auxiliary dimension rather than representing physical or real-world temporal dynamics. As a result, these methods focus on two-marginal transport problems and are not naturally suited to forecasting or modeling systems with multiple observed time points.

\textbf{Trajectory inference with Schrödinger bridges.} Several recent works have explored the use of Schrödinger bridges (SBs) and optimal transport for modeling trajectories across time \citep{schiebinger2019optimal, yang2018scalable}. \citet{hashimoto2016learning} proposed a regularized recurrent neural network trained with a Wasserstein gradient flow loss for trajectory reconstruction. \citet{bunne2022proximal} extended this direction with \texttt{JKOnet}, a neural implementation of the Jordan–Kinderlehrer–Otto (JKO) scheme, enabling learning of the underlying energy landscape. \citet{tong2024simulation} introduced \texttt{SF2M}, a simulation-free SB method based on score and flow matching, designed for efficiency but limited to Brownian motion as the reference and two-marginal settings.

Most SB-based methods are restricted to pairwise interpolation between consecutive marginals and cannot handle multiple marginals in a principled way. Several extensions have been proposed to address this. \texttt{DMSB} \citep{chen2024deep} incorporates momentum into the particles to exploit local information and model multi-marginal dynamics, while \citet{hong2025trajectory} proposed generalizations using higher-order derivatives, although these approaches are computationally feasible only in low-dimensional settings. \citet{shen2024multi} introduced \texttt{SBIRR}, a multi-marginal Schrödinger bridge method with iterative reference refinement. By integrating information from all marginals into the reference dynamics, \texttt{SBIRR} enables not only interpolation but also extrapolation via learned dynamics, and serves as one of the main baselines in our experiments.

\textbf{Trajectory inference with flow matching and normalizing flows.} Other works use deterministic flows to infer trajectories. \texttt{TrajectoryNet} \citep{tong2020trajectorynet} combines dynamic optimal transport with continuous normalizing flows (CNFs) to generate continuous-time, nonlinear trajectories from snapshot data. These flows are governed by ODEs rather than SDEs, and incorporate regularization that encourages short, energy-efficient paths. \citet{tong2023improving} extended flow matching by modeling marginals as mixtures indexed by a latent variable. They proposed \texttt{OT-CFM} and \texttt{SB-CFM}, where each component evolves under a simple vector field and the overall dynamics are determined by mixing strategies based on optimal transport or SB principles. Like many flow-matching methods, these approaches are restricted to two marginals and are applied piecewise for longer time series. To address issues of geometry in the data space, \citet{huguet2022manifold} proposed \texttt{MIOFlow}, which first learns the data manifold and then solves the flow problem within that learned structure. They use neural ODEs to transport points along this manifold. \citet{atanackovic2024meta} introduced methods to learn multiple flows over a Wasserstein manifold, though these too are aimed at interpolation rather than forecasting.

Overall, these methods move particles using learned vector fields and various forms of regularization, but they are generally not designed for forecasting. Like many SB-based methods, they suffer when applied to extrapolation tasks, particularly in the absence of a good prior or reference dynamic.

\section{Proofs and additional results}
\label{app:proofs}
\subsection{On forecasting with Schr\"odinger bridges}
\label{app:sb-forecasting}
In this section, we discuss the limitations of the Schrödinger bridges methods with a fixed reference process set a priori.

\textbf{The Schrödinger bridge problem.} Schr\"odinger bridges provide a framework for modeling stochastic processes conditioned on observed marginals. Formally, they solve the following constrained optimization problem: given a reference process $p$ (typically Brownian motion), we seek a trajectory distribution $q$ that is closest to $p$ in Kullback--Leibler divergence while matching observed marginals $\{\pi_1, \dots, \pi_T\}$:
\begin{equation}
    \argmin_{q \in \mathcal{Q}} \KL(q \mid\mid p), \quad \mathcal{Q} = \{q : q_{t_i} = \pi_i, \; i = 1, \dots, T\}.
    \label{eq:sb-problem}
\end{equation}
Typically, both $p$ and $q$ are defined via SDEs sharing the same volatility, and are supported over a finite time horizon.

\textbf{Forecasting limitations of Schrödinger bridges with fixed reference process.} To study forecasting, we extend this setup to consider an unknown future marginal $\pi_{T+1}$. In this setting, we solve:
\begin{equation}
    \pi_{T+1} = \argmin_{\pi} \min_{q \in \mathcal{Q}} \KL(q \mid\mid p), \quad \mathcal{Q} = \{q : q_{t_i} = \pi_i, \; i = 1, \dots, T, \; q_{t_{T+1}} = \pi\}.
    \label{eq:sbforecastinggoal}
\end{equation}

When both $p$ and $q$ are defined via SDEs, forecasting amounts to evolving particles forward from the last known marginal $\pi_T$ under the reference process $p$. To make this precise, we adopt notation from \citet{Lavenant2021}. Let $p_{t_i, t_{i+1}}$ and $q_{t_i, t_{i+1}}$ denote the transition densities of $p$ and $q$ over $[t_i, t_{i+1}]$. Let $q_{t_i} p_{t_i, t_{i+1}}$ denote the joint trajectory distribution that begins with marginal $q_{t_i}$ and evolves forward using the dynamics of $p$ until time $t_{i+1}$. \\

\begin{proposition}[Forecasting limitation of fixed-reference SB methods]
\label{prop:sb-forecasting}
Assume that $p$ and $q$ are SDEs with the same volatility, finite time horizon and satisfying \cref{assumption-lipschitz} and \cref{assumption-bddsecondmoments}. The solution to \eqref{eq:sbforecastinggoal} is the marginal at $t_{T+1}$ of $\pi_T p_{t_T, t_{T+1}}$; that is, forecasting at $t_{T+1}$ reduces to evolving $\pi_T$ forward using the reference dynamics $p$ until $t_{T+1}$.
\end{proposition}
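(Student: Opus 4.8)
The plan is to exploit the chain-rule decomposition of the KL divergence over the time grid $0 = t_1 < \cdots < t_T < t_{T+1}$ and show that the inner minimization in \eqref{eq:sbforecastinggoal} decouples the segment $[t_T, t_{T+1}]$ from the earlier segments, so that the only term in $\KL(q \mid\mid p)$ that depends on the free marginal $\pi$ is the one governing the transition from $t_T$ to $t_{T+1}$. First I would write, using the Markov property of the SDEs $p$ and $q$ and the disintegration of path measures, $\KL(q \mid\mid p) = \KL(q_{t_1} \mid\mid p_{t_1}) + \sum_{i=1}^{T} \E_{q_{t_i}}\!\left[ \KL\big(q_{t_i, t_{i+1}}(\cdot \mid \responseat{t_i}) \,\big\|\, p_{t_i, t_{i+1}}(\cdot \mid \responseat{t_i})\big)\right]$, where the last summand ($i = T$) is the only one constrained to have terminal marginal $\pi$ at $t_{T+1}$ and is otherwise free. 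All earlier terms are exactly the ordinary $T$-marginal Schrödinger bridge problem on $[t_1, t_T]$ and are unaffected by the choice of $\pi$; they contribute a constant to the outer optimization over $\pi$.

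Next I would analyze the single free term $\E_{q_{t_T}}\!\big[\KL(q_{t_T, t_{T+1}} \,\|\, p_{t_T, t_{T+1}})\big]$ subject to the constraints $q_{t_T} = \pi_T$ (forced by the interpolation constraints already present) and $q_{t_{T+1}} = \pi$. This is a static Schrödinger problem between the fixed measure $\pi_T$ and the variable measure $\pi$, with reference given by the joint law $\pi_T \, p_{t_T, t_{T+1}}$. Minimizing over $\pi$ as well, the constraint on the terminal marginal becomes vacuous: the unconstrained minimizer of $\KL(q_{t_T, t_{T+1}} \,\|\, \pi_T \, p_{t_T, t_{T+1}})$ over all couplings with first marginal $\pi_T$ is simply $q_{t_T, t_{T+1}} = \pi_T \, p_{t_T, t_{T+1}}$ itself (KL is nonnegative and vanishes only at equality, and this measure does have first marginal $\pi_T$). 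Hence the optimal $\pi$ is the second marginal of $\pi_T \, p_{t_T, t_{T+1}}$, i.e.\ $\pi_T$ pushed forward by the reference transition kernel to time $t_{T+1}$, which is exactly the claimed forecast. Assumptions~\ref{assumption-lipschitz} and~\ref{assumption-bddsecondmoments} enter here only to guarantee that $p$ and $q$ are genuine SDEs with well-defined transition densities and finite KL (so the decomposition is valid and the quantities are not $+\infty$), via the strong-existence result cited after Assumption~\ref{assumption-lipschitz}.

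The main obstacle I anticipate is making the chain-rule decomposition of $\KL(q\mid\mid p)$ fully rigorous at the level of path measures: one must justify that both $p$ and $q$ are Markov, that $q \ll p$ so the divergence is finite (which uses the shared-volatility assumption so that Girsanov applies and the Radon–Nikodym derivative depends only on the drift difference), and that the disintegration over the finite grid $\{t_i\}$ is valid. Once the additive decomposition is in place, the rest is a short convexity/nonnegativity argument. A secondary subtlety is confirming that the earlier-segment subproblem remains feasible and its optimal value is genuinely independent of $\pi$ — this follows because the constraints $q_{t_i} = \pi_i$ for $i \le T$ do not reference $t_{T+1}$ at all, so I would state it as an immediate consequence of the decomposition rather than re-deriving existence of the bridge on $[t_1, t_T]$.
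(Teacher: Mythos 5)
Your proposal is correct and follows essentially the same route as the paper's proof: a chain-rule/Markov decomposition of $\KL(q\,\|\,p)$ over the time grid (the paper invokes Proposition D.1 and Remark D.2 of \citet{Lavenant2021} for this step), the observation that only the final segment's term depends on the free marginal $\pi$, and the conclusion that this term is annihilated by setting $q_{t_T,t_{T+1}}=\pi_T\,p_{t_T,t_{T+1}}$ via nonnegativity of the KL. Your conditional-expectation form of the decomposition is algebraically identical to the paper's joint-measure form, and your added remarks on Girsanov and feasibility only make explicit what the paper delegates to the cited reference.
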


\begin{proof}
When $p$ and $q$ share volatility and finite horizon, the processes are Markovian. Using Proposition D.1 and Remark D.2 of \citet{Lavenant2021}, we decompose the KL objective:
\begin{align}
    \KL(q \mid\mid p) &= \KL(q_{t_1, t_2} \mid\mid p_{t_1, t_2})
    + \sum_{i=2}^{T} \KL(q_{t_i, t_{i+1}} \mid\mid q_{t_i} p_{t_i, t_{i+1}})
    + \KL(q_{t_T, t_{T+1}} \mid\mid q_{t_T} p_{t_T, t_{T+1}}).
\end{align}
The first $T$ terms are fixed by the constraints on $\{\pi_1, \dots, \pi_T\}$ and does not depends on $\pi_{T+1}$, so the final term governs the choice of $\pi_{T+1}$. This term is minimized by setting:
\begin{equation}
    q_{t_T, t_{T+1}} = \pi_T p_{t_T, t_{T+1}},
\end{equation}
which yields $q_{t_{T+1}} = \pi_{T+1}$ as the marginal of $\pi_T$ evolved forward under $p$. Thus, the optimal forecast is the marginal of $\pi_T p_{t_T, t_{T+1}}$.
\end{proof}

This result shows that in standard SB setups with fixed reference dynamics $p$, forecasting reduces to propagating the last observed marginal forward under $p$. Consequently, the quality of SB-based forecasts is limited by the choice of reference and does not adapt to information in the earlier snapshots, unless additional refinement is performed \citep[e.g.,][]{shen2024multi}.

\subsection{Proof of \cref{prop:MMDdecomposition}}
\label{app:proof-main-prop}
In this section, we prove our main proposition from the main text, \cref{prop:MMDdecomposition}.
\begin{proof}[Proof of \cref{prop:MMDdecomposition}]
We start with the definition of the MMD squared between the joint distributions:
\begin{equation}
\begin{aligned}
    \MMD^2_{K}(f(\statevar,t),g(\statevar,t))&=\E_{(\statevar,t)\sim f, (\statevar',t')\sim f}\left[K((\statevar,t), (\statevar',t'))\right]\\
    &~~-2\E_{(\statevar,t)\sim f, (\statevar',t')\sim g}\left[K((\statevar,t), (\statevar',t'))\right]\\
    &~~+\E_{(\statevar,t)\sim g, (\statevar',t')\sim g}\left[K((\statevar,t), (\statevar',t'))\right]
\end{aligned}
\end{equation}

Then we can rewrite the first term in right-hand side as follows:

\begin{equation}
\begin{aligned}
\E_{(\statevar,t)\sim f, (\statevar',t')\sim f}\left[K((\statevar,t), (\statevar',t'))\right]&=\mathbb{E}_{(\statevar,t)\sim f,\; (\statevar',t')\sim f}\!\left[K_{\statevar}(\statevar,\statevar')\,\delta(t-t')\right]\\
&=\mathbb{E}_{t\sim h(t),\; t'\sim h(t')}\!\left[\delta(t-t')\,\mathbb{E}_{\substack{\statevar\sim f(\statevar\mid t) \\ \statevar'\sim f(\statevar\mid t')}}\!\left[K_{\statevar}(\statevar,\statevar')\right]\right]\\
&=\sum_{t,t'\in \timerange}\left[\delta(t-t')\,\mathbb{E}_{\substack{\statevar\sim f(\statevar\mid t) \\ \statevar'\sim f(\statevar\mid t')}}\!\left[K_{\statevar}(\statevar,\statevar')\right]\right]h(t)h(t')\\
&=\sum_{t\in \timerange} \mathbb{E}_{\substack{\statevar, \statevar'\sim f(\statevar\mid t)}}\!\left[K_{\statevar}(\statevar,\statevar')\right]h^2(t)
\end{aligned}
\end{equation}
where the first equality uses the factorized form of the kernel, the second equality is by the the law of iterated expectation conditioning on the time components.

Similarly the second term:
\begin{equation}
\begin{aligned}
    -2\E_{(\statevar,t)\sim f, (\statevar',t')\sim g}\left[K((\statevar,t), (\statevar',t'))\right] &= \mathbb{E}_{(\statevar,t)\sim f,\; (\statevar',t')\sim g}\!\left[K_{\statevar}(\statevar,\statevar')\,\delta(t-t')\right]\\
    &=\mathbb{E}_{t\sim h(t),\; t'\sim h(t')}\!\left[\delta(t-t')\,(-2\mathbb{E}_{\substack{\statevar\sim f(\statevar\mid t) \\ \statevar'\sim g(\statevar\mid t')}}\!\left[K_{\statevar}(\statevar,\statevar')\right])\right]\\
    &=\sum_{t,t'\in \timerange}\left[\delta(t-t')\,(-2\mathbb{E}_{\substack{\statevar\sim f(\statevar\mid t) \\ \statevar'\sim g(\statevar\mid t')}}\!\left[K_{\statevar}(\statevar,\statevar')\right])\right]h(t)h(t')\\
    &=\sum_{t\in \timerange} -2\mathbb{E}_{\substack{\statevar\sim f(\statevar\mid t) \\ \statevar'\sim g(\statevar\mid t)}}\!\left[K_{\statevar}(\statevar,\statevar')\right]h^2(t)
\end{aligned}
\end{equation}

The third term
\begin{equation}
\begin{aligned}
\E_{(\statevar,t)\sim g, (\statevar',t')\sim f}\left[K((\statevar,t), (\statevar',t'))\right]&=\mathbb{E}_{(\statevar,t)\sim g,\; (\statevar',t')\sim g}\!\left[K_{\statevar}(\statevar,\statevar')\,\delta(t-t')\right]\\
&=\mathbb{E}_{t\sim h(t),\; t'\sim h(t')}\!\left[\delta(t-t')\,\mathbb{E}_{\substack{\statevar\sim g(\statevar\mid t) \\ \statevar'\sim g(\statevar\mid t')}}\!\left[K_{\statevar}(\statevar,\statevar')\right]\right]\\
&=\sum_{t,t'\in \timerange}\left[\delta(t-t')\,\mathbb{E}_{\substack{\statevar\sim g(\statevar\mid t) \\ \statevar'\sim g(\statevar\mid t')}}\!\left[K_{\statevar}(\statevar,\statevar')\right]\right]h(t)h(t')\\
&=\sum_{t\in \timerange} \mathbb{E}_{\substack{\statevar, \statevar'\sim g(\statevar\mid t)}}\!\left[K_{\statevar}(\statevar,\statevar')\right]h^2(t)
\end{aligned}
\end{equation}
Collecting terms, we have 
\begin{align*}
     &~~\MMD^2_{K}(f(\statevar,t),g(\statevar,t))\\
     &=\sum_{t\in \timerange} h^2(t)\left[ \mathbb{E}_{\substack{\statevar, \statevar'\sim f(\statevar\mid t)}}\!\left[K_{\statevar}(\statevar,\statevar')\right] -2\mathbb{E}_{\substack{\statevar\sim f(\statevar\mid t) \\ \statevar'\sim g(\statevar\mid t)}}\!\left[K_{\statevar}(\statevar,\statevar')\right]+\mathbb{E}_{\substack{\statevar, \statevar'\sim g(\statevar\mid t)}}\!\left[K_{\statevar}(\statevar,\statevar')\right]\right]\\
     &=\sum_{t\in \timerange} h^2(t)\MMD_{K_{\statevar}}^2(f(\cdot\mid t),g(\cdot\mid t))
\end{align*}

\end{proof}

In our application, we used the empirical distribution for time, i.e., $\emptimemeasure(\timestep{})$ from \cref{eq:empirical_dists}. By doing so, the two distributions of interest to apply \cref{prop:MMDdecomposition} are $\predjoint{y}{\timestep{}}$ and $\empjoint{y}{\timestep{}}$, and we can rewrite the squared MMD as
\begin{align*}
   \MMD_K^2(\predjointnoarg, \empjointnoarg)
=\sum_{\timeidx=1}^{\totsteps} \left(\frac{\tottrajec_{\timeidx}}{\sum_{j=1}^{\totsteps}\tottrajec_{j}}\right)^2\,\MMD_{K_{\statevar}}^2 ( \predconditional{\cdot}{\timestep{\timeidx}}, \empconditional{\cdot}{\timestep{\timeidx}} ).
\end{align*}
as noted in the main text.

\section{Alternative view of $R^2$}
\label{app:alternativeR2}
An alternative way to view this metric is through the lens of comparing joint distributions to the product of their marginals. In information theory, mutual information quantifies the dependence between two random variables by measuring the divergence (typically via the Kullback–Leibler divergence) between the joint distribution and the product of the marginal distributions. Analogously, by reapplying \cref{prop:MMDdecomposition}, we can interpret our $R^2$ metric as comparing the joint distribution of state and time as predicted by the model with the distribution obtained by taking the product of the marginal (state and time) distributions. In this view, the denominator in \cref{eq:R2_metric} (which uses the barycenter) reflects the total variation or “spread” in the observed data. And the numerator captures the remaining error when the model-predicted joint distribution is compared to the empirical joint distribution. Thus, a higher $R^2$ indicates that the model captures more of the dependence structure between state and time—just as in regression a higher $R^2$ means the model explains a larger fraction of the variability in the data. This analogy to mutual information provides an intuitive understanding of how our metric not only assesses goodness-of-fit but also the degree to which the model captures the temporal structure of the data.
\section{Further experimental details}
\label{app:further-exp-details}

In this section, we provide additional details and results for our experiments. First we provide more details about the vector field reconstruction task. Then, we provide the summary table for the forecasting and interpolation task for all experiments using EMD. After that, for each experiment introduced in the main text, we describe: (1) the experimental setup, (2) the choice of model family used with our method, (3) forecasting results, (4) vector field reconstruction results, and (5) interpolation results. Our experiments are carried out using four cores of Intel Xeon Gold 6248 CPU and one Nvidia Volta V100 GPU with 32 GB RAM.

\subsection{Vector Field Reconstruction}
\label{app:vector-field-reconstruction}
In cases where the true underlying drift function is known (e.g., synthetic experiments), we also evaluate how accurately methods reconstruct the vector field driving the system dynamics. We use the same baselines as in the forecasting task, since this task also requires recovering a coherent forward-time dynamic, rather than just interpolating between marginals. We measure reconstruction accuracy visually and by computing mean squared error (MSE) between the learned drift and ground truth drift on a dense grid covering the observed data range. Overall, our method provides better or matching vector field reconstruction performance relative to competitors. Detailed results for vector field reconstruction are presented in \cref{app:lv-vector} (for Lotka-Volterra), \cref{app:repr-vector-parametric} (for repressilator with parametric model family), \cref{app:repr-vector-semiparametric} (for repressilator with semiparametric model family), and \cref{app:gom-vector} (for Gulf of Mexico). 

\subsection{How we use $R^2$ in our experiments}
\label{app:use-r2-experiments}
In this section, we briefly describe how the proposed $R^2$ metric is used in our experiments. We apply it in two main ways. First, for early stopping: we select the number of training epochs such that $R^2$ increases by less than 0.01 over the last 20 epochs. Second, as a model selection criterion when choosing among neural network architectures. In particular, for real-data experiments using multilayer perceptrons, we perform a small grid search over the number of layers and hidden units, selecting the model with the highest $R^2$. Specific details for each experiment are provided in the “Model family choice” subsections in \cref{app:further-exp-details}.

\subsection{EMD table with summary results across all methods and experiments}
\label{app:emd-big-table}
In this section, we provide the summary table (\cref{tab:emd_combined}) for comparing forecasting and interpolation performance in terms of EMD over all the experiments of interest. For the interpolation results, we report the mean and standard deviation for each method, aggregated over different random seeds and interpolation held-out time points. Detailed per-time-point results are provided for each experiment in the corresponding “Interpolation results” subsubsection later in \cref{app:further-exp-details}. 

\begin{table}[!ht]
\caption{EMD for forecast and interpolation tasks.  Bold green = best method; plain green = method whose mean lies within one standard deviation of the best method. }
\centering
\makebox[0pt][c]{
\small
\begin{tabular}{lcccccc}
\toprule
      & \multicolumn{3}{c}{\textbf{Forecast}} & \multicolumn{3}{c}{\textbf{Interpolation}} \\
\cmidrule(lr){2-4}\cmidrule(lr){5-7}
\textbf{Task}
      & \textbf{Ours} & \texttt{SBIRR-ref} & \texttt{SB-forward}
      & \textbf{Ours} & \texttt{SBIRR} & \texttt{DMSB} \\ \midrule
LV
      & \best{\ms{0.21}{0.04}} & \ms{0.79}{0.05} & \ms{1.82}{0.9}
      & \best{\ms{0.06}{0.03}} & \ms{0.10}{0.08} & \ms{0.82}{0.6} \\ \midrule
ReprParam
      & \best{\ms{0.19}{0.08}} & \ms{1.55}{0.8} & \ms{1.39}{0.6}
      & \best{\ms{0.10}{0.04}} & \ms{0.17}{0.06} & \ms{1.89}{0.9} \\
ReprSemiparam
      & \best{\ms{0.35}{0.09}} & \ms{1.18}{0.4} & \ms{1.16}{0.3}
      & \best{\ms{0.21}{0.11}} & \ms{0.39}{0.1} & \ms{1.89}{0.9} \\
ReprProtein
      & \best{\ms{0.26}{0.04}} & \ms{6.36}{0.5} & \ms{7.24}{0.5}
      & \best{\ms{0.08}{0.03}} & \ms{0.59}{0.8} & \ms{1.48}{1.0} \\ \midrule
GoM
      & \best{\ms{0.71}{0.01}} & \ms{0.89}{0.03} & \ms{0.94}{0.08}
      & \ms{0.10}{0.04} & \best{\ms{0.04}{0.01}} & \ms{0.08}{0.04} \\ \midrule
PBMC
      & -- & -- & --  & -- & -- & -- \\
\bottomrule
\end{tabular}}
\label{tab:emd_combined}
\end{table}

\subsection{Full results for interpolation task}
\label{app:complete-summary-interpolation}
In this section, we provide summary tables for all methods for the interpolation task. In \cref{tab:interp-summary-complete-mmd} we provide the summary table for MMD. In \cref{tab:interp-summary-complete-emd} we provide the summary table for EMD. As in \cref{app:emd-big-table}, we report the mean and standard deviation for each method, aggregated over different random seeds and interpolation held-out time points. We find that in almost all experiments, SnapMMD provides better or matching interpolation performance relative to competitors.

\begin{table}[!hb]
\caption{Global summary of MMD across all seeds and validation points}
\centering
\makebox[0pt][c]{
\small
\begin{tabular}{lrrrrrr}
\toprule
Task & Ours & \texttt{SBIRR} & \texttt{DMSB} & \texttt{OT-CFM} & \texttt{SB-CFM} & \texttt{SF2M} \\\midrule
LV & \cellcolor{green!25}\bm{$0.017\pm{ 0.013}$} & $0.042\pm{ 0.031}$ & $1.148\pm{ 0.385}$ & $1.020\pm{ 0.358}$ & $0.856\pm{ 0.286}$ & $0.640\pm{ 0.256}$ \\
ReprParam & \cellcolor{green!25}\bm{$0.035\pm{ 0.035}$} & $0.163\pm{ 0.124}$ & $1.575\pm{ 0.395}$ & $1.230\pm{ 0.369}$ & $0.963\pm{ 0.410}$ & $0.847\pm{ 0.270}$ \\
ReprSemiparam & \cellcolor{green!25}\bm{$0.016\pm{ 0.020}$} & $0.342\pm{ 0.312}$ & $1.280\pm{ 0.426}$ & $1.030\pm{ 0.447}$ & $0.649\pm{ 0.426}$ & $0.727\pm{ 0.435}$ \\
ReprProtein & \cellcolor{green!25}\bm{$0.311\pm{ 0.379}$} & \cellcolor{green!25}$0.477\pm{ 0.304}$ & $1.575\pm{ 0.395}$ & $1.230\pm{ 0.369}$ & $0.963\pm{ 0.410}$ & $0.847\pm{ 0.270}$ \\
Gulf of Mexico & $0.288\pm{ 0.197}$ & \cellcolor{green!25}\bm{$0.072\pm{ 0.058}$} & $0.147\pm{ 0.093}$ & $1.081\pm{ 0.376}$ & $0.771\pm{ 0.338}$ & $0.788\pm{ 0.334}$ \\
PBMC & \cellcolor{green!25}\bm{$0.011\pm{ 0.008}$} & \cellcolor{green!25}$0.012\pm{ 0.013}$ & $0.561\pm{ 0.102}$ & $0.039\pm{ 0.018}$ & $0.047\pm{ 0.024}$ & $0.032\pm{ 0.010}$ \\
\bottomrule
\end{tabular}}
\label{tab:interp-summary-complete-mmd}
\end{table}

\begin{table}[!hb]
\caption{Global summary of EMD across all seeds and validation points. For the repressilator with incomplete state measurements, for some random seeds the trajectories generated by SB-CFM and SF2M diverged significantly, leading to extremely large average EMD values ($1138.222 \pm 8653.586$ for SB-CFM and $1954.050 \pm 14737.310$ for SF2M). To maintain visualization clarity, we omit these entries from the summary table.}
\centering
\makebox[0pt][c]{
\small
\begin{tabular}{lrrrrrr}
\toprule
Task & Ours & \texttt{SBIRR} & \texttt{DMSB} & \texttt{OT-CFM} & \texttt{SB-CFM} & \texttt{SF2M} \\\midrule
LV & \cellcolor{green!25}\bm{$0.064\pm{ 0.031}$} & $0.096\pm{ 0.075}$ & $0.819\pm{ 0.559}$ & $0.742\pm{ 0.698}$ & $0.669\pm{ 0.674}$ & $0.770\pm{ 0.706}$ \\
ReprParam & \cellcolor{green!25}\bm{$0.096\pm{ 0.041}$} & $0.168\pm{ 0.057}$ & $1.887\pm{ 0.865}$ & $1.367\pm{ 0.793}$ & $1.412\pm{ 1.377}$ & $23.201\pm{ 122.101}$ \\
ReprProtein & \cellcolor{green!25}\bm{$0.080\pm{ 0.032}$} & $0.591\pm{ 0.783}$ & $1.480\pm{ 1.011}$ & $1.328\pm{ 1.026}$ & -- & -- \\
ReprSemiparam & \cellcolor{green!25}\bm{$0.210\pm{ 0.106}$} & $0.394\pm{ 0.136}$ & $1.887\pm{ 0.865}$ & $1.367\pm{ 0.793}$ & $1.412\pm{ 1.377}$ & $23.201\pm{ 122.101}$ \\
Gulf of Mexico & $0.103\pm{ 0.040}$ & \cellcolor{green!25}\bm{$0.043\pm{ 0.012}$} & $0.078\pm{ 0.037}$ & $0.252\pm{ 0.109}$ & $0.265\pm{ 0.136}$ & $0.249\pm{ 0.102}$ \\
PBMC & -- & -- & -- & -- & -- & -- \\
\bottomrule
\end{tabular}}
\label{tab:interp-summary-complete-emd}
\end{table}

\clearpage

\subsection{Lotka-Volterra}
\label{app:lv-app}
\subsubsection{Experiment setup}
In this experiment, we study the stochastic Lotka-Volterra model, which describes predator-prey interactions over time. The population dynamics are governed by the following system of SDEs:
\begin{align}
\begin{split}
\label{eq:lv-family}
    dX &= \alpha X - \beta XY + \sigma X dW_x, \\
    dY &= \gamma XY - \delta Y + \sigma Y dW_y,
\end{split}
\end{align}
where $[dW_x, dW_y]$ denotes a two-dimensional Brownian motion. The true parameter values are set to $\alpha = 1.0$, $\beta = 0.4$, $\gamma = 0.4$, $\delta = 0.1$, and $\sigma = 0.02$. Initial population sizes are sampled from uniform distributions: $X_0 \sim U(5, 5.1)$ and $Y_0 \sim U(4, 4.1)$. We simulate the system over 19 discrete time points using the Euler–Maruyama method (via the \texttt{torchsde} Python package), with a time step of 0.5 and 200 samples per time point. We use the 10 odd-numbered time steps as training data. The 9 even-numbered time steps are held out for evaluating interpolation performance. To assess forecasting, we simulate one additional time step beyond the final snapshot, using a larger time increment of 1.0, and hold it out as the test point.

\subsubsection{Model family choice}
\label{app:LV_implementation}
For this experiment, we have access to the data-generating process, as described in \cref{eq:lv-family}. Therefore, we select the model family to be the set of SDEs that satisfy this system of equations, \cref{eq:lv-family} with free parameters $\alpha, \beta, \gamma, \delta, \sigma>0$. We learn the parameters by minimizing the proposed MMD loss using gradient descent, with a learning rate of 0.05 over 300 epochs. We choose the number of epochs such that in the last 20 epochs $R^2$ increases by less than 0.01. We implemented the model family as a Python class using in the \texttt{torchsde} \citep{li2020scalable} module.

\subsubsection{Forecasting results.}
\label{app:lv-results-forecasting}
In the first row of \cref{tab:lv} we show the MMD results for the forecast task. The MMD is computed using a RBF kernel with length scale 1. In each cell, the first number represent the MMD averaged across 10 different seeds, and the second number (in parenthesis) is the standard deviation over the same 10 seeds. We color in green the cell corresponding to the method with lowest MMD. We also highlight in green any other methods whose mean is contained in the one-standard deviation confidence interval for the best method. From the first row, we can see how our method is (by far) the best method at the forecasting task. In the second row, we have the same set of results, using EMD instead of MMD. We can see that our method is, also using EMD, by far the best method at the forecasting task. 

\subsubsection{Vector field reconstruction results.}
\label{app:lv-vector}
If we look at the middle row of \cref{fig:LV-appendix}, we see that from a visual perspective the reconstructed vector fields are very similar to the ground truth for all the three methods. Also from the bottom row of the same figure, we can see that the difference between the reconstructed fields and ground truth for our method and \texttt{SBIRR-ref} is very similar, whereas for \texttt{SB-forward} it is a bit worse. The same intuition is confirmed by looking at the bottom row in \cref{tab:lv} where we compare the MSEs for the vector reconstruction task, and our method achieves the lowest value. 

\begin{figure}[!ht]
    \centering
    \includegraphics[width=\linewidth]{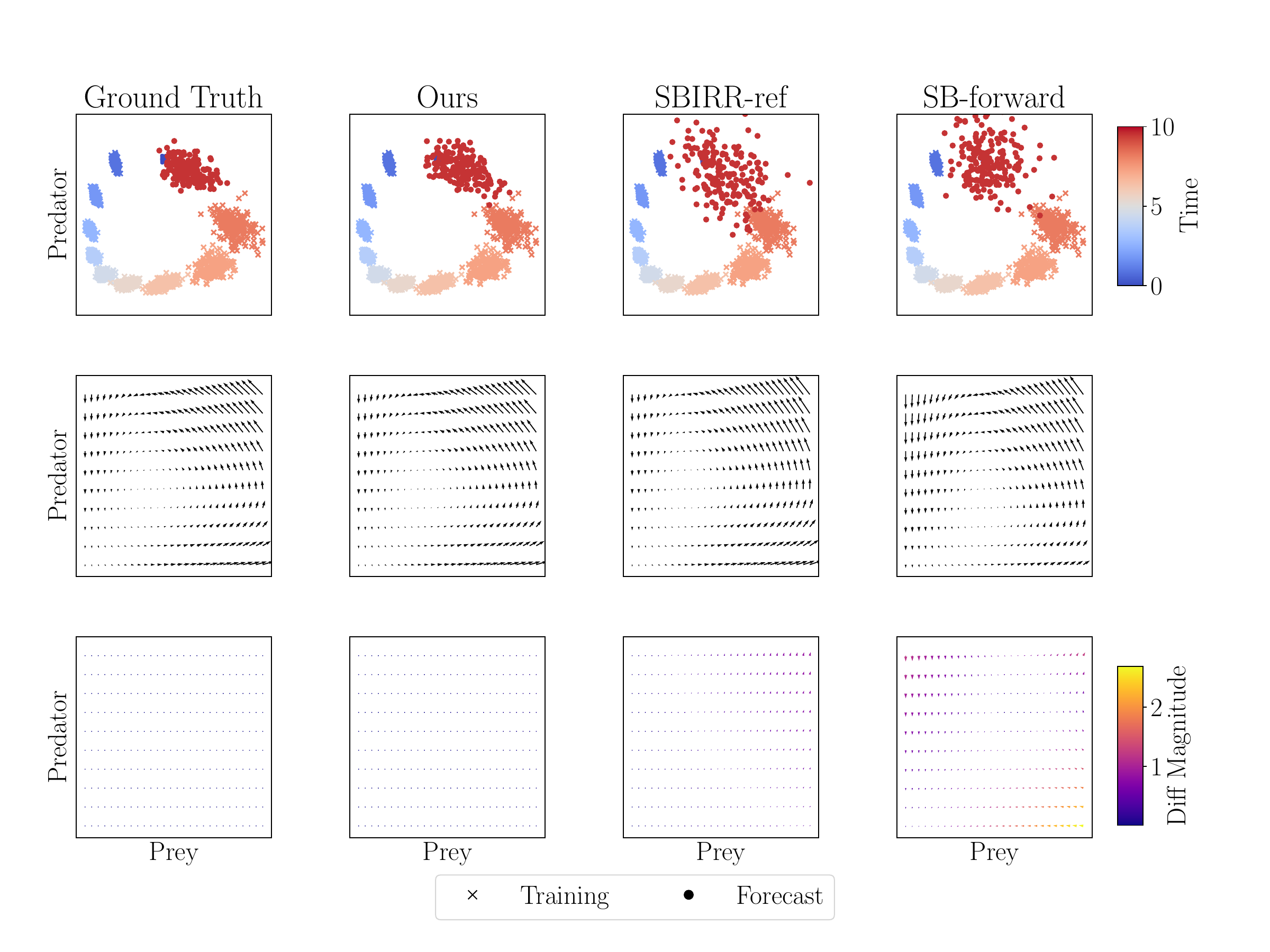}
    \caption{Experimental results for the Lotka-Volterra system. \textit{Top row}: forecast prediction task. A method is successful if the forecast predicted points (in red) match the red points in the ground truth figure. \textit{Middle row:} ground truth vector field (left) and reconstructed vector fields with the three methods. \textit{Bottom row:} Difference between reconstructed vector fields and ground truth. For each point of interest on the grid, we represent the difference between the two vectors with an arrow and color it according to the magnitude of the difference (colorbar to the right).}
    \label{fig:LV-appendix}
\end{figure}

\begin{table}[ht!]
    \centering
    \caption{Evaluation metric for Lotka-Volterra (mean (sd)). Drift was evaluated using MSE on a grid.}
    \begin{tabular}{lccc}
    \hline
           &          &   LV  &   \\
    Metric     &  Ours & \texttt{SBIRR-ref} & \texttt{SB-forward} \\
    \hline
    Forecast-MMD & \cellcolor{green!25}\textbf{0.012 (0.0067)} & 0.14 (0.023)& 0.71 (0.49)  \\
    Forecast-EMD & \cellcolor{green!25}\textbf{0.21 (0.035)} & 0.79 (0.053)& 1.82 (0.94)  \\
    Drift & \cellcolor{green!25}\textbf{0.00071 (0.000027)} & 0.079 (0.0080)& 0.59 (0.13)   \\
    \hline
    \end{tabular}
    \label{tab:lv}
\end{table}

\subsubsection{Interpolation results}
\label{app:lv-interpol}
We evaluate model performance on the interpolation task in the classic LV experiment by comparing both qualitative and quantitative results. Specifically, we assess the quality of inferred trajectories against held-out validation snapshots using MMD and EMD. In \cref{fig:lv-appendix-parametric-interpol}, the held-out validation snapshots are indicated by x-markers. An interpolation method is successful when its learned trajectories intersect these markers. For visual clarity, we omit the training snapshots: they fall between consecutive validation times and would excessively clutter the figure without adding interpretive value. As shown in \cref{fig:lv-appendix-parametric-interpol}, our method produces trajectories that interpolate closely the held-out data. Among all baselines, \texttt{SBIRR} and the two \texttt{CFM} achieve comparable visual match. This visual impression is also supported by the quantitative metrics. In \cref{fig:lv-metrics-interpol}, we plot MMD and EMD values over all validation time points. Our method consistently achieves the lowest values across time in both metrics. \texttt{SBIRR} performs comparably well at some validation points. In contrast, the other baselines show significantly higher errors, particularly in later time steps where the trajectory distribution becomes more complex. The corresponding tables (\cref{tab:lv-mmd} and \cref{tab:lv-emd}) confirm these trends. For every validation point, our method is always as good as the best baseline (\texttt{SBIRR}) and in most of the cases it achieves the lowest MMD and EMD values.

\begin{figure}[!ht]
    \centering
    \includegraphics[width=\linewidth]{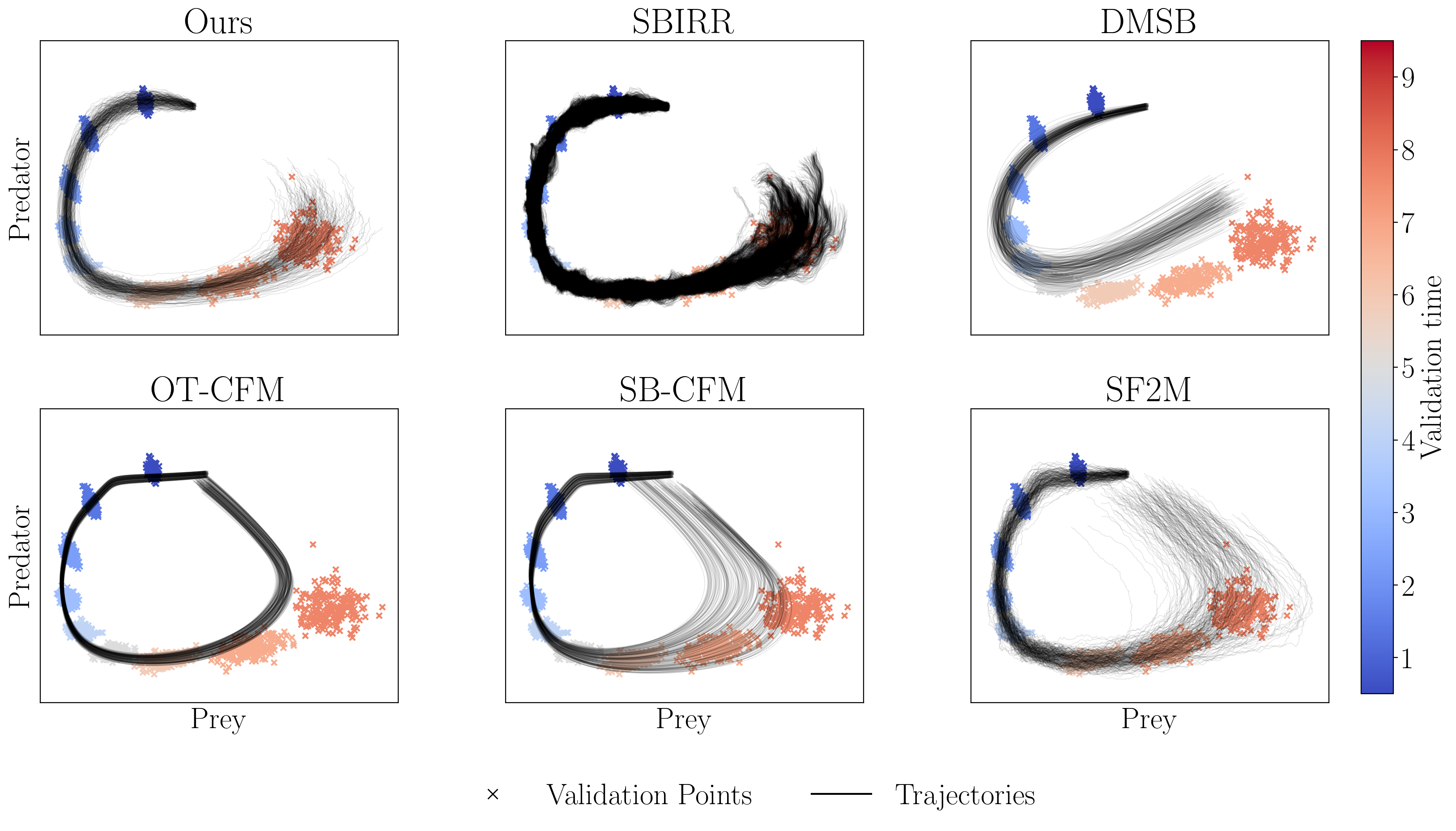}
    \caption{LV interpolation}
    \label{fig:lv-appendix-parametric-interpol}
\end{figure}

\begin{figure}[!ht]
    \centering
    \includegraphics[width=\linewidth]{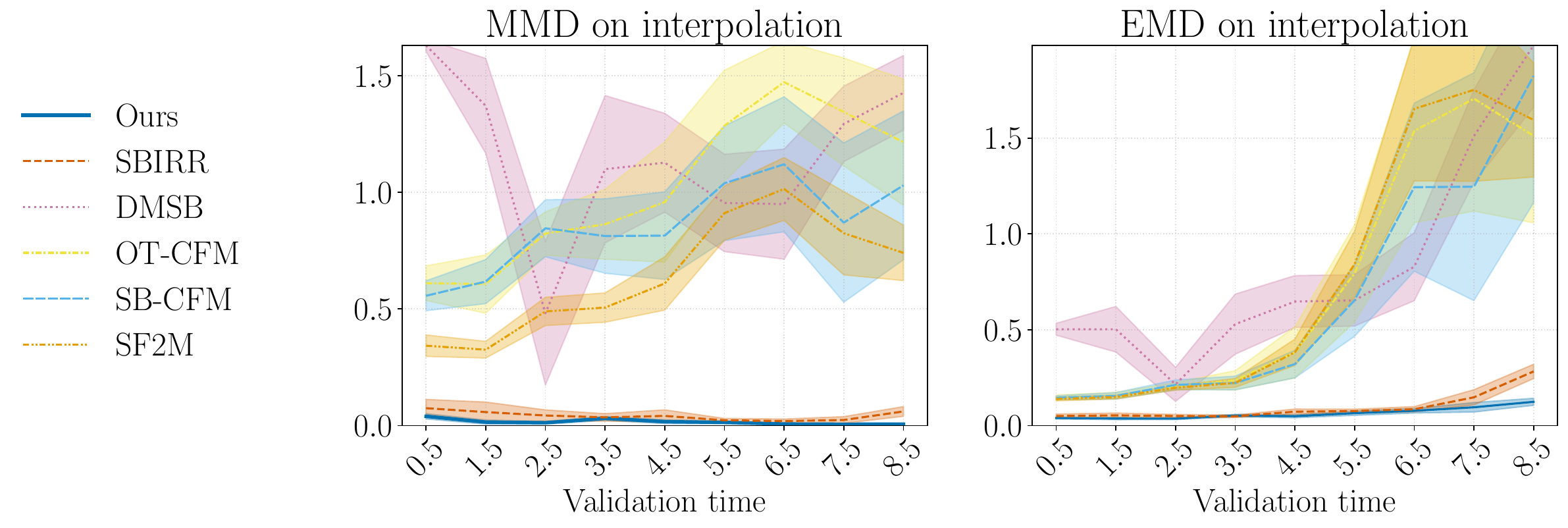}
    \caption{LV interpolation metrics}
    \label{fig:lv-metrics-interpol}
\end{figure}

\begin{table}[ht]
\caption{MMD at each validation point for Lotka-Volterra.}
\centering
\makebox[0pt][c]{
\small
\begin{tabular}{lrrrrrr}
\toprule
Time & Ours & \texttt{SBIRR} & \texttt{DMSB} & \texttt{OT-CFM} & \texttt{SB-CFM} & \texttt{SF2M} \\\midrule
0.5 & \cellcolor{green!25}\bm{$0.040\pm{ 0.011}$} & $0.075\pm{ 0.038}$ & $1.629\pm{ 0.026}$ & $0.610\pm{ 0.074}$ & $0.556\pm{ 0.064}$ & $0.343\pm{ 0.046}$ \\
1.5 & \cellcolor{green!25}\bm{$0.015\pm{ 0.009}$} & $0.058\pm{ 0.043}$ & $1.370\pm{ 0.204}$ & $0.607\pm{ 0.125}$ & $0.617\pm{ 0.095}$ & $0.325\pm{ 0.036}$ \\
2.5 & \cellcolor{green!25}\bm{$0.012\pm{ 0.006}$} & $0.043\pm{ 0.024}$ & $0.482\pm{ 0.307}$ & $0.823\pm{ 0.093}$ & $0.846\pm{ 0.122}$ & $0.489\pm{ 0.061}$ \\
3.5 & \cellcolor{green!25}\bm{$0.031\pm{ 0.009}$} & \cellcolor{green!25}$0.035\pm{ 0.016}$ & $1.099\pm{ 0.316}$ & $0.863\pm{ 0.150}$ & $0.813\pm{ 0.160}$ & $0.506\pm{ 0.063}$ \\
4.5 & \cellcolor{green!25}\bm{$0.017\pm{ 0.008}$} & $0.042\pm{ 0.026}$ & $1.127\pm{ 0.212}$ & $0.958\pm{ 0.257}$ & $0.814\pm{ 0.188}$ & $0.609\pm{ 0.114}$ \\
5.5 & \cellcolor{green!25}\bm{$0.014\pm{ 0.006}$} & $0.023\pm{ 0.008}$ & $0.954\pm{ 0.209}$ & $1.285\pm{ 0.238}$ & $1.039\pm{ 0.246}$ & $0.911\pm{ 0.116}$ \\
6.5 & \cellcolor{green!25}\bm{$0.007\pm{ 0.004}$} & $0.020\pm{ 0.009}$ & $0.949\pm{ 0.236}$ & $1.472\pm{ 0.175}$ & $1.120\pm{ 0.289}$ & $1.014\pm{ 0.135}$ \\
7.5 & \cellcolor{green!25}\bm{$0.006\pm{ 0.004}$} & $0.024\pm{ 0.014}$ & $1.293\pm{ 0.162}$ & $1.344\pm{ 0.231}$ & $0.870\pm{ 0.341}$ & $0.825\pm{ 0.179}$ \\
8.5 & \cellcolor{green!25}\bm{$0.007\pm{ 0.004}$} & $0.061\pm{ 0.021}$ & $1.426\pm{ 0.160}$ & $1.215\pm{ 0.270}$ & $1.030\pm{ 0.319}$ & $0.740\pm{ 0.119}$ \\
\bottomrule
\end{tabular}}
\label{tab:lv-mmd}
\end{table}

\begin{table}[ht]
\caption{EMD at each validation point for Lotka-Volterra.}
\centering
\makebox[0pt][c]{
\small
\begin{tabular}{lrrrrrr}
\toprule
Time & Ours & \texttt{SBIRR} & \texttt{DMSB} & \texttt{OT-CFM} & \texttt{SB-CFM} & \texttt{SF2M} \\\midrule
0.5 & \cellcolor{green!25}\bm{$0.040\pm{ 0.005}$} & $0.050\pm{ 0.011}$ & $0.503\pm{ 0.031}$ & $0.147\pm{ 0.014}$ & $0.145\pm{ 0.010}$ & $0.139\pm{ 0.010}$ \\
1.5 & \cellcolor{green!25}\bm{$0.037\pm{ 0.006}$} & $0.053\pm{ 0.012}$ & $0.503\pm{ 0.119}$ & $0.157\pm{ 0.017}$ & $0.155\pm{ 0.017}$ & $0.149\pm{ 0.010}$ \\
2.5 & \cellcolor{green!25}\bm{$0.036\pm{ 0.005}$} & $0.050\pm{ 0.009}$ & $0.215\pm{ 0.088}$ & $0.207\pm{ 0.018}$ & $0.213\pm{ 0.026}$ & $0.198\pm{ 0.012}$ \\
3.5 & \cellcolor{green!25}$0.052\pm{ 0.006}$ & \cellcolor{green!25}\bm{$0.049\pm{ 0.007}$} & $0.529\pm{ 0.157}$ & $0.236\pm{ 0.050}$ & $0.222\pm{ 0.036}$ & $0.223\pm{ 0.023}$ \\
4.5 & \cellcolor{green!25}\bm{$0.049\pm{ 0.008}$} & $0.073\pm{ 0.014}$ & $0.647\pm{ 0.135}$ & $0.382\pm{ 0.133}$ & $0.321\pm{ 0.072}$ & $0.383\pm{ 0.069}$ \\
5.5 & \cellcolor{green!25}\bm{$0.065\pm{ 0.012}$} & \cellcolor{green!25}$0.076\pm{ 0.010}$ & $0.653\pm{ 0.134}$ & $0.795\pm{ 0.254}$ & $0.654\pm{ 0.188}$ & $0.841\pm{ 0.173}$ \\
6.5 & \cellcolor{green!25}\bm{$0.077\pm{ 0.012}$} & \cellcolor{green!25}$0.086\pm{ 0.014}$ & $0.829\pm{ 0.177}$ & $1.537\pm{ 0.479}$ & $1.244\pm{ 0.439}$ & $1.652\pm{ 0.376}$ \\
7.5 & \cellcolor{green!25}\bm{$0.096\pm{ 0.025}$} & $0.148\pm{ 0.040}$ & $1.510\pm{ 0.241}$ & $1.705\pm{ 0.586}$ & $1.246\pm{ 0.594}$ & $1.751\pm{ 0.477}$ \\
8.5 & \cellcolor{green!25}\bm{$0.124\pm{ 0.019}$} & $0.283\pm{ 0.037}$ & $1.983\pm{ 0.323}$ & $1.511\pm{ 0.454}$ & $1.825\pm{ 0.662}$ & $1.594\pm{ 0.298}$ \\
\bottomrule
\end{tabular}}
\label{tab:lv-emd}
\end{table}

\clearpage

\subsection{mRNA-only repressilator with parametric family}
\label{app:repr-app-parametric}

\subsubsection{Experiment setup}
\label{app:repr-setup-parametric}
The repressilator is a synthetic genetic circuit designed to function as a biological oscillator, producing sustained periodic fluctuations in the concentrations of its components. It consists of a network of three genes arranged in a cyclic inhibitory loop: each gene encodes a protein that suppresses the expression of the next, with the last gene repressing the first, completing the feedback cycle.

The system’s dynamics can be described by the following stochastic differential equations (SDEs):
\begin{align}
\label{eq:repr-family}
    dX_1 &= \frac{\beta}{1+(X_3/k)^n} - \gamma X_1 + \sigma X_1 dW_1, \nonumber \\
    dX_2 &= \frac{\beta}{1+(X_1/k)^n} - \gamma X_2 + \sigma X_2 dW_2, \\
    dX_3 &= \frac{\beta}{1+(X_2/k)^n} - \gamma X_3 + \sigma X_3 dW_3, \nonumber
\end{align}
where $[dW_1, dW_2, dW_3]$ represents a three-dimensional Brownian motion. The inhibitory structure of the system is evident from the drift terms, which describe how each gene’s expression is repressed by another in the cycle. For our simulations, we set the parameters to $\beta = 10$, $n = 3$, $k = 1$, $\gamma = 1$, and $\sigma = 0.02$. The initial conditions are sampled from uniform distributions: $X_1, X_2 \sim U(1,1.1)$ and $X_3 \sim U(2,2.1)$. To simulate the system, we numerically integrate the SDEs over 19 discrete time points, with sampling rate 0.5 with the Euler-Maruyama scheme(implemented via the \texttt{torchsde} Python package) with 200 samples at each step. Out of these 19 time steps, we use the 10 odd-numbered time steps as training data. The 9 even-numbered time steps are held out for evaluating interpolation performance. To assess forecasting, we simulate one additional time step beyond the final snapshot, using a larger time increment of 1.0, and hold it out as the test point.

\subsubsection{Model family choice}
\label{app:repr_implementation-parametric}
For this experiment, we have access to the data-generating process, as described in \cref{eq:repr-family}. Therefore, we pick as a model family the set of SDEs that satisfy this system of equations, \cref{eq:repr-family}. The learning process involves optimizing the parameters using gradient descent, with a learning rate of 0.05 over 500 epochs. We choose this number of epochs such that in the last 20 epochs $R^2$ increases by less than 0.01. We implemented the model family as a Python class using in the \texttt{torchsde} \citep{li2020scalable} module.

\subsubsection{Forecasting results.}
\label{app:repr-forecasting-parametric}
In this section we further discuss results for the repressilator experiment with parametric model family. In particular, we analyze the EMD and MMD in \cref{tab:repr-param}.

\begin{figure}[t!]
    \centering
    \includegraphics[width=\linewidth]{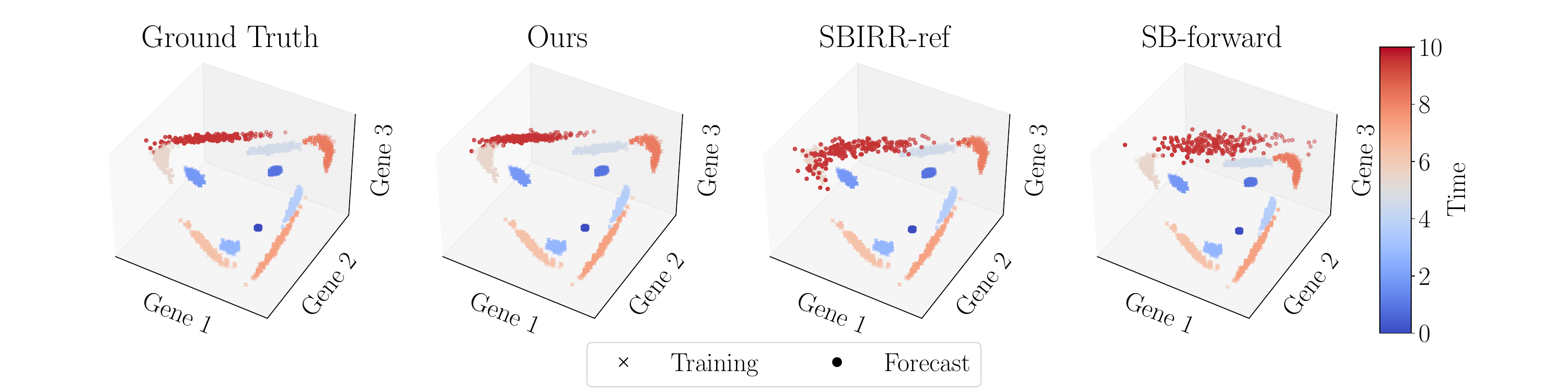}
    \caption{Forecasting task for the repressilator system with parametric model family. }
    \label{fig:repr-parametric-forecast}
\end{figure}

\begin{figure}[t!]
    \centering
    \includegraphics[width=\linewidth]{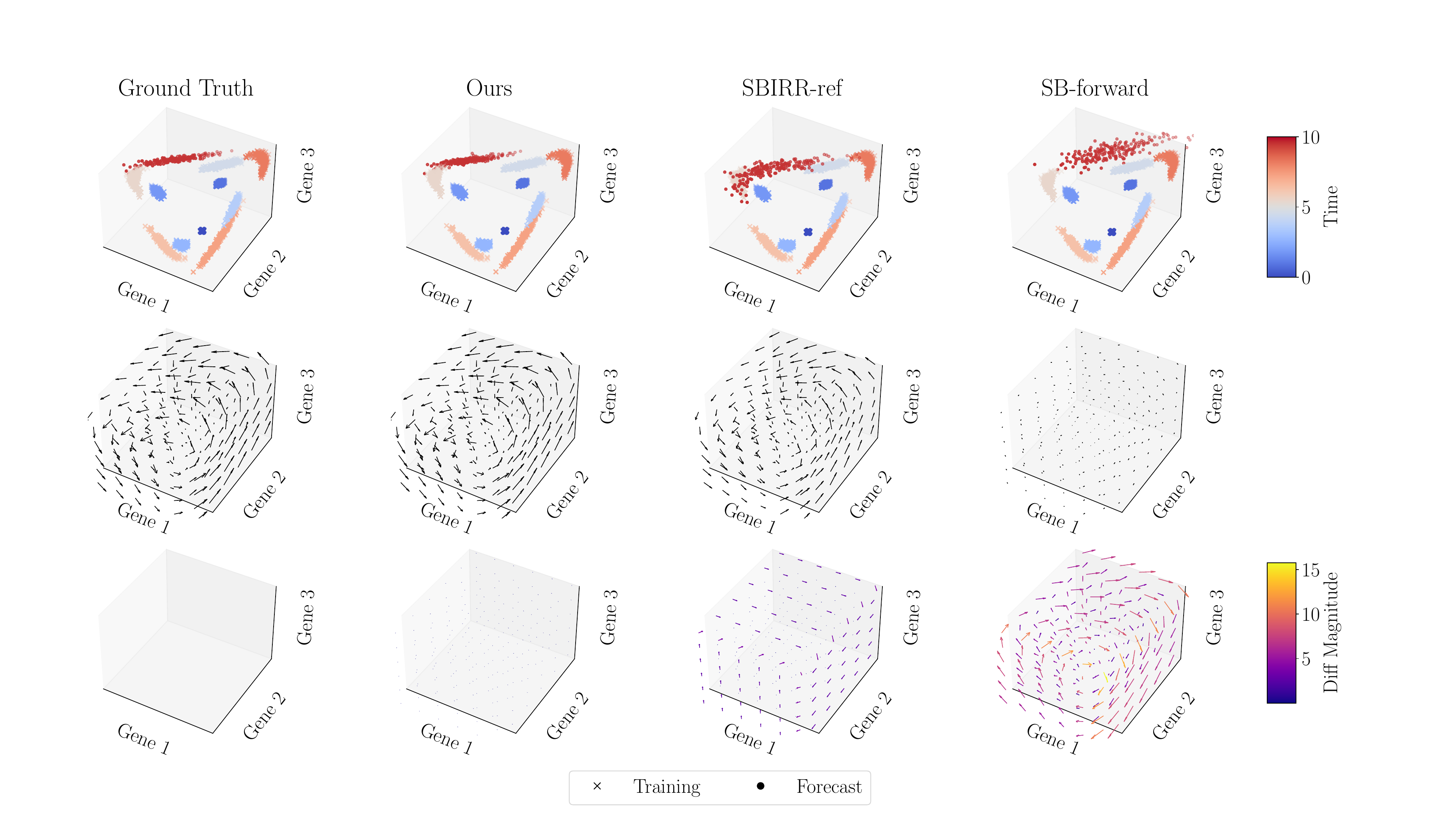}
    \caption{Experimental results for the repressilator system using parametric model as model family.\textit{Top row}: forecast prediction task. A method is successful if the forecast predicted points (in red) match the red points in the ground truth figure. \textit{Middle row:} ground truth vector field (left) and reconstructed vector fields with the three methods. \textit{Bottom row:} Difference between reconstructed vector fields and ground truth. For each point of interest on the grid, we represent the difference between the two vectors with an arrow and color it according to the magnitude of the difference (colorbar to the right).}
    \label{fig:repr-appendix-parametric}
\end{figure}

\begin{table}[ht!]
    \centering
    \caption{Evaluation metric for repressilator when using the parametric model (mean(sd)). Drift was evaluated using MSE on a grid.}
    \begin{tabular}{lccc}
    \hline
    &     & Repressilator (parametric) & \\
    Metric     &  Ours & \texttt{SBIRR-ref} & \texttt{SB-forward} \\
    \hline
    Forecast-MMD &  \cellcolor{green!25}\textbf{0.016 (0.016)} &0.47 (0.34)& 0.42 (0.16) \\
    Forecast-EMD &  \cellcolor{green!25}\textbf{0.19 (0.075)} &1.55 (0.79)& 1.39 (0.62) \\
    Drift & \cellcolor{green!25}\textbf{0.027 (0.063)} & 1.71 (0.20)& 12.9 (0.21) \\
    \hline
    \end{tabular}
    
    \label{tab:repr-param}
\end{table}

In the first row of \cref{tab:repr-param}, we see that for the forecasting task, our method achieves a much lower MMD compared to the two baselines. This quantitatively supports the visual intuition from \cref{fig:repr-parametric-forecast}, where our approach more accurately captures the underlying distribution of the data. In the second row, we see that also using EMD our method significantly outperforms all the baselines. 

\subsubsection{Vector field reconstruction results.}
\label{app:repr-vector-parametric}

In the third row of \cref{tab:repr-param}, we observe that the MSE for the vector field reconstruction task is significantly lower for our method, indicating superior performance in recovering the true dynamics. This is further corroborated by the visualizations in \cref{fig:repr-appendix-parametric}: in the middle row, our reconstructed vector field closely resembles the ground truth, whereas \texttt{SBIRR-ref} exhibits small but notable deviations, and \texttt{SB-forward} fails both in magnitude and direction. The bottom row further reinforces this conclusion, showing that the magnitude of the differences between the reconstructed and true vector fields is substantially larger for the two baselines compared to our method (for which is very close to 0 everywhere on the grid).

\subsubsection{Interpolation results}
\label{app:repr-interpol-parametric}

We next assess interpolation performance for the parametric model, again comparing inferred trajectories to held-out snapshots with MMD and EMD. Visual inspection of \cref{fig:repr-appendix-parametric-interpol} shows that our method interpolates all the validation snapshots very closely, understanding the periodic behavior of this system. \texttt{SBIRR} yields a qualitatively similar plot with more noisy trajectories, while \texttt{DMSB}, \texttt{OT-CFM}, \texttt{SB-CFM}, and \texttt{SF2M} provide poor interpolations --- \texttt{OT-CFM} fails because they are just pairwise interpolation methods and so they “connect” training points, missing the long-term behavior of the system; \texttt{DMSB}, \texttt{SB-CFM} , and \texttt{SF2M} fail because they start drifting outward and end up very far from the actual validation points.  

The metric curves in \cref{fig:repr-metric-interpol} and tables \cref{tab:repr-classic-mmd}–\cref{tab:repr-classic-emd} corroborate these impressions. Across all validation times, our method always achieves the lowest MMD and EMD, whereas the second best method (\texttt{SBIRR} ) is comparable to ours only on one validation time. All the other baselines achieve worse performance. We note that in  \cref{fig:repr-metric-interpol} we set the y-axis limit to 5 to show meaningful comparisons. We did this since \texttt{SF2M} ’s EMD explodes after time 5.5 --- peaking at $\approx 138$ at time 8.5 --- signalling complete geometric mismatch with the target distribution. We also refrain from highlighting the \texttt{SF2M}  cell in green, despite it formally satisfying the coloring criterion. This is because the overlap with the best-performing method arises primarily from \texttt{SF2M} 's extremely large mean and standard deviation, rather than from a meaningful proximity in performance.

\begin{figure}[!ht]
    \centering
    \includegraphics[width=\linewidth]{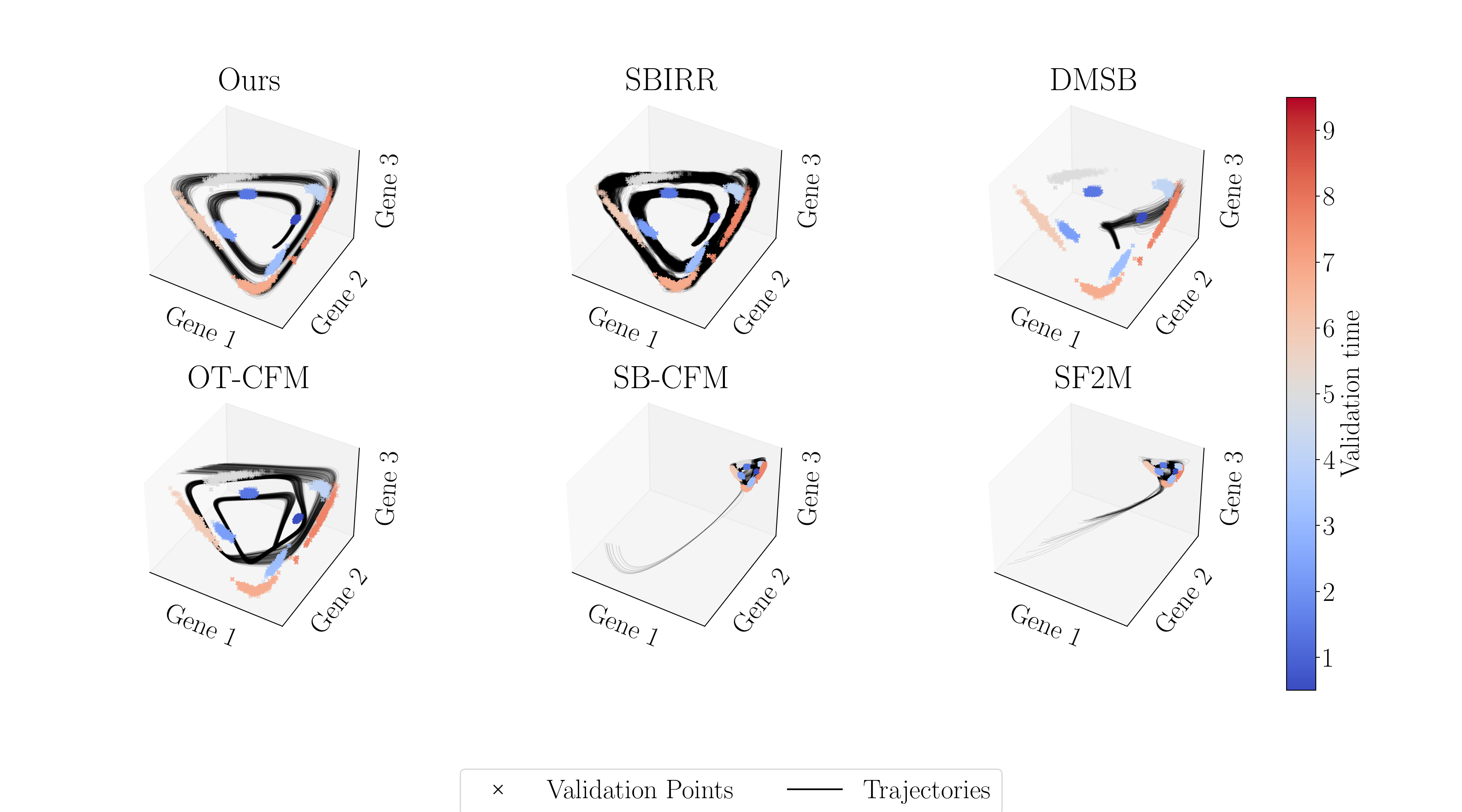}
    
    \caption{Parametric interpolation}
    \label{fig:repr-appendix-parametric-interpol}
\end{figure}

\begin{figure}[!ht]
    \centering
    \includegraphics[width=\linewidth]{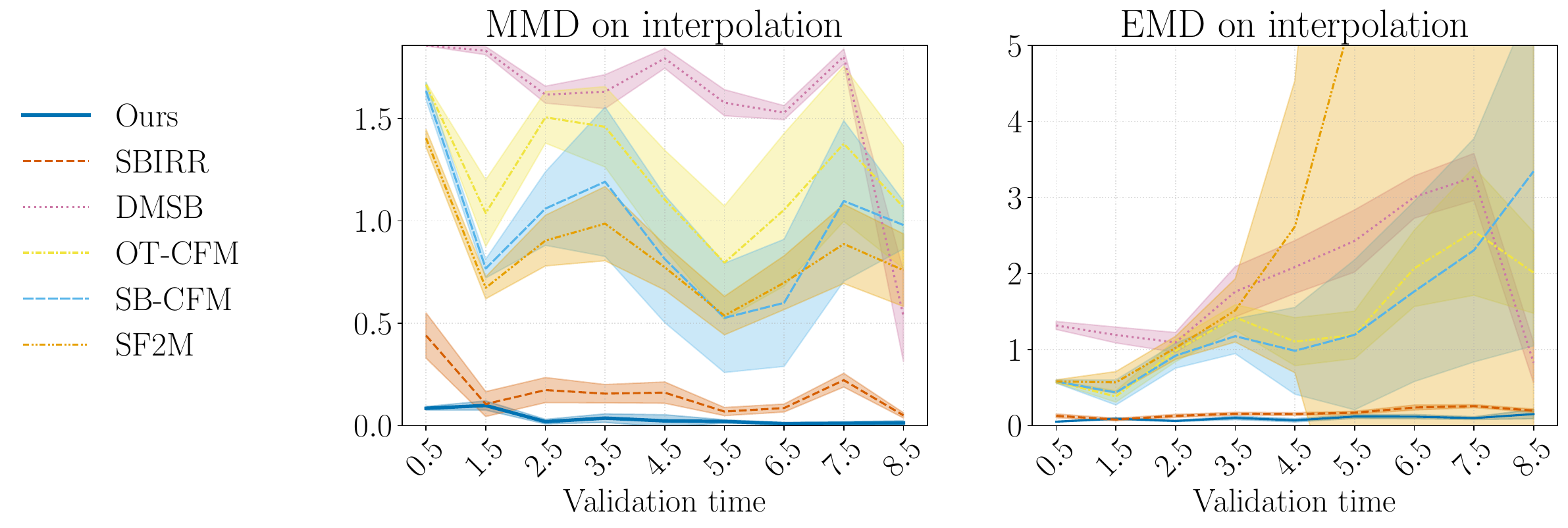}
    \caption{Parametric interpolation metric}
    \label{fig:repr-metric-interpol}
\end{figure}

\begin{table}[ht]
\caption{MMD at each validation point  for Repressilator parametric.}
\centering
\makebox[0pt][c]{
\small
\begin{tabular}{lrrrrrr}
\toprule
Time & Ours & \texttt{SBIRR} & \texttt{DMSB} & \texttt{OT-CFM} & \texttt{SB-CFM} & \texttt{SF2M} \\\midrule
0.5 & \cellcolor{green!25}\bm{$0.085\pm{ 0.009}$} & $0.440\pm{ 0.110}$ & $1.857\pm{ 0.002}$ & $1.666\pm{ 0.010}$ & $1.635\pm{ 0.043}$ & $1.404\pm{ 0.043}$ \\
1.5 & \cellcolor{green!25}\bm{$0.098\pm{ 0.023}$} & \cellcolor{green!25}$0.105\pm{ 0.061}$ & $1.832\pm{ 0.020}$ & $1.038\pm{ 0.163}$ & $0.767\pm{ 0.046}$ & $0.673\pm{ 0.054}$ \\
2.5 & \cellcolor{green!25}\bm{$0.019\pm{ 0.011}$} & $0.174\pm{ 0.061}$ & $1.617\pm{ 0.042}$ & $1.506\pm{ 0.126}$ & $1.060\pm{ 0.179}$ & $0.904\pm{ 0.124}$ \\
3.5 & \cellcolor{green!25}\bm{$0.037\pm{ 0.021}$} & $0.156\pm{ 0.045}$ & $1.631\pm{ 0.083}$ & $1.460\pm{ 0.196}$ & $1.191\pm{ 0.365}$ & $0.987\pm{ 0.181}$ \\
4.5 & \cellcolor{green!25}\bm{$0.023\pm{ 0.030}$} & $0.161\pm{ 0.052}$ & $1.794\pm{ 0.048}$ & $1.104\pm{ 0.242}$ & $0.814\pm{ 0.311}$ & $0.774\pm{ 0.111}$ \\
5.5 & \cellcolor{green!25}\bm{$0.020\pm{ 0.008}$} & $0.069\pm{ 0.020}$ & $1.578\pm{ 0.064}$ & $0.796\pm{ 0.277}$ & $0.526\pm{ 0.267}$ & $0.537\pm{ 0.094}$ \\
6.5 & \cellcolor{green!25}\bm{$0.010\pm{ 0.006}$} & $0.086\pm{ 0.020}$ & $1.529\pm{ 0.034}$ & $1.054\pm{ 0.373}$ & $0.600\pm{ 0.311}$ & $0.698\pm{ 0.131}$ \\
7.5 & \cellcolor{green!25}\bm{$0.011\pm{ 0.008}$} & $0.222\pm{ 0.034}$ & $1.803\pm{ 0.037}$ & $1.377\pm{ 0.379}$ & $1.098\pm{ 0.392}$ & $0.888\pm{ 0.193}$ \\
8.5 & \cellcolor{green!25}\bm{$0.013\pm{ 0.011}$} & $0.050\pm{ 0.013}$ & $0.535\pm{ 0.222}$ & $1.067\pm{ 0.300}$ & $0.980\pm{ 0.117}$ & $0.760\pm{ 0.177}$ \\
\bottomrule
\end{tabular}}
\label{tab:repr-classic-mmd}
\end{table}

\begin{table}[ht]
\caption{EMD at each validation point for Repressilator parametric.}
\centering
\makebox[0pt][c]{
\small
\begin{tabular}{lrrrrrr}
\toprule
Time & Ours & \texttt{SBIRR} & \texttt{DMSB} & \texttt{OT-CFM} & \texttt{SB-CFM} & \texttt{SF2M} \\\midrule
0.5 & \cellcolor{green!25}\bm{$0.052\pm{ 0.004}$} & $0.129\pm{ 0.025}$ & $1.317\pm{ 0.053}$ & $0.578\pm{ 0.017}$ & $0.581\pm{ 0.022}$ & $0.581\pm{ 0.021}$ \\
1.5 & \cellcolor{green!25}$0.091\pm{ 0.013}$ & \cellcolor{green!25}\bm{$0.080\pm{ 0.017}$} & $1.192\pm{ 0.104}$ & $0.384\pm{ 0.075}$ & $0.436\pm{ 0.165}$ & $0.569\pm{ 0.143}$ \\
2.5 & \cellcolor{green!25}\bm{$0.061\pm{ 0.009}$} & $0.128\pm{ 0.021}$ & $1.095\pm{ 0.131}$ & $0.985\pm{ 0.145}$ & $0.920\pm{ 0.164}$ & $1.023\pm{ 0.152}$ \\
3.5 & \cellcolor{green!25}\bm{$0.102\pm{ 0.023}$} & $0.157\pm{ 0.019}$ & $1.762\pm{ 0.330}$ & $1.424\pm{ 0.166}$ & $1.176\pm{ 0.229}$ & $1.515\pm{ 0.416}$ \\
4.5 & \cellcolor{green!25}\bm{$0.070\pm{ 0.021}$} & $0.153\pm{ 0.018}$ & $2.086\pm{ 0.348}$ & $1.104\pm{ 0.316}$ & $0.984\pm{ 0.571}$ & $2.618\pm{ 1.923}$ \\
5.5 & \cellcolor{green!25}\bm{$0.120\pm{ 0.025}$} & $0.170\pm{ 0.020}$ & $2.428\pm{ 0.409}$ & $1.193\pm{ 0.312}$ & $1.194\pm{ 0.986}$ & $5.505\pm{ 7.561}$ \\
6.5 & \cellcolor{green!25}\bm{$0.118\pm{ 0.028}$} & $0.238\pm{ 0.032}$ & $3.006\pm{ 0.279}$ & $2.068\pm{ 0.502}$ & $1.765\pm{ 1.185}$ & $14.974\pm{ 27.778}$ \\
7.5 & \cellcolor{green!25}\bm{$0.099\pm{ 0.018}$} & $0.257\pm{ 0.020}$ & $3.270\pm{ 0.310}$ & $2.556\pm{ 0.842}$ & $2.306\pm{ 1.469}$ & $44.051\pm{ 96.243}$ \\
8.5 & \cellcolor{green!25}\bm{$0.152\pm{ 0.061}$} & \cellcolor{green!25}$0.198\pm{ 0.021}$ & $0.828\pm{ 0.266}$ & $2.011\pm{ 0.534}$ & $3.345\pm{ 2.295}$ & $137.973\pm{ 328.155}$ \\
\bottomrule
\end{tabular}}
\label{tab:repr-classic-emd}
\end{table}

\clearpage

\subsection{mRNA-only repressilator with semiparametric family}
\label{app:repr-app-semiparametric}

\subsubsection{Experiment setup}
\label{app:repr-setup-semiparametric}
The experimental setup is the same as the one for the repressilator with the parametric model choice, as discussed in \cref{app:repr-setup-parametric}.

\subsubsection{Model family choice}
\label{app:repr_implementation-semiparametric}
In this experiment, we do not assume that we know the full functional form as in \cref{eq:repr-family}, but only up to an unknown activation function $f_{\bm \theta} : \R_{+}^{3}\to [0,1]^3$, that encodes the regulation among the three genes. In particular, we consider the following model:
\begin{equation}
    d\responseat{\timet}=\bm{M}f_{\bm \theta}(\responseat{\timet})-\bm{L} \responseat{\timet}+\bm{G} \diag(\responseat{\timet})d\bm{W}_{\timet}
    \label{eq:mlpactive}
\end{equation}
where $\bm{M}$ is a diagonal matrix of (positive) maximum production rate, $\bm{L}$ is a diagonal matrix of (positive) degradation rate, $\bm{G}$ is a diagonal matrix of (positive) volatilities, all unknown (parameterized by their logarithm). We also parameterize the activation function using an MLP with three hidden layers of [32, 64, 32] hidden neurons each, ReLU activation, and one final sigmoid layer. 

\subsubsection{Forecasting results.}
\label{app:repr-forecasting-semiparametric}
For what concerns the experiment with the semiparametric model family, we can see in the first row of \cref{tab:repr-mlp} that also with this model our method achieves a substantially lower MMD compared to the two baselines. This aligns with the visual evidence from \cref{fig:repr-main} in the main text (top row), where our method’s predicted points (in red) more closely match the ground truth. In the second row of \cref{tab:repr-mlp}, we see that EMD results are aligned with the MMD ones. 

\subsubsection{Vector field reconstruction results.}
\label{app:repr-vector-semiparametric}

In the third row of \cref{tab:repr-mlp}, we see that for this model choice our method and \texttt{SBIRR-ref} achieve similar results, whereas \texttt{SB-forward} exhibits much higher MSE. \Cref{fig:repr-appendix-semiparametric} confirms this intuition: our reconstructed vector field and the one for \texttt{SBIRR-ref} are quite similar and not too different from the ground truth, whereas \texttt{SB-forward} performs particularly poorly, failing to recover both the direction and magnitude of the vector field. 

\begin{table}[ht!]
    \centering
    \caption{Evaluation metric for Repressilator using MLP activation function (mean(sd)). Drift was evaluated using MSE on a grid, while forecast was evaluated using MMD with RBF kernel and length scale 1 as well as EMD.}
    \begin{tabular}{lccc}
    \hline
    &     & Repressilator (semiparametric) & \\
    Metric     &  Ours & \texttt{SBIRR-ref} & \texttt{SB-forward} \\
    \hline
    Forecast-MMD & \cellcolor{green!25}\textbf{0.077 (0.031)}& 0.29 (0.11) & 1.15 (0.33) \\
    Forecast-EMD & \cellcolor{green!25}\textbf{0.35 (0.091)}& 1.18 (0.44) & 1.16 (0.33) \\
    Drift & \cellcolor{green!25}\textbf{6.25 (0.37)} & \cellcolor{green!25}\textbf{7.85 (1.85)} & 12.00 (0.74)  \\
    \hline
    \end{tabular}
    \label{tab:repr-mlp}
\end{table}

\begin{figure}[h!]
    \centering
    \includegraphics[width=\linewidth]{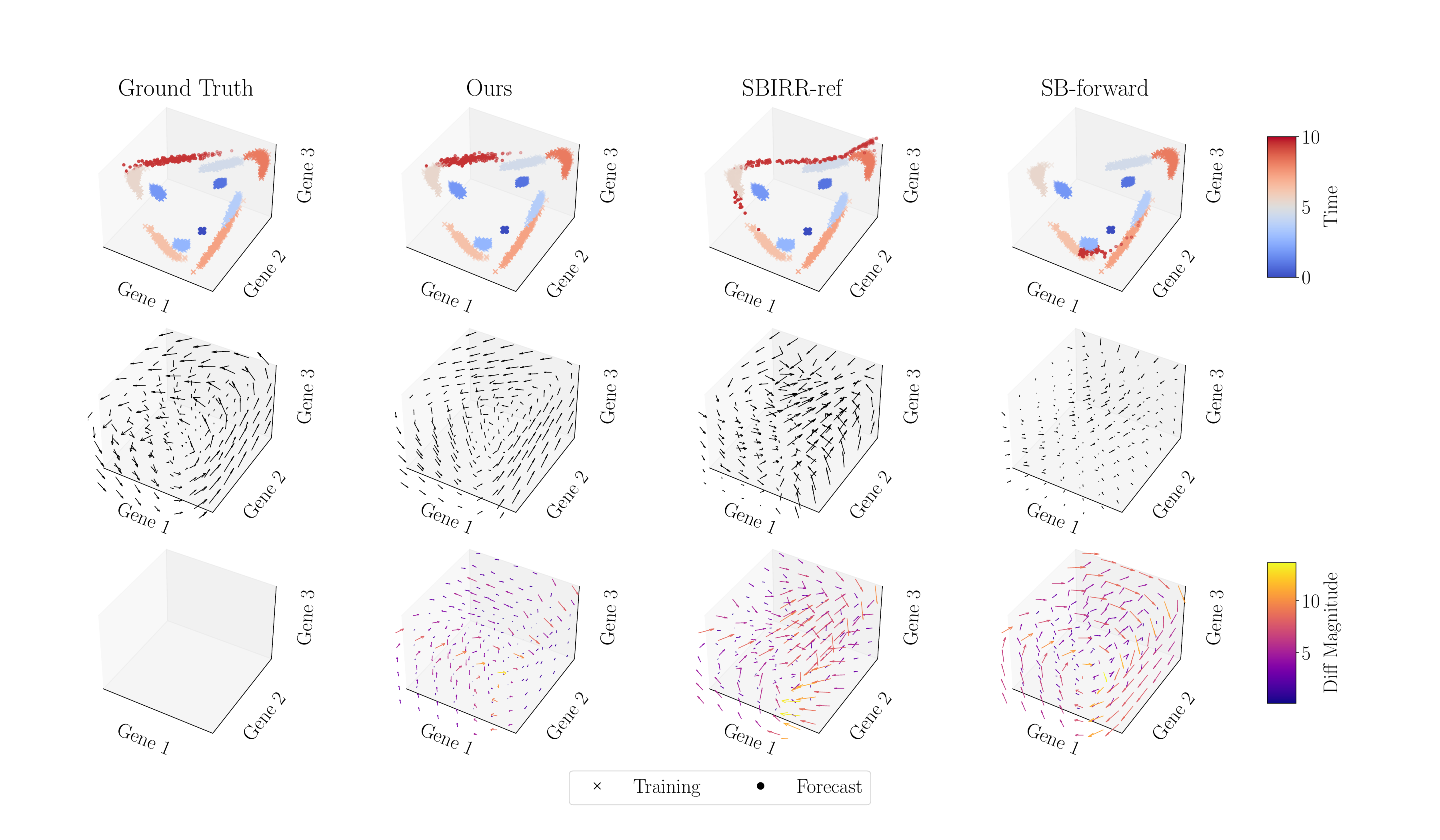}
    \caption{Experimental results for the repressilator system using semiparametric model as model family. \textit{Top row}: forecast prediction task. A method is successful if the forecast predicted points (in red) match the red points in the ground truth figure. \textit{Middle row:} ground truth vector field (left) and reconstructed vector fields with the three methods. \textit{Bottom row:} Difference between reconstructed vector fields and ground truth. For each point of interest on the grid, we represent the difference between the two vectors with an arrow and color it according to the magnitude of the difference (colorbar to the right).}
    \label{fig:repr-appendix-semiparametric}
\end{figure}

\subsubsection{Interpolation results}
\label{app:repr-interpol-semiparametric}

We now evaluate interpolation performance in the more realistic semiparametric setting, where neither our method nor \texttt{SBIRR} have access to the true data-generating process. Instead, both methods rely on the same semiparametric reference family from \cref{eq:mlpactive}, introducing a meaningful model mismatch that more closely reflects real-world conditions. As shown in \cref{fig:repr-appendix-semiparametric-interpol}, interpolation quality for our method and \texttt{SBIRR} is very similar to the parametric case, as they both are still very good at interpolating all the validation snapshots. The remaining baselines are not affected by this modeling choice, so the trajectories are exactly as in \cref{fig:repr-appendix-parametric-interpol}.
The quantitative results in \cref{fig:repr-semiparametric-metric-interpol} and tables \cref{tab:repr-mlp-mmd}–\cref{tab:repr-mlp-emd} reinforce these trends. Although all methods exhibit increased error relative to the parametric setting, our method continues to outperform all baselines across nearly all validation times. In terms of MMD, we are the best method at six of nine time points; \texttt{SBIRR} performs similarly at three time points, but is consistently worse on the rest. EMD results are even more decisive: our method achieves the lowest error at every time point.

\begin{figure}
    \centering
    \includegraphics[width=\linewidth]{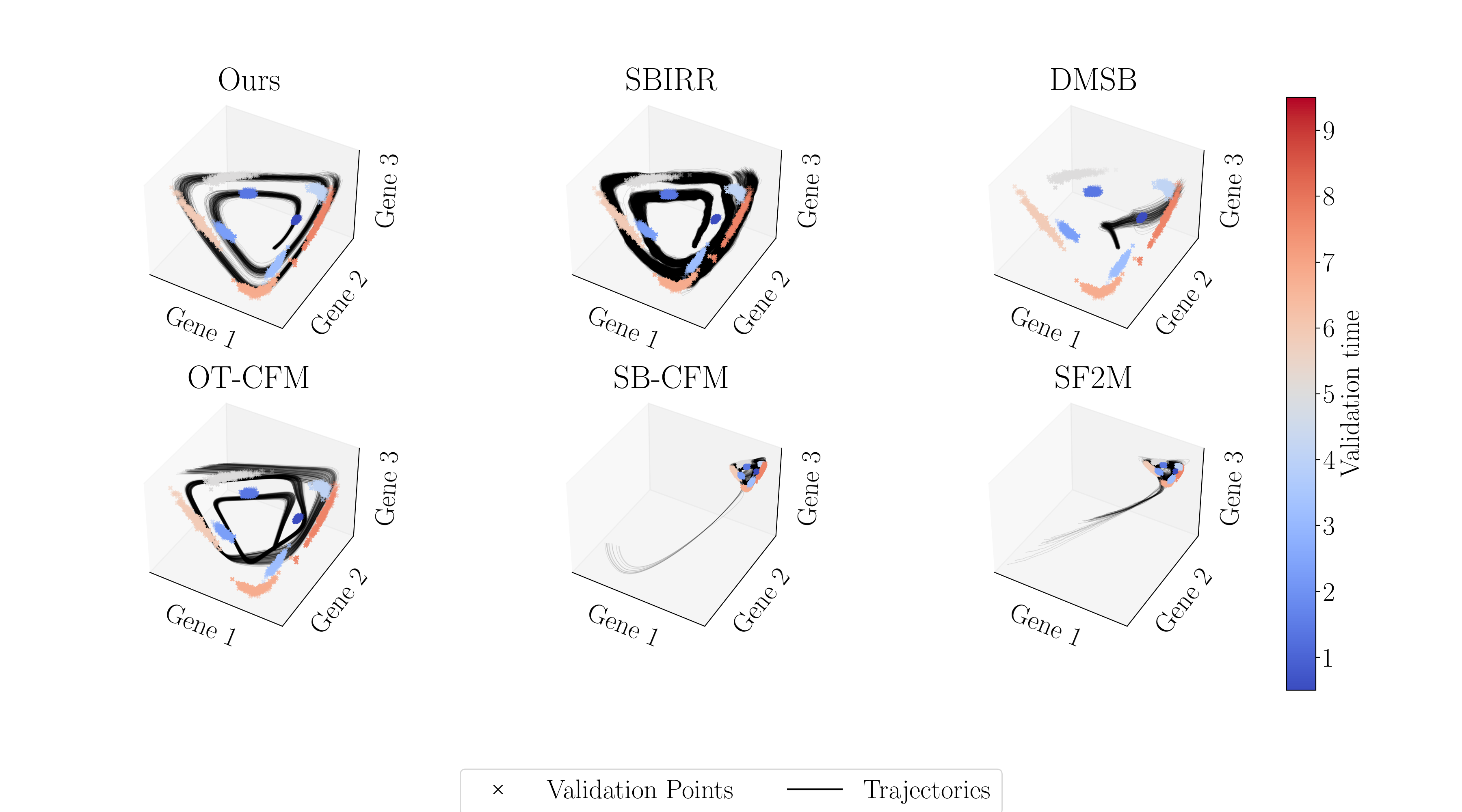}
    
    \caption{Semiparametric interpolation of repressilator system.}
    \label{fig:repr-appendix-semiparametric-interpol}
\end{figure}

\begin{figure}
    \centering

    \includegraphics[width=\linewidth]{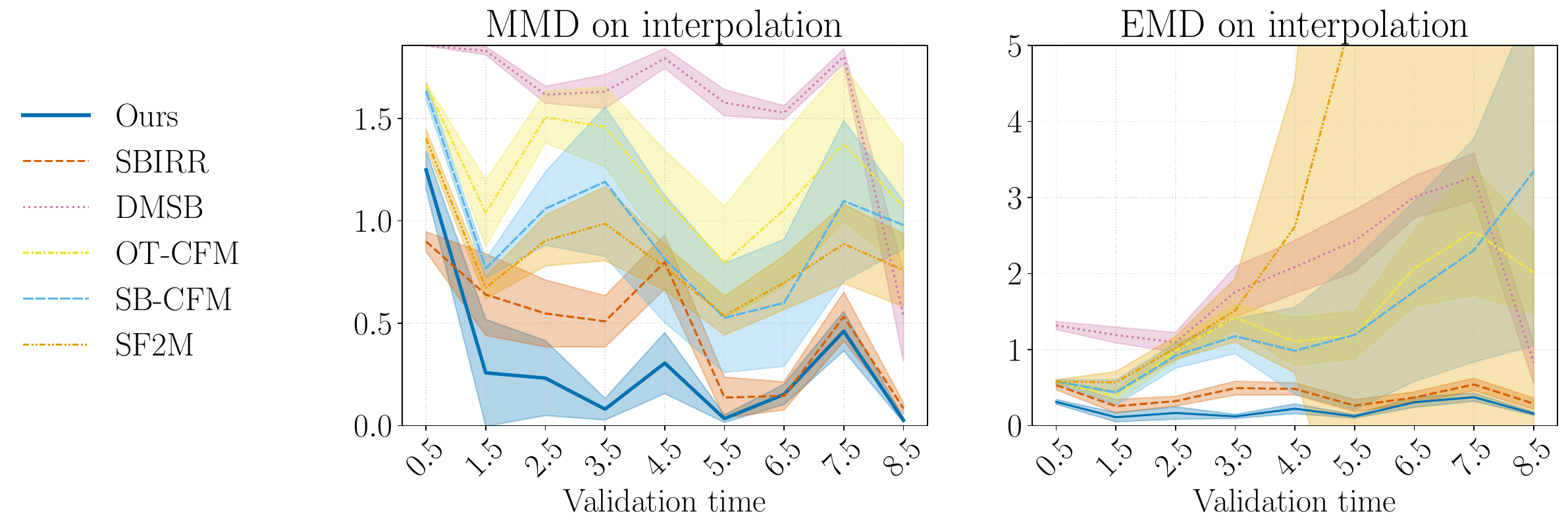}
    \caption{Metrics for Semiparametric interpolation of repressilator system.}
    \label{fig:repr-semiparametric-metric-interpol}
\end{figure}

\begin{table}[ht]
\caption{MMD at each validation point for Repressilator with parametric model family.}
\centering
\makebox[0pt][c]{
\small
\begin{tabular}{lrrrrrr}
\toprule
Time & Ours & \texttt{SBIRR} & \texttt{DMSB} & \texttt{OT-CFM} & \texttt{SB-CFM} & \texttt{SF2M} \\\midrule
0.5 & $1.249\pm{ 0.094}$ & \cellcolor{green!25}\bm{$0.899\pm{ 0.048}$} & $1.857\pm{ 0.002}$ & $1.666\pm{ 0.010}$ & $1.635\pm{ 0.043}$ & $1.404\pm{ 0.043}$ \\
1.5 & \cellcolor{green!25}\bm{$0.258\pm{ 0.261}$} & $0.640\pm{ 0.200}$ & $1.832\pm{ 0.020}$ & $1.038\pm{ 0.163}$ & $0.767\pm{ 0.046}$ & $0.673\pm{ 0.054}$ \\
2.5 & \cellcolor{green!25}\bm{$0.232\pm{ 0.183}$} & $0.548\pm{ 0.164}$ & $1.617\pm{ 0.042}$ & $1.506\pm{ 0.126}$ & $1.060\pm{ 0.179}$ & $0.904\pm{ 0.124}$ \\
3.5 & \cellcolor{green!25}\bm{$0.080\pm{ 0.053}$} & $0.509\pm{ 0.125}$ & $1.631\pm{ 0.083}$ & $1.460\pm{ 0.196}$ & $1.191\pm{ 0.365}$ & $0.987\pm{ 0.181}$ \\
4.5 & \cellcolor{green!25}\bm{$0.305\pm{ 0.149}$} & $0.799\pm{ 0.132}$ & $1.794\pm{ 0.048}$ & $1.104\pm{ 0.242}$ & $0.814\pm{ 0.311}$ & $0.774\pm{ 0.111}$ \\
5.5 & \cellcolor{green!25}\bm{$0.034\pm{ 0.018}$} & $0.137\pm{ 0.100}$ & $1.578\pm{ 0.064}$ & $0.796\pm{ 0.277}$ & $0.526\pm{ 0.267}$ & $0.537\pm{ 0.094}$ \\
6.5 & \cellcolor{green!25}$0.153\pm{ 0.050}$ & \cellcolor{green!25}\bm{$0.145\pm{ 0.069}$} & $1.529\pm{ 0.034}$ & $1.054\pm{ 0.373}$ & $0.600\pm{ 0.311}$ & $0.698\pm{ 0.131}$ \\
7.5 & \cellcolor{green!25}\bm{$0.462\pm{ 0.096}$} & \cellcolor{green!25}$0.532\pm{ 0.120}$ & $1.803\pm{ 0.037}$ & $1.377\pm{ 0.379}$ & $1.098\pm{ 0.392}$ & $0.888\pm{ 0.193}$ \\
8.5 & \cellcolor{green!25}\bm{$0.026\pm{ 0.011}$} & $0.084\pm{ 0.028}$ & $0.535\pm{ 0.222}$ & $1.067\pm{ 0.300}$ & $0.980\pm{ 0.117}$ & $0.760\pm{ 0.177}$ \\
\bottomrule
\end{tabular}}
\label{tab:repr-mlp-mmd}

\end{table}

\begin{table}[ht]
\caption{EMD at each validation point for Repressilator with parametric model family.}
\centering
\makebox[0pt][c]{
\small
\begin{tabular}{lrrrrrr}
\toprule
Time & Ours & \texttt{SBIRR} & \texttt{DMSB} & \texttt{OT-CFM} & \texttt{SB-CFM} & \texttt{SF2M} \\\midrule
0.5 & \cellcolor{green!25}\bm{$0.312\pm{ 0.032}$} & $0.533\pm{ 0.061}$ & $1.317\pm{ 0.053}$ & $0.578\pm{ 0.017}$ & $0.581\pm{ 0.022}$ & $0.581\pm{ 0.021}$ \\
1.5 & \cellcolor{green!25}\bm{$0.111\pm{ 0.060}$} & $0.255\pm{ 0.087}$ & $1.192\pm{ 0.104}$ & $0.384\pm{ 0.075}$ & $0.436\pm{ 0.165}$ & $0.569\pm{ 0.143}$ \\
2.5 & \cellcolor{green!25}\bm{$0.167\pm{ 0.082}$} & $0.322\pm{ 0.065}$ & $1.095\pm{ 0.131}$ & $0.985\pm{ 0.145}$ & $0.920\pm{ 0.164}$ & $1.023\pm{ 0.152}$ \\
3.5 & \cellcolor{green!25}\bm{$0.121\pm{ 0.027}$} & $0.493\pm{ 0.091}$ & $1.762\pm{ 0.330}$ & $1.424\pm{ 0.166}$ & $1.176\pm{ 0.229}$ & $1.515\pm{ 0.416}$ \\
4.5 & \cellcolor{green!25}\bm{$0.222\pm{ 0.064}$} & $0.483\pm{ 0.078}$ & $2.086\pm{ 0.348}$ & $1.104\pm{ 0.316}$ & $0.984\pm{ 0.571}$ & $2.618\pm{ 1.923}$ \\
5.5 & \cellcolor{green!25}\bm{$0.122\pm{ 0.023}$} & $0.263\pm{ 0.079}$ & $2.428\pm{ 0.409}$ & $1.193\pm{ 0.312}$ & $1.194\pm{ 0.986}$ & $5.505\pm{ 7.561}$ \\
6.5 & \cellcolor{green!25}\bm{$0.307\pm{ 0.067}$} & \cellcolor{green!25}$0.369\pm{ 0.075}$ & $3.006\pm{ 0.279}$ & $2.068\pm{ 0.502}$ & $1.765\pm{ 1.185}$ & $14.974\pm{ 27.778}$ \\
7.5 & \cellcolor{green!25}\bm{$0.374\pm{ 0.053}$} & $0.542\pm{ 0.081}$ & $3.270\pm{ 0.310}$ & $2.556\pm{ 0.842}$ & $2.306\pm{ 1.469}$ & $44.051\pm{ 96.243}$ \\
8.5 & \cellcolor{green!25}\bm{$0.157\pm{ 0.023}$} & $0.290\pm{ 0.074}$ & $0.828\pm{ 0.266}$ & $2.011\pm{ 0.534}$ & $3.345\pm{ 2.295}$ & $137.973\pm{ 328.155}$ \\
\bottomrule
\end{tabular}}
\label{tab:repr-mlp-emd}

\end{table}

\clearpage

\subsection{mRNA-protein repressilator}
\label{app:repr-app-missing}
\subsubsection{Experiment setup} 
\label{app:repr-missing-setup}
In \cref{app:repr-setup-parametric} we introduced the repressilator system as a system of SDEs governing changes in mRNA concentration. A more complete model for this system takes also into account protein levels. Indeed, each gene produces a protein that represses the next gene's expression, with the last one repressing the first. So proteins play a big role in the repressilator feedback loop. And this is why scientists often consider a more complex version of this system, that evolves according to the following SDEs:
\begin{equation}
    \begin{aligned}
    dX_1&=\alpha + \frac{\beta}{1+(Y_3/k)^n}-\gamma X_1+\sigma X_1dW_1 \\
    dX_2&=\alpha +\frac{\beta}{1+(Y_1/k)^n}-\gamma X_2+\sigma X_2dW_2\\
    dX_3&=\alpha +\frac{\beta}{1+(Y_2/k)^n}-\gamma X_3+\sigma X_2dW_3\\
    dY_1&=\beta_p X_1-\gamma_p Y_1+\sigma Y_1dW_4\\
    dY_2&=\beta_p X_2-\gamma_p Y_2+\sigma Y_2dW_5\\
    dY_3&=\beta_p X_3-\gamma_p Y_3+\sigma Y_3dW_6
\end{aligned}
\label{eq:repr-family-protein}
\end{equation}

where $[dW_1,dW_2,dW_3, dW_4, dW_5, dW_6]$ is a 6D Brownian motion. $X_1,X_2,X_3$ represents the mRNA levels while $Y_1, Y_2, Y_3$ are the corresponding proteins. As explained above, the actual system regulation is now mediated by proteins rather than mRNA themselves. 

To obtain data, we fix the following parameters: $\alpha = 10^{-5}, \beta = 10, n=3, k = 1, \gamma = 1,\beta_p=1, \gamma_p = 1, \sigma = 0.02$. We start the dynamics with initial distribution $X_1, X_2\sim U(1,1.1)$ and $X_3\sim U(2,2.1)$, while $Y_i\sim U(0,0.1)$. We simulate the SDEs for 10 instants of time. 

To simulate the system, we numerically integrate the SDEs over 19 discrete time points, with sampling rate 0.5 with the Euler-Maruyama scheme(implemented via the \texttt{torchsde} Python package) with 200 samples at each step. Out of these 19, we used 10 odd numbered time steps as training and even numbered steps as validation for interpolation task. We further simulate one step further with time increment of 1 to hold out as test point for forecasting. In all these steps we only took $X_i$ as observations

\subsubsection{Model family choice}
\label{app:repr_implementation_protein}
\textbf{Our method.} For this experiment, we have access to the data-generating process, as described in \cref{eq:repr-family-protein}. Therefore, we select our model family to be the set of SDEs that satisfy this system of equations, \cref{eq:repr-family-protein}. We initialize the missing dimensions at all 0. The learning process involves optimizing the parameters using gradient descent, with a learning rate of 0.05 over 500 epochs. We choose this number of epochs such that in the last 20 epochs $R^2$ increases by less than 0.01.

\textbf{A Note on Baselines.} Since the two forecasting baseline methods that we consider cannot handle incomplete state observations we cannot use them to fit \cref{eq:repr-family-protein}. Instead, we fit a simpler mRNA-only model as described in \cref{eq:repr-family}. We do the same for \texttt{SBIRR} in the interpolation experiment.

\subsubsection{Forecasting results}
\label{app:repr-missing-forecast}
In this section we give more detail on the forecasting results for mRNA-protein repressilator. We provide numerical results in EMD and MMD for forecasting in \cref{tab:mlp_repres_missing}. Our method outperform baseline by a large margin, mostly because the correct account of the missing protein observation. Since the two baselines cannot make vector fields in correct dimension we did not compare vector field reconstruction. 

\begin{table}[!ht]
    \centering
    \caption{Evaluation metric for Repressilator forecasting with missing protein observations.}
    \begin{tabular}{lccc}
    \hline
    &     & Repressilator (with missing protein) & \\
    Metric     &  Ours & \texttt{SBIRR-ref} & \texttt{SB-forward} \\
    \hline 
    Forecast-MMD & \cellcolor{green!25}\textbf{0.0075 (0.0047)}& 1.26 (0.06) & 1.22  (0.09) \\
    Forecast-EMD & \cellcolor{green!25}\textbf{0.26 (0.042)}& 6.36 (0.51) & 7.24  (0.48) \\
    \hline
    \end{tabular}
    \label{tab:mlp_repres_missing}
\end{table}

\subsubsection{Interpolation resuts}
\label{app:repr-missing-interpol}

We finally consider the interpolation task for this more challenging incomplete state observation setting. As shown in \cref{fig:repres-missing-protein-interpol}, despite the mismatch between the observed variables and the true system state, our method is able to faithfully reconstruct the trajectories. It successfully captures the geometry of the limit cycle and aligns well with the validation snapshots. \texttt{SBIRR} also performs reasonably, although its trajectories are more dispersed. All remaining baselines fail to track the correct dynamics: \texttt{DMSB}  and \texttt{SF2M}  exhibit severe trajectory drift, while \texttt{OT-CFM}  and \texttt{SB-CFM}  overly simplify the structure, failing to represent the circular flow of the system.
The lines in \cref{fig:repres-missing-protein-interpol-metric} and the numbers in \cref{tab:repr-missingobs-mmd}–\cref{tab:repr-missingobs-emd} confirm these findings. Our method consistently achieves the lowest MMD and EMD across all validation times. In contrast, baseline methods show significantly higher EMD and MMD throughout, and their confidence intervals do not overlap with ours unless for \texttt{SBIRR} for EMD in one validation point. As in the previous Repressilator experiment, we cap the y-axis at 5 in \cref{fig:repres-missing-protein-interpol-metric} to enable meaningful visual comparisons across methods. This is necessary because both \texttt{SF2M}  and \texttt{SB-CFM}  exhibit rapidly increasing EMD values after time 3.5. In particular, we observe that for certain random seeds, the inferred trajectories diverge in the wrong direction early on and continue along that path, resulting in large distributional mismatch. Accordingly, we also refrain from highlighting the corresponding cells in green, even when the coloring criterion is formally met, as the overlap with the best method arises from the extremely large variance.

\begin{figure}[!ht]
    \centering
    \includegraphics[width=\linewidth]{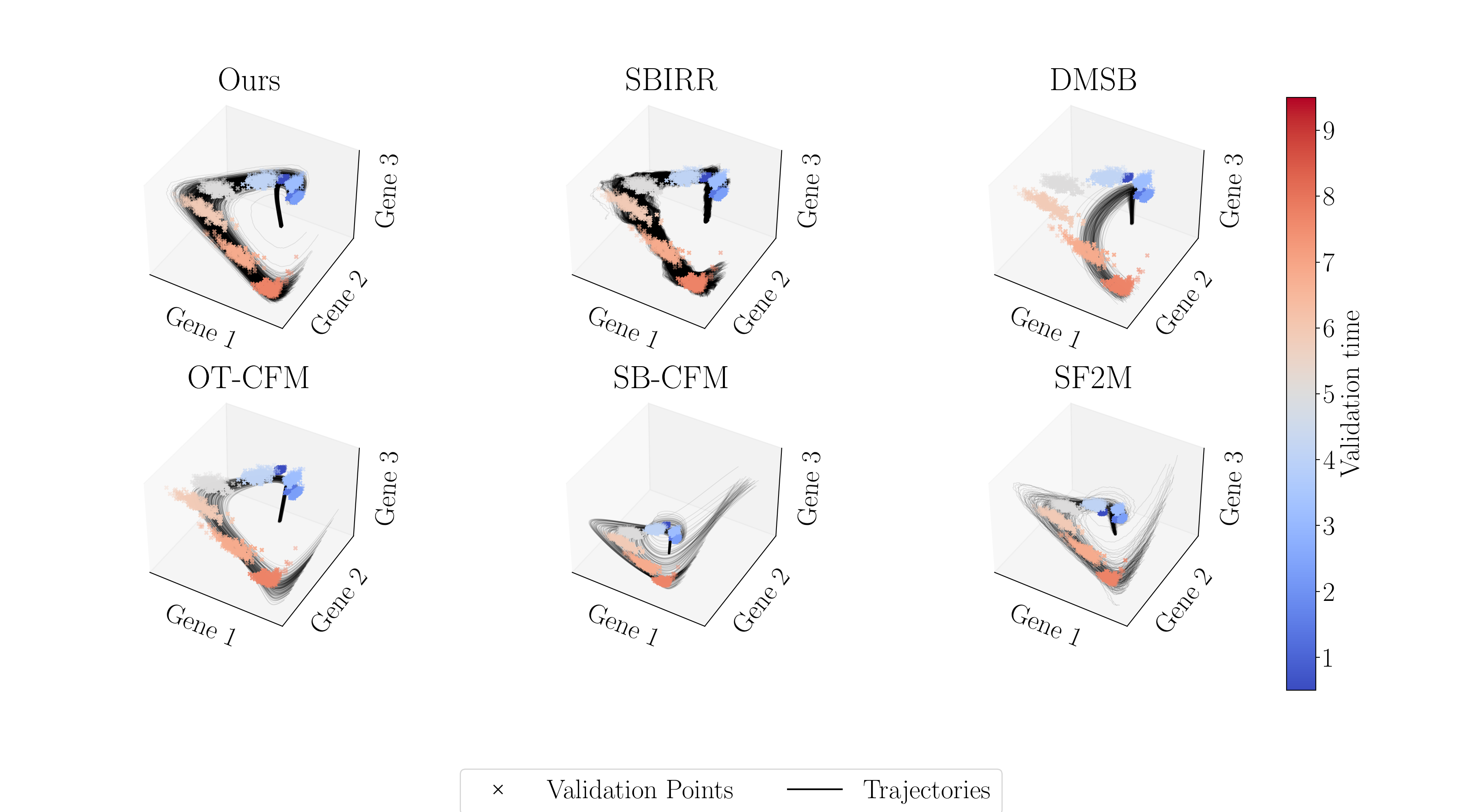}

    \caption{Interpolation of repressilator system with missing protein.}
    \label{fig:repres-missing-protein-interpol}
\end{figure}

\begin{figure}[!ht]
    \centering
    \includegraphics[width=\linewidth]{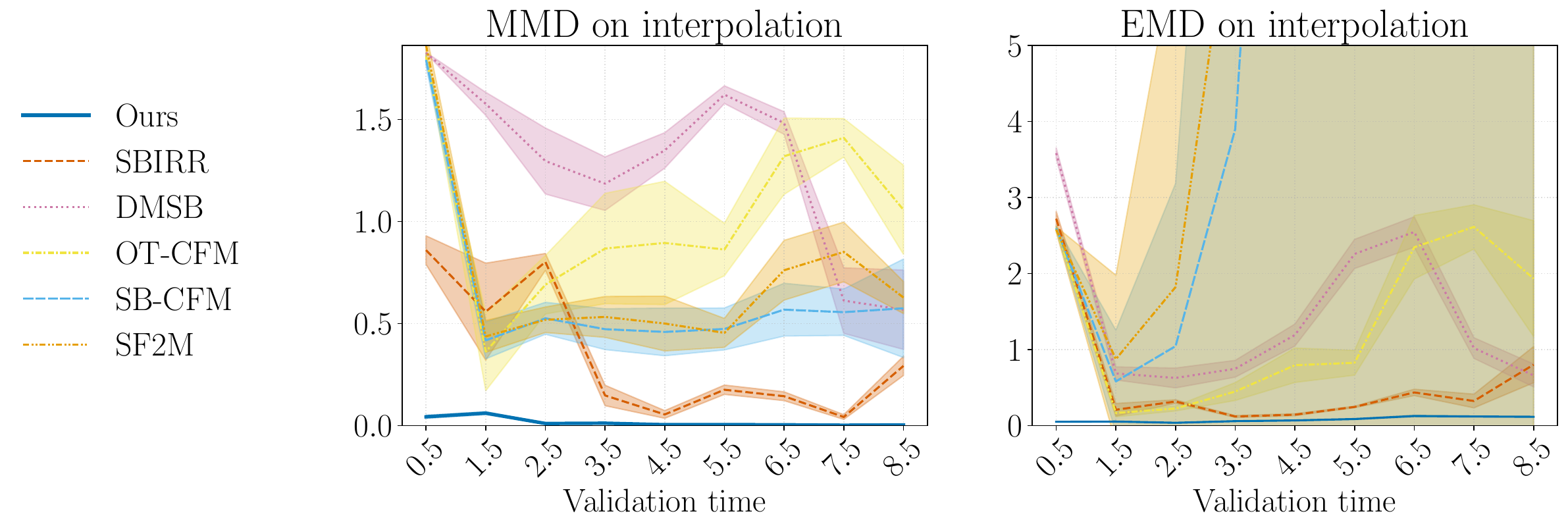}
    \caption{Metrics for Interpolation of repressilator system with missing protein.}
    \label{fig:repres-missing-protein-interpol-metric}
\end{figure}

\begin{table}[ht]
\caption{MMD at each validation point  for Repressilator with incomplete state observations.}
\centering
\makebox[0pt][c]{
\small
\begin{tabular}{lrrrrrr}
\toprule
Time & Ours & \texttt{SBIRR} & \texttt{DMSB} & \texttt{OT-CFM} & \texttt{SB-CFM} & \texttt{SF2M} \\\midrule
0.5 & \cellcolor{green!25}\bm{$0.043\pm{ 0.007}$} & $0.860\pm{ 0.070}$ & $1.826\pm{ 0.002}$ & $1.807\pm{ 0.010}$ & $1.791\pm{ 0.031}$ & $1.862\pm{ 0.066}$ \\
1.5 & \cellcolor{green!25}\bm{$0.061\pm{ 0.007}$} & $0.558\pm{ 0.238}$ & $1.576\pm{ 0.057}$ & $0.362\pm{ 0.192}$ & $0.418\pm{ 0.091}$ & $0.437\pm{ 0.078}$ \\
2.5 & \cellcolor{green!25}\bm{$0.011\pm{ 0.001}$} & $0.802\pm{ 0.042}$ & $1.296\pm{ 0.162}$ & $0.690\pm{ 0.143}$ & $0.526\pm{ 0.078}$ & $0.519\pm{ 0.063}$ \\
3.5 & \cellcolor{green!25}\bm{$0.012\pm{ 0.003}$} & $0.148\pm{ 0.050}$ & $1.185\pm{ 0.131}$ & $0.868\pm{ 0.270}$ & $0.472\pm{ 0.100}$ & $0.532\pm{ 0.100}$ \\
4.5 & \cellcolor{green!25}\bm{$0.005\pm{ 0.002}$} & $0.055\pm{ 0.019}$ & $1.348\pm{ 0.087}$ & $0.895\pm{ 0.302}$ & $0.459\pm{ 0.117}$ & $0.500\pm{ 0.134}$ \\
5.5 & \cellcolor{green!25}\bm{$0.005\pm{ 0.001}$} & $0.176\pm{ 0.023}$ & $1.621\pm{ 0.043}$ & $0.863\pm{ 0.128}$ & $0.474\pm{ 0.103}$ & $0.454\pm{ 0.071}$ \\
6.5 & \cellcolor{green!25}\bm{$0.004\pm{ 0.002}$} & $0.144\pm{ 0.022}$ & $1.482\pm{ 0.055}$ & $1.320\pm{ 0.187}$ & $0.568\pm{ 0.129}$ & $0.761\pm{ 0.147}$ \\
7.5 & \cellcolor{green!25}\bm{$0.002\pm{ 0.001}$} & $0.042\pm{ 0.012}$ & $0.612\pm{ 0.160}$ & $1.410\pm{ 0.094}$ & $0.556\pm{ 0.114}$ & $0.851\pm{ 0.146}$ \\
8.5 & \cellcolor{green!25}\bm{$0.004\pm{ 0.002}$} & $0.293\pm{ 0.047}$ & $0.568\pm{ 0.194}$ & $1.058\pm{ 0.218}$ & $0.575\pm{ 0.241}$ & $0.628\pm{ 0.079}$ \\
\bottomrule
\end{tabular}}
\label{tab:repr-missingobs-mmd}

\end{table}

\begin{table}[ht]
\caption{EMD at each validation point for Repressilator with incomplete state observations.}
\centering
\makebox[0pt][c]{
\small
\begin{tabular}{lrrrrrr}
\toprule
Time & Ours & \texttt{SBIRR} & \texttt{DMSB} & \texttt{OT-CFM} & \texttt{SB-CFM} & \texttt{SF2M} \\\midrule
0.5 & \cellcolor{green!25}\bm{$0.051\pm{ 0.003}$} & $2.721\pm{ 0.097}$ & $3.583\pm{ 0.071}$ & $2.596\pm{ 0.028}$ & $2.596\pm{ 0.029}$ & $2.578\pm{ 0.036}$ \\
1.5 & \cellcolor{green!25}\bm{$0.054\pm{ 0.002}$} & $0.209\pm{ 0.083}$ & $0.686\pm{ 0.086}$ & $0.166\pm{ 0.050}$ & $0.581\pm{ 0.675}$ & $0.874\pm{ 1.104}$ \\
2.5 & \cellcolor{green!25}\bm{$0.037\pm{ 0.001}$} & $0.317\pm{ 0.023}$ & $0.627\pm{ 0.131}$ & $0.225\pm{ 0.032}$ & $1.045\pm{ 2.139}$ & $1.823\pm{ 4.516}$ \\
3.5 & \cellcolor{green!25}\bm{$0.059\pm{ 0.002}$} & $0.120\pm{ 0.013}$ & $0.748\pm{ 0.111}$ & $0.449\pm{ 0.117}$ & $3.900\pm{ 9.557}$ & $7.221\pm{ 19.895}$ \\
4.5 & \cellcolor{green!25}\bm{$0.068\pm{ 0.004}$} & $0.144\pm{ 0.013}$ & $1.192\pm{ 0.145}$ & $0.794\pm{ 0.227}$ & $16.018\pm{ 44.580}$ & $30.671\pm{ 89.239}$ \\
5.5 & \cellcolor{green!25}\bm{$0.087\pm{ 0.005}$} & $0.246\pm{ 0.010}$ & $2.258\pm{ 0.192}$ & $0.826\pm{ 0.163}$ & $72.134\pm{ 212.668}$ & $136.803\pm{ 407.955}$ \\
6.5 & \cellcolor{green!25}\bm{$0.126\pm{ 0.011}$} & $0.437\pm{ 0.042}$ & $2.547\pm{ 0.200}$ & $2.346\pm{ 0.418}$ & $354.032\pm{ 1056.419}$ & $649.221\pm{ 1942.159}$ \\
7.5 & \cellcolor{green!25}\bm{$0.120\pm{ 0.006}$} & $0.324\pm{ 0.090}$ & $1.020\pm{ 0.136}$ & $2.614\pm{ 0.291}$ & $1691.087\pm{ 5066.068}$ & $2986.712\pm{ 8953.868}$ \\
8.5 & \cellcolor{green!25}\bm{$0.116\pm{ 0.005}$} & $0.802\pm{ 0.238}$ & $0.660\pm{ 0.150}$ & $1.934\pm{ 0.762}$ & $8102.608\pm{ 24292.997}$ & $13770.547\pm{ 41303.163}$ \\
\bottomrule
\end{tabular}}
\label{tab:repr-missingobs-emd}
\end{table}

\clearpage

\subsection{Current in the Gulf of Mexico}
\subsubsection{Experimental setup}
\label{app:gom-setup}
We test our method in fitting and forecasting real ocean-current data from the Gulf of Mexico. We use high-resolution (1 km) bathymetry data from a HYbrid Coordinate Ocean Model (HYCOM) reanalysis\footnote{Dataset available at \href{https://www.hycom.org/data/gomb0pt01/gom-reanalysis}{this link}.} \citep{panagiotis2014gulf}. This dataset has been in the public domain since it was released by the US Department of Defense. The dataset provides hourly ocean current velocity fields for the region extending from 98$^{\circ}$E to 77$^{\circ}$E in longitude and from 18$^{\circ}$N to 32$^{\circ}$N in latitude, covering every day since the first day of, 2001. 

We then generate particles following \citet{shen2024multi}. That is, we took the velocity field in a region where a vortex is observed in June 1st 2024 at 5pm. We then select an initial location near the vortex and uniformly sample 4,400 initial positions within a small radius ($0.05$) around this point and evolve these particles over 11 steps using the ocean current velocity field. The time step size is $1.0$ and left the last time step as validation. At each time point we retain 400 particles. We approximate the velocity at each particle's position using the velocity at the nearest grid point when the particle does not align exactly with a grid point. In addition, between each training time point, we simulate another 9 intermediate steps at middle point between each pair of consecutive training time points, with 400 particle each to test for interpolation. 

\subsubsection{Model family choice}
\label{app:gom_implementation}

We employ a physically motivated model to represent the vortex by combining a Lamb-Oseen vortex --- a solution of the two-dimensional viscous Navier-Stokes equations \citep{saffman1995vortex} --- with a constant divergence field. The Lamb-Oseen component captures the swirling, rotational dynamics typical of a vortex, while the divergence field is added to account for vertical motion or non-conservative forces that may cause a net expansion or contraction of the flow. In other words, this combined model enables us to represent both the core vortex behavior and the secondary effects influencing particle motion.

Formally, the particle trajectories are modeled by the following family of stochastic differential equations (SDEs):
\begin{equation}
    \begin{aligned}
        dX&=\left[-\gamma\frac{(Y-y_0)r_v}{(Y-y_0)^2+(Y-y_0)^2}\left(1-\exp\left(\frac{\sqrt{(Y-y_0)^2+(Y-y_0)^2}}{r_v}\right)\right)+d\frac{X-x_{0,d}}{r_d}  \right]dt+\sigma dW_x\\
        dY&=\left[\gamma\frac{(X-x_0)r_v}{(Y-y_0)^2+(Y-y_0)^2}\left(1-\exp\left(\frac{\sqrt{(Y-y_0)^2+(Y-y_0)^2}}{r_v}\right)\right)+d(Y-y_{0,d})  \right]dt+\sigma dW_y\\
    \end{aligned}
\end{equation}

In this formulation, the free parameters are:
\begin{itemize}
    \item Circulation ($\gamma$): Controls the strength of the vortex.
    \item Vortex length scale ($r_v$): Sets the radial decay of the vortex’s influence.
    \item Vortex center ($x_0, y_0$): Specifies the location of the vortex core.
    \item Divergence ($d$): Represents the magnitude of the constant divergence field.
    \item Divergence length scale ($r_d$): Governs the spatial extent of the divergence effect in the x-direction.
    \item Divergence center ($x_{0,d}, y_{0,d}$): Determines the reference location for the divergence field.
    \item Volatility ($\sigma$): Captures the stochastic fluctuations in particle motion.
\end{itemize}

\subsubsection{Forecasting results}
In this section, we further discuss results for the Gulf of Mexico vortex experiment. In particular, we analyze the EMD, MMD.

\begin{table}[!ht]
    \caption{Evaluation metric for Gulf of Mexico experiment (mean(sd)). Drift was evaluated using MSE on a grid.}
    \centering
    \begin{tabular}{lccc}
    \hline
    &     & Gulf of Mexico vortex & \\
    Metric     &  Ours & \texttt{SBIRR-ref} & \texttt{SB-forward} \\
    \hline
    Forecast-MMD & 0.66 (0.031) & \cellcolor{green!25}\textbf{0.35 (0.032)}& 0.62 (0.054) \\
    Forecast-EMD &  \cellcolor{green!25}\textbf{0.71(0.014)} & 0.89(0.034)& 0.94(0.081) \\
    Drift & 0.054 ($7.3\times 10^{-5}$) & \cellcolor{green!25}\textbf{0.031 (0.00032)} & 0.15 (0.023) \\
    \hline
    \end{tabular}

    \label{tab:GoM}
\end{table}

In the second row of \cref{tab:GoM}, we observe that our method achieves the lowest EMD for the forecasting task, indicating the closest match to the ground truth particle distribution. This aligns with the visual results in \cref{fig:GoMforecast} from the main text, where our forecasted particles accurately capture the spatial structure of the vortex, unlike the baselines which produce more scattered and less coherent predictions. While the MMD metric (first row) slightly favors the \texttt{SBIRR-ref} baseline, this discrepancy may be attributed to the sensitivity of MMD to particle density and kernel choice.

\subsubsection{Vector field reconstruction}
\label{app:gom-vector}
In the third row of \cref{tab:GoM}, we compare the drift reconstruction error using MSE on a grid. Here \texttt{SBIRR-ref} achieves the lowest error, and our method performs comparably and still significantly outperforms \texttt{SB-forward}. Visualizations in \cref{fig:gom-appendix} provide further insight: the reconstructed velocity fields from all the three methods exhibit a well-formed vortex structure closely resembling the ground truth (with our method and \texttt{SBIRR-ref} being slightly better, as also shown by the MSE results). 

\begin{figure}[!ht]
    \centering
    \includegraphics[width=\linewidth]{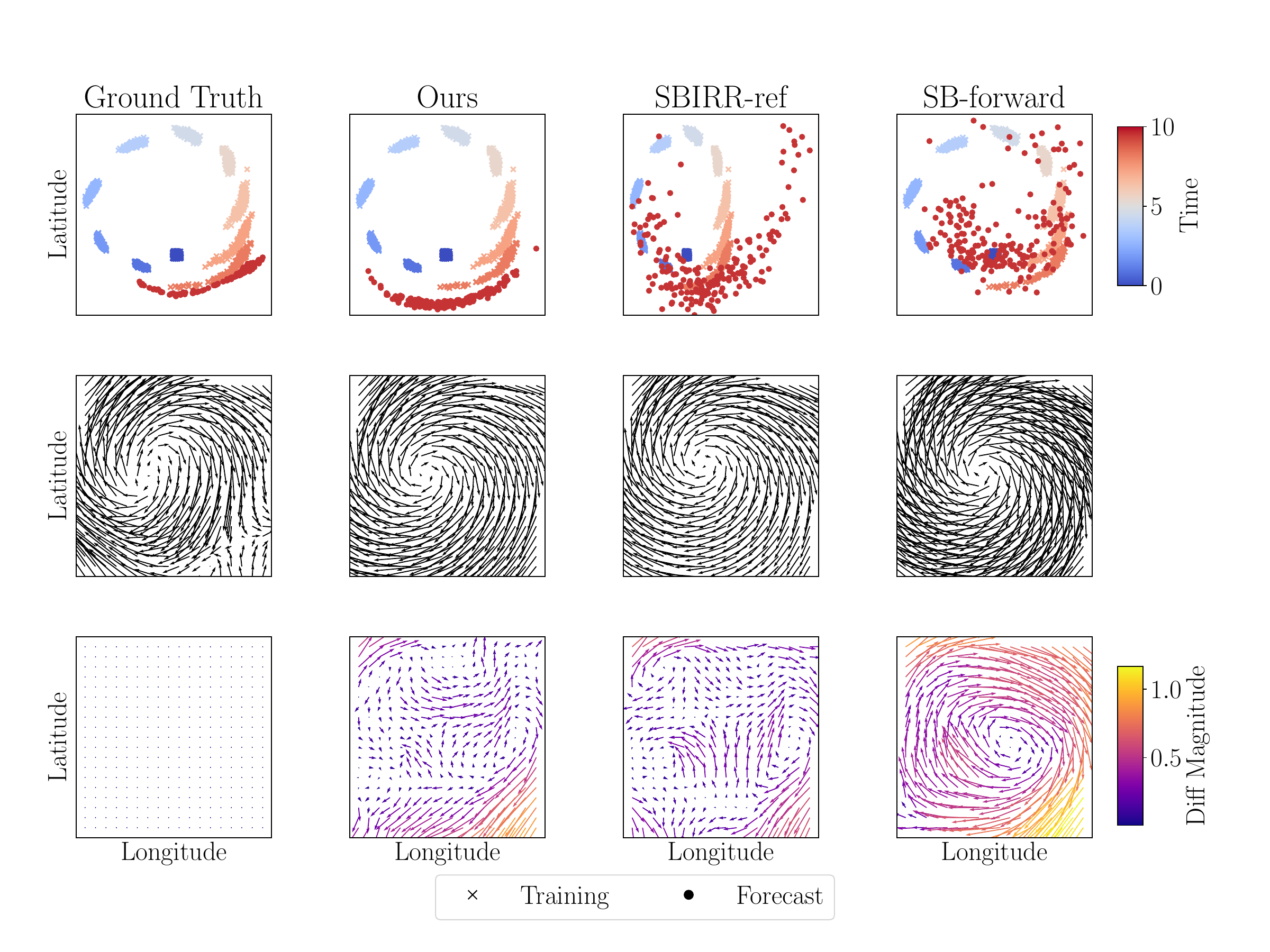}
    \caption{Experimental results for the Gulf of Mexico experiment.}
    \label{fig:gom-appendix}
\end{figure}

\subsubsection{Interpolation}
\label{app:gom-interpol}

We now evaluate interpolation performance on the real-world drifter trajectories in the Gulf of Mexico. As shown in \cref{fig:GoM-interpol}, our method captures the overall geometry of the flow and the looping structure of the trajectories. All baselines also succeed in reconstructing the large-scale circulation, with \texttt{SBIRR} achieving the closest match to the held-out validation points. These patterns are reflected in the quantitative metrics in \cref{fig:GoM-interpol-metric} and tables \cref{tab:gom-mmd}–\cref{tab:gom-emd}: across nearly all validation times, \texttt{SBIRR} achieves the lowest MMD and EMD values, consistently outperforming our method and often the second-best method by a considerable margin.

This performance difference highlights a key distinction between modeling objectives. \texttt{SBIRR} is designed to directly interpolate between training marginals, and in this setting—where particles are relatively dense and the underlying flow field is smooth—interpolating training points naturally leads to trajectories that also pass near the validation points, which lie in between. In contrast, our method is not explicitly optimized to pass through the training marginals, but rather to estimate a smooth underlying velocity field from the available data. This distinction becomes particularly relevant in tasks such as forecasting, where the goal is to recover and extrapolate the underlying dynamics rather than simply interpolate known states.

\begin{figure}
    \centering
    \includegraphics[width=\linewidth]{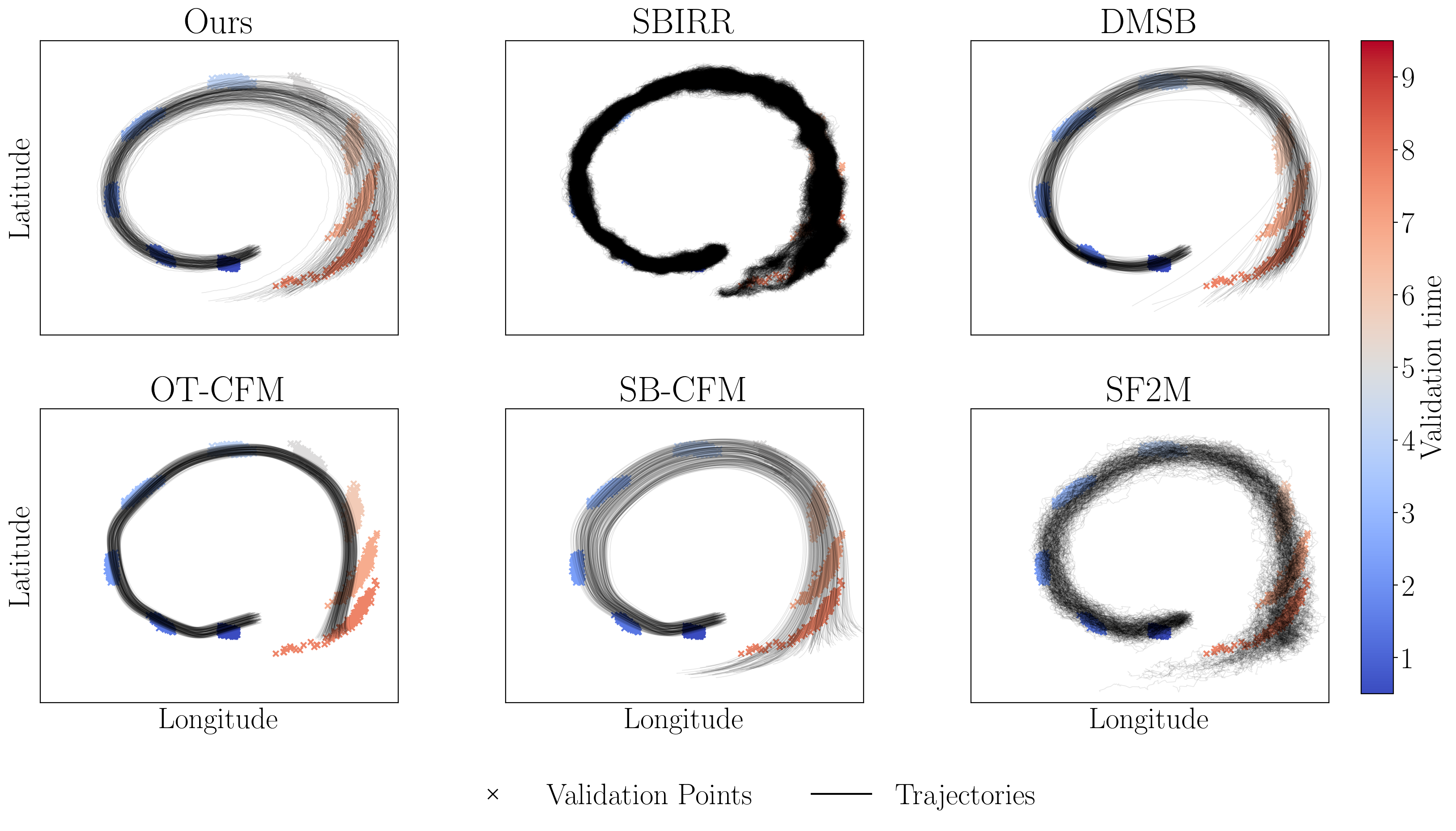}
    
    \caption{Interpolation of Gulf of Mexico current.}
    \label{fig:GoM-interpol}
\end{figure}

\begin{figure}
    \centering
    \includegraphics[width=\linewidth]{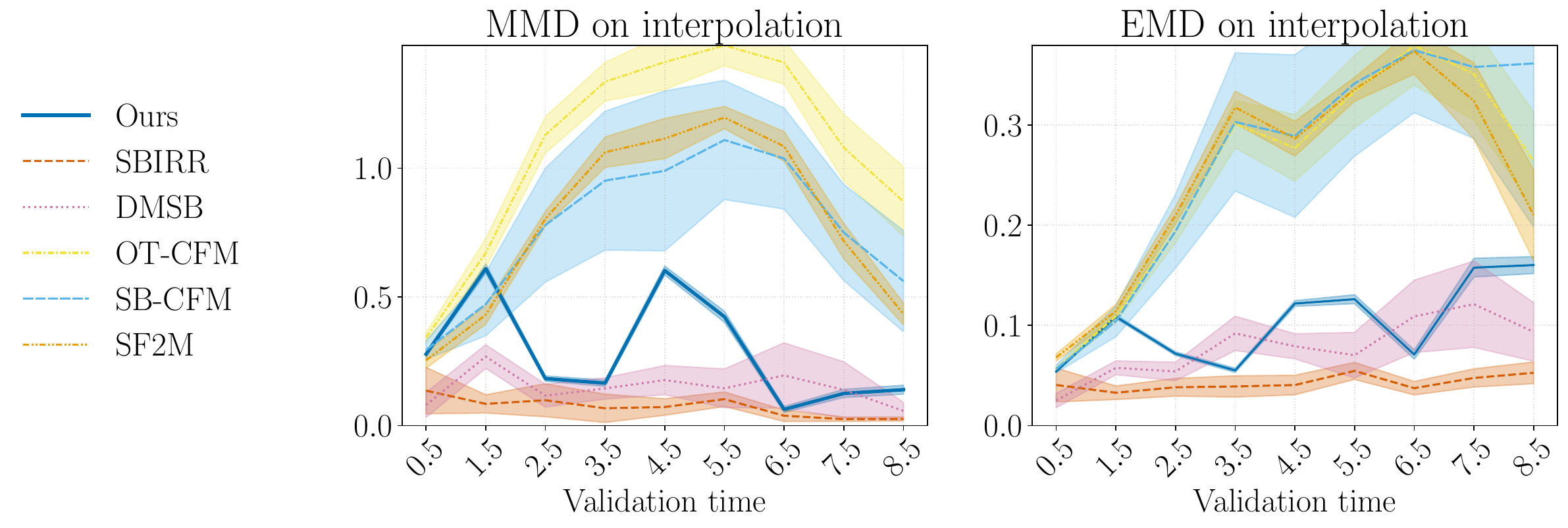}
    \caption{Metrics for Interpolation of Gulf of Mexico current.}
    \label{fig:GoM-interpol-metric}
\end{figure}

\begin{table}[ht]
\caption{MMD at each validation point for Gulf of Mexico.}
\centering
\makebox[0pt][c]{
\small
\begin{tabular}{lrrrrrr}
\toprule
Time & Ours & \texttt{SBIRR} & \texttt{DMSB} & \texttt{OT-CFM} & \texttt{SB-CFM} & \texttt{SF2M} \\\midrule
0.5 & $0.278\pm{ 0.009}$ & $0.136\pm{ 0.090}$ & \cellcolor{green!25}\bm{$0.080\pm{ 0.049}$} & $0.342\pm{ 0.019}$ & $0.294\pm{ 0.036}$ & $0.254\pm{ 0.031}$ \\
1.5 & $0.610\pm{ 0.018}$ & \cellcolor{green!25}\bm{$0.084\pm{ 0.035}$} & $0.268\pm{ 0.046}$ & $0.670\pm{ 0.056}$ & $0.472\pm{ 0.123}$ & $0.430\pm{ 0.037}$ \\
2.5 & $0.183\pm{ 0.010}$ & \cellcolor{green!25}\bm{$0.099\pm{ 0.064}$} & \cellcolor{green!25}$0.116\pm{ 0.044}$ & $1.129\pm{ 0.071}$ & $0.779\pm{ 0.221}$ & $0.803\pm{ 0.029}$ \\
3.5 & $0.165\pm{ 0.012}$ & \cellcolor{green!25}\bm{$0.067\pm{ 0.055}$} & $0.144\pm{ 0.041}$ & $1.335\pm{ 0.075}$ & $0.951\pm{ 0.270}$ & $1.062\pm{ 0.059}$ \\
4.5 & $0.602\pm{ 0.017}$ & \cellcolor{green!25}\bm{$0.072\pm{ 0.032}$} & $0.177\pm{ 0.057}$ & $1.412\pm{ 0.107}$ & $0.989\pm{ 0.311}$ & $1.115\pm{ 0.078}$ \\
5.5 & $0.423\pm{ 0.022}$ & \cellcolor{green!25}\bm{$0.103\pm{ 0.028}$} & $0.145\pm{ 0.075}$ & $1.476\pm{ 0.079}$ & $1.109\pm{ 0.231}$ & $1.196\pm{ 0.043}$ \\
6.5 & $0.063\pm{ 0.012}$ & \cellcolor{green!25}\bm{$0.038\pm{ 0.022}$} & $0.195\pm{ 0.126}$ & $1.410\pm{ 0.084}$ & $1.037\pm{ 0.196}$ & $1.084\pm{ 0.057}$ \\
7.5 & $0.125\pm{ 0.016}$ & \cellcolor{green!25}\bm{$0.025\pm{ 0.008}$} & $0.139\pm{ 0.109}$ & $1.079\pm{ 0.129}$ & $0.750\pm{ 0.187}$ & $0.717\pm{ 0.069}$ \\
8.5 & $0.140\pm{ 0.017}$ & \cellcolor{green!25}\bm{$0.026\pm{ 0.008}$} & $0.057\pm{ 0.034}$ & $0.870\pm{ 0.135}$ & $0.561\pm{ 0.196}$ & $0.434\pm{ 0.042}$ \\
\bottomrule
\end{tabular}}
\label{tab:gom-mmd}
\end{table}

\begin{table}[ht]
\caption{EMD at each validation point for Gulf of Mexico.}
\centering
\makebox[0pt][c]{
\small
\begin{tabular}{lrrrrrr}
\toprule
Time & Ours & \texttt{SBIRR} & \texttt{DMSB} & \texttt{OT-CFM} & \texttt{SB-CFM} & \texttt{SF2M} \\\midrule
0.5 & $0.054\pm{ 0.001}$ & $0.041\pm{ 0.017}$ & \cellcolor{green!25}\bm{$0.025\pm{ 0.007}$} & $0.057\pm{ 0.002}$ & $0.057\pm{ 0.004}$ & $0.068\pm{ 0.004}$ \\
1.5 & $0.109\pm{ 0.002}$ & \cellcolor{green!25}\bm{$0.033\pm{ 0.007}$} & $0.058\pm{ 0.007}$ & $0.109\pm{ 0.006}$ & $0.104\pm{ 0.015}$ & $0.114\pm{ 0.007}$ \\
2.5 & $0.072\pm{ 0.002}$ & \cellcolor{green!25}\bm{$0.038\pm{ 0.009}$} & $0.054\pm{ 0.009}$ & $0.194\pm{ 0.012}$ & $0.194\pm{ 0.037}$ & $0.210\pm{ 0.009}$ \\
3.5 & $0.055\pm{ 0.002}$ & \cellcolor{green!25}\bm{$0.039\pm{ 0.011}$} & $0.092\pm{ 0.017}$ & $0.301\pm{ 0.024}$ & $0.303\pm{ 0.069}$ & $0.318\pm{ 0.016}$ \\
4.5 & $0.122\pm{ 0.003}$ & \cellcolor{green!25}\bm{$0.041\pm{ 0.010}$} & $0.079\pm{ 0.013}$ & $0.277\pm{ 0.033}$ & $0.289\pm{ 0.081}$ & $0.286\pm{ 0.017}$ \\
5.5 & $0.126\pm{ 0.004}$ & \cellcolor{green!25}\bm{$0.055\pm{ 0.009}$} & $0.070\pm{ 0.023}$ & $0.333\pm{ 0.036}$ & $0.342\pm{ 0.073}$ & $0.336\pm{ 0.012}$ \\
6.5 & $0.071\pm{ 0.005}$ & \cellcolor{green!25}\bm{$0.037\pm{ 0.007}$} & $0.109\pm{ 0.036}$ & $0.379\pm{ 0.040}$ & $0.374\pm{ 0.062}$ & $0.374\pm{ 0.023}$ \\
7.5 & $0.158\pm{ 0.009}$ & \cellcolor{green!25}\bm{$0.048\pm{ 0.009}$} & $0.121\pm{ 0.043}$ & $0.351\pm{ 0.046}$ & $0.358\pm{ 0.071}$ & $0.324\pm{ 0.038}$ \\
8.5 & $0.160\pm{ 0.008}$ & \cellcolor{green!25}\bm{$0.053\pm{ 0.011}$} & $0.094\pm{ 0.029}$ & $0.264\pm{ 0.049}$ & $0.361\pm{ 0.163}$ & $0.210\pm{ 0.045}$ \\
\bottomrule
\end{tabular}}
\label{tab:gom-emd}
\end{table}

\clearpage

\subsection{T cell-mediated immune activation}
In this section, we describe details of the T cell-mediated immune activation experiment contained in the main text. The dataset was released with the paper by \citet{jiang2024d} under CC-BY-NC 4.0 international license. 
\subsubsection{Background for the biological experiment.}
\label{app:pbmc-biology}
\paragraph{Peripheral blood mononuclear cells}
Peripheral blood mononuclear cells (PBMCs) comprise a heterogeneous mixture of immune cells, including T cells, B cells, natural killer (NK) cells, and myeloid lineages.  
To trigger a coordinated immune response, the PBMC pool from a single healthy donor was stimulated \emph{in vitro} with anti-CD3/CD28 antibodies at $t=0$ to selectively induce T cell activation.   
Cytokines released by activated T cells subsequently induce transcriptional changes and subsequent cytokine communications in the immune cell populations, yielding a rich, system-wide dynamical process.

\paragraph{Data acquisition.}
Cells were sampled every 30 min for 30 h (61 time points) and profiled with multiplexed single-cell mRNA sequencing \citep{jiang2024d}.  Raw counts were library-size normalised and log-transformed following standard scRNA-seq workflows.  
Each snapshot is very high-dimensional (there are hundreds of cells and thousands of genes for each cell). 
Because modelling all genes directly is infeasible and biologically redundant, we adopt the widely used \emph{gene-program} formulation: groups of co-expressed genes are collapsed into latent variables that capture coordinated transcriptional activity.  

\subsubsection{Experiment setup}
\label{app:pbmc-setup}
We use the 30 biologically annotated programs in \citet{jiang2024d} together with the dataset, which was computed from orthogonal non-negative matrix factorisation (oNMF) \citep{ding2006orthogonal}. Projecting each cell onto this 30-dimensional program space yields a low-noise, interpretable representation that is well suited for dynamical modelling. We took data from 0-20 hours (41 snapshots in total), before the cells reached steady state. We train our model using data at $0, 1, \dots, 19$th hours for training, left measurement at $20$th hour to test for forecast, and left $0.5, 1.5, \dots, 18.5$th hour to test for interpolation. We show in \cref{fig:pbmc-training-progression} the evolution of the 20 training points and the forecast validation time point (bottom-left).

\begin{figure}
    \centering
    \includegraphics[width=1\linewidth]{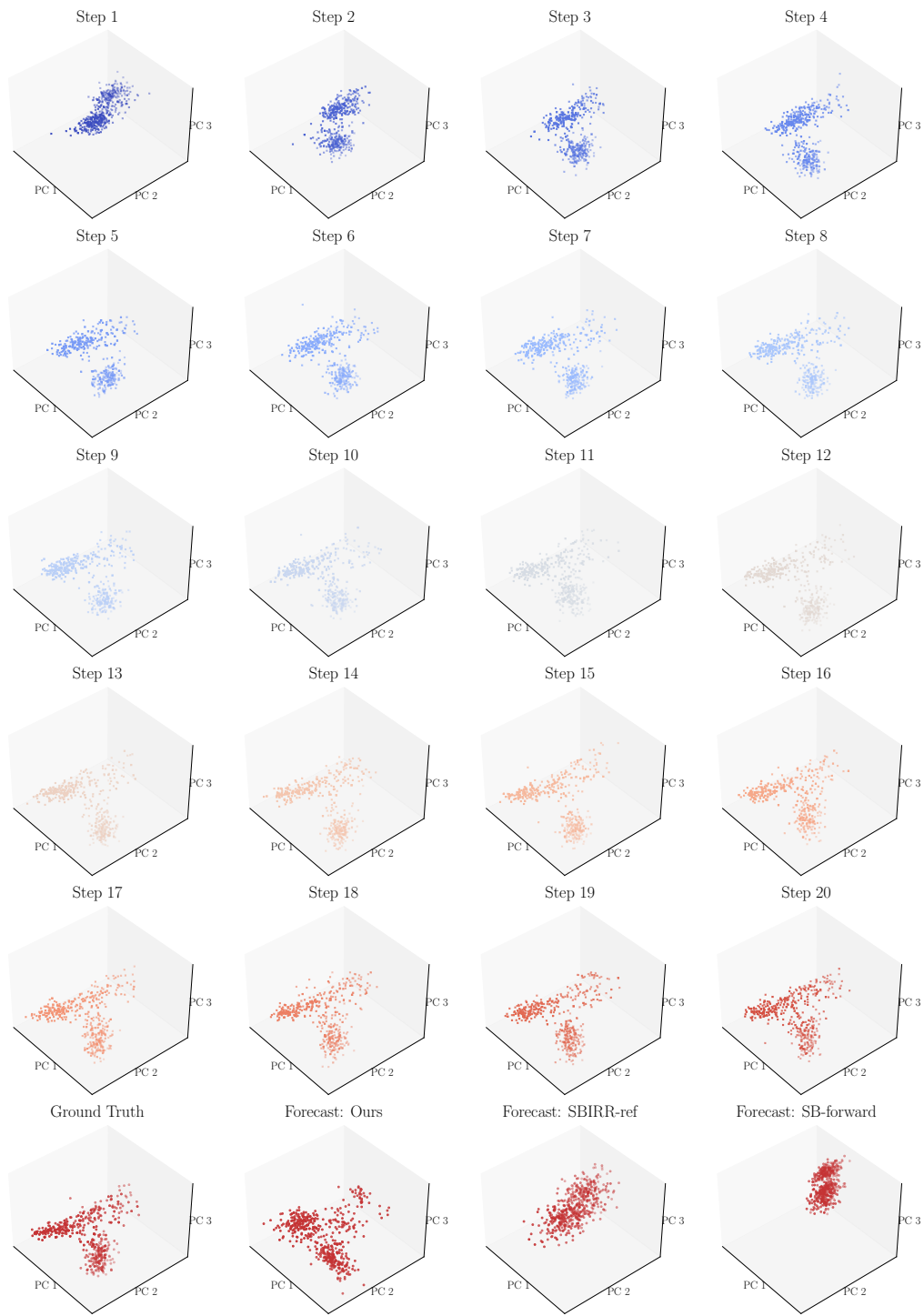}
    \caption{Training data for the PBMC experiment and forecasts with the three methods.}
    \label{fig:pbmc-training-progression}
\end{figure}

\subsubsection{Model family choice}
\label{app:pbmc-implementation}
We employ the architecture of \cref{eq:mlpactive}, instantiated with a 30-dimensional state space.  
The drift is parameterised by a three-layer MLP with hidden widths $[128,128,128]$, ReLU activations and a sigmoid output that keeps gene-program values within biologically plausible bounds.  
Hyper-parameters were chosen via a small grid search (two vs. three layers, and 32 vs. 64 vs. 128 hidden nodes per layer) on the $R^{2}$ score.

\subsubsection{Forecasting results}
\label{app:pbmc-forecast}
Table~\ref{tab:repr-pbmc} reports quantitative forecasting performance.  
Because the ground-truth vector field is unknown, we restrict evaluation to distributional metrics.  
EMD solver failed to converge in 30 dimensions, so we use MMD with an RBF kernel of bandwidth~1.  
Our method attains an MMD that is an order of magnitude lower than either baseline, confirming the qualitative superiority observed in \cref{fig:pbmc-forecasting}.  

\begin{table}[!ht]
    \centering
    \caption{One-step-ahead forecasting error on the T-cell activation dataset (mean $\pm$ s.d.\ over 10 seeds). MMD is computed with an RBF kernel of bandwidth 1 after scaling each gene program to unit variance. EMD is not reported because the solver failed to converge in 30 dimensions.}
    \begin{tabular}{lccc}
    \hline
    &     & pbmc & \\
    Metric     &  Ours & \texttt{SBIRR-ref} & \texttt{SB-forward} \\
    \hline
    Forecast-MMD & \cellcolor{green!25}\textbf{0.0042 (0.0017)} & 0.097 (0.060)& 0.51 (0.16) \\
    \hline
    \end{tabular}

    \label{tab:repr-pbmc}
\end{table}

\subsubsection{Interpolation}
\label{app:pbmc-interpol}

We now assess interpolation performance on this real-world single-cell dataset. As shown in \cref{fig:pbmc-interpol}, our method produces biologically plausible trajectories that span the principal components of the data and remain well-aligned with the progression of held-out validation points. \texttt{SBIRR}, \texttt{OT-CFM} , and \texttt{SB-CFM}  perform similarly well in this task, whereas \texttt{DMSB}  produces notably erratic paths with high variance, and \texttt{SF2M}  displays more diffuse interpolations. The same pattern appears if we look at interpolation predictions across the validation time steps, as shown in \cref{fig:pbmc-interpolation-1-5}, \cref{fig:pbmc-interpolation-6-10}, \cref{fig:pbmc-interpolation-11-15}, and \cref{fig:pbmc-interpolation-16-20}. In particular, we see that our method and \texttt{SBIRR} produce quite accurate interpolations for most of the time steps. Quantitative results in \cref{fig:pbmc-interpol-metric} and \cref{tab:pbmc-mmd} support these impressions. Across the full time course, our method consistently achieves the lowest MMD at most validation points, often by a statistically significant margin. \texttt{SBIRR} performs competitively, particularly at later times, while the performance of \texttt{DMSB}  degrades substantially, as reflected by persistently high MMD values throughout the trajectory. We note that, in contrast to previous experiments, we do not report EMD in this setting. Due to the high dimensionality of the latent space (30 dimensions), EMD becomes increasingly unreliable as a metric, suffering from the curse of dimensionality and producing unstable estimates. For this reason, we focus our evaluation on MMD, which remains well-behaved in high-dimensional settings and provides a more robust comparison of distributional fidelity across methods.

\begin{figure}[!ht]
    \centering
    \includegraphics[width=\linewidth]{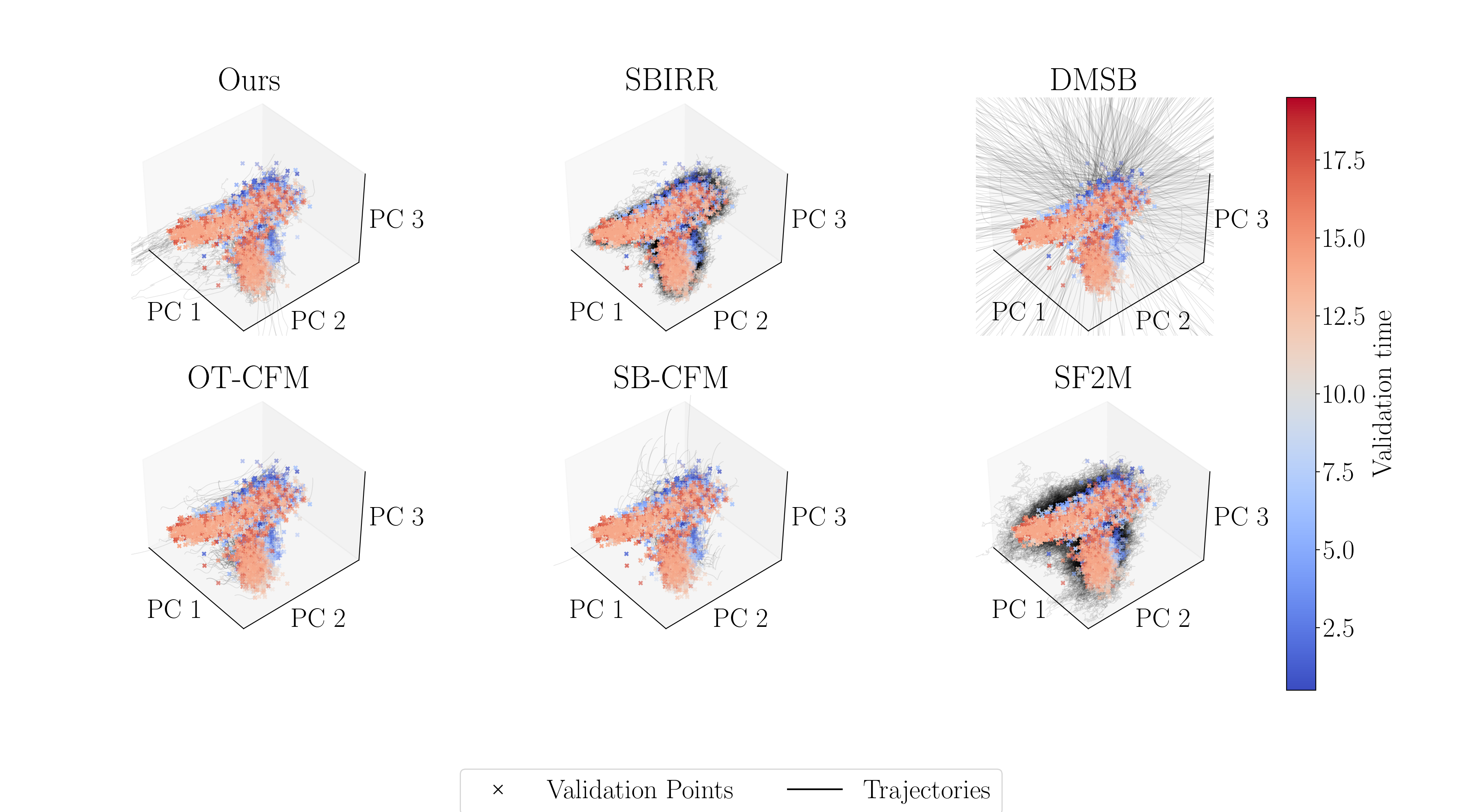}
    
    \caption{Interpolation of pbmc dataset. }
    \label{fig:pbmc-interpol}
\end{figure}

\begin{figure}[!ht]
    \centering
    \includegraphics[width=\linewidth]{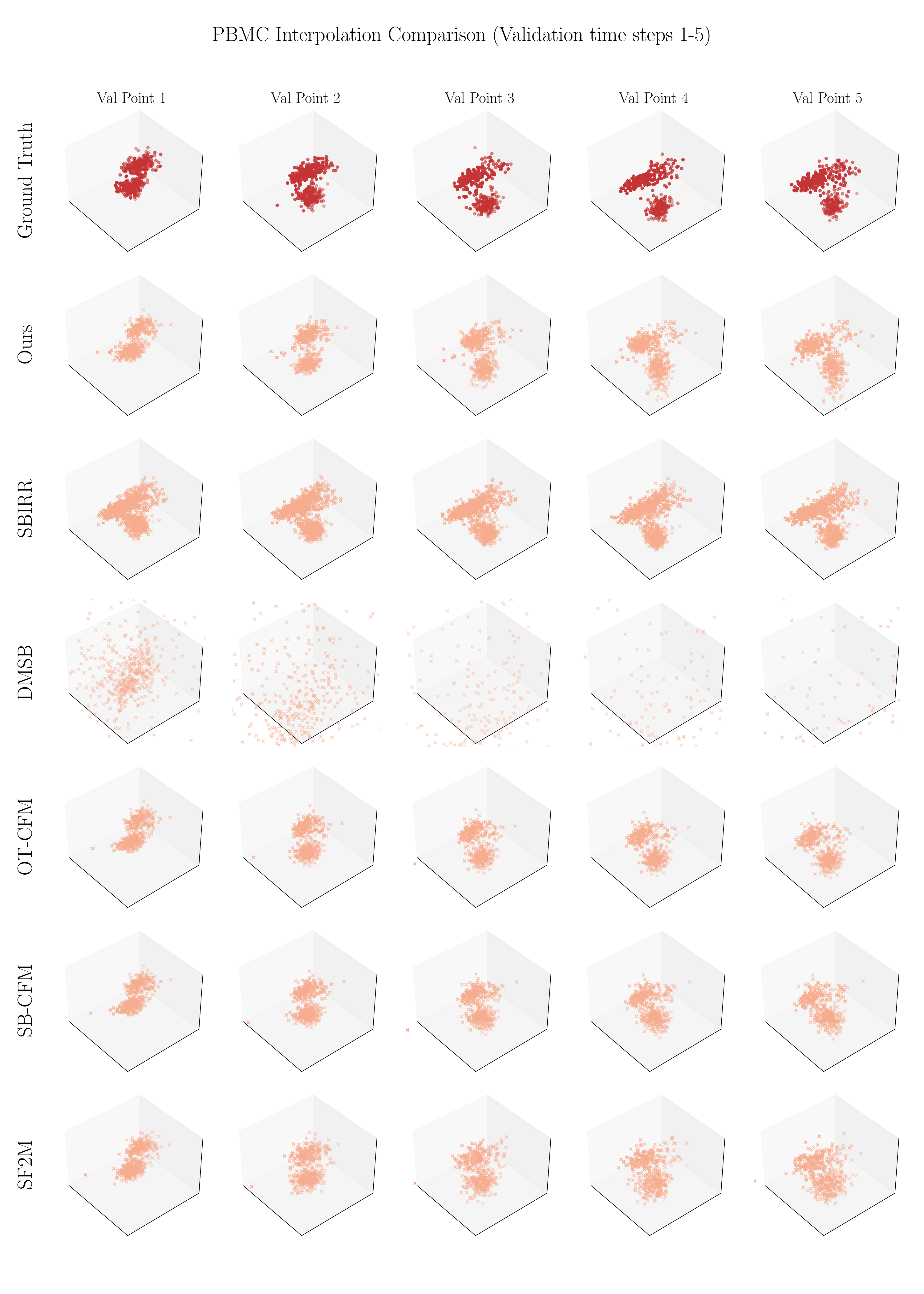}
    \caption{PBMC interpolation results for the first 5 validation time points. The axis are the first three principal components as in the forecasting experiment. The first row shows the evolution of cells for ground truth. The other six rows show the predicted cells at the validation time points for the our method and the five baselines.}
    \label{fig:pbmc-interpolation-1-5}
\end{figure}

\begin{figure}[!ht]
    \centering
    \includegraphics[width=\linewidth]{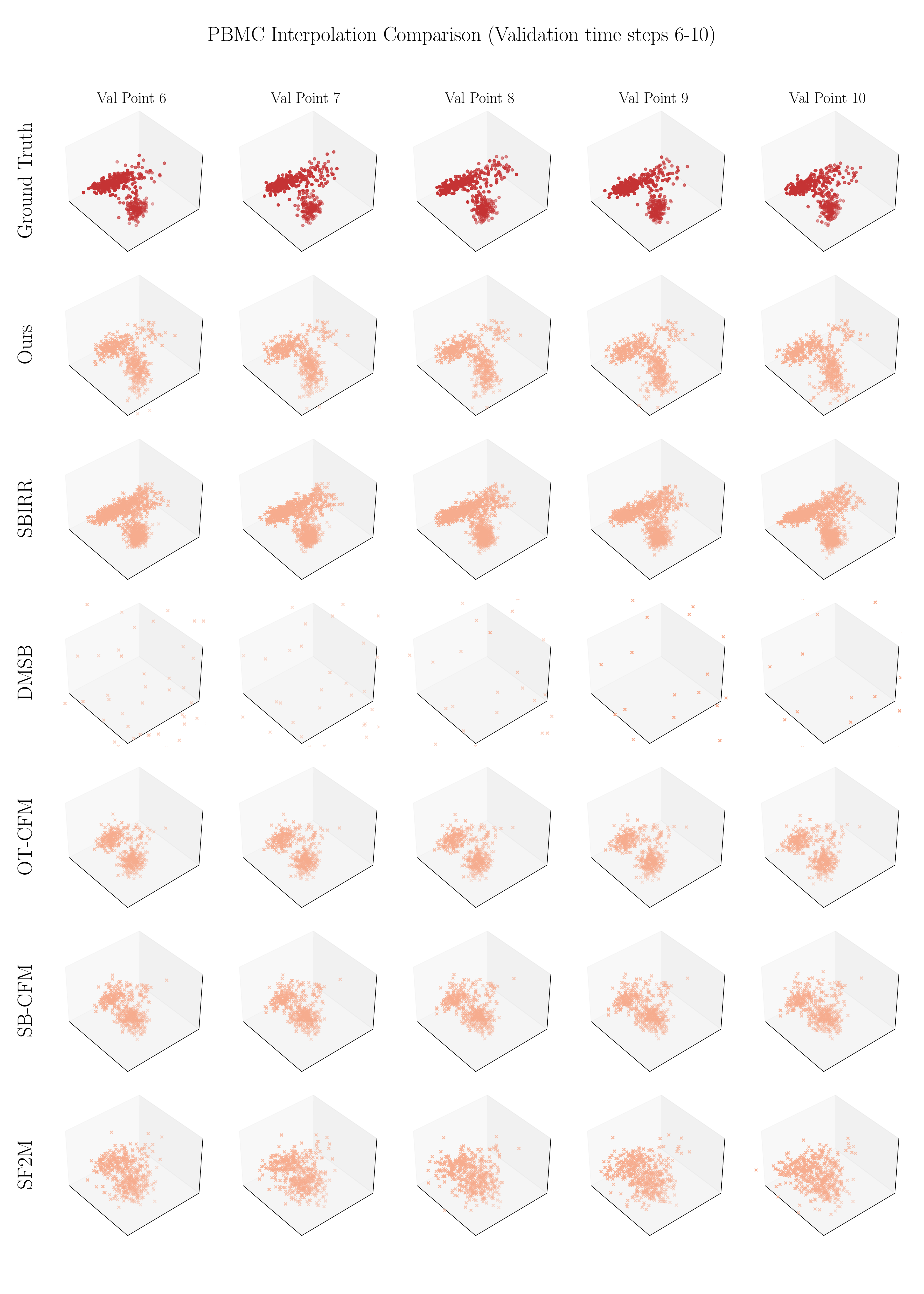}
    \caption{PBMC interpolation results for validation time points 6 to 10. The axis are the first three principal components as in the forecasting experiment. The first row shows the evolution of cells for ground truth. The other six rows show the predicted cells at the validation time points for the our method and the five baselines.}
    \label{fig:pbmc-interpolation-6-10}
\end{figure}

\begin{figure}[!ht]
    \centering
    \includegraphics[width=\linewidth]{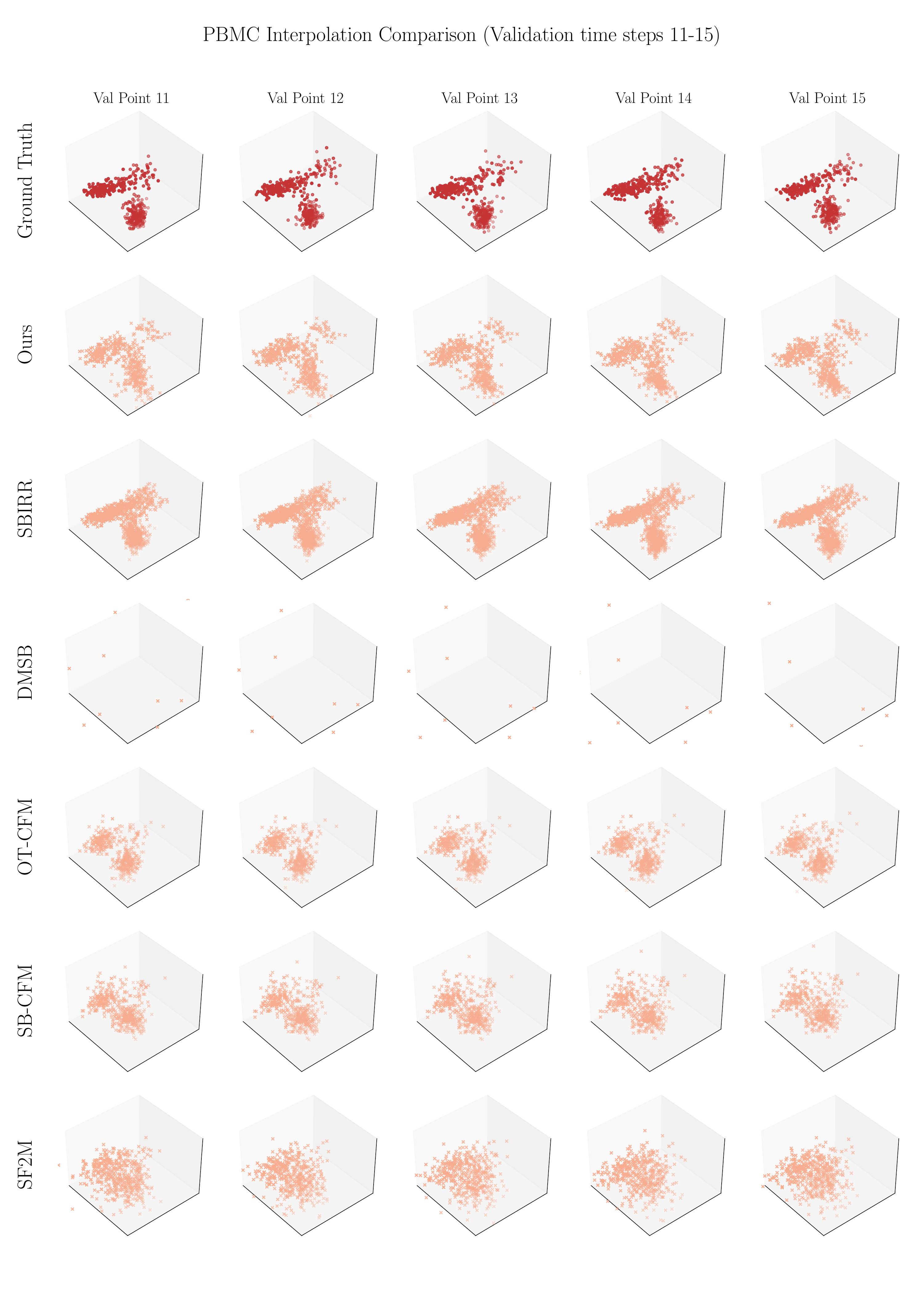}
    \caption{PBMC interpolation results for validation time points 11 to 15. The axis are the first three principal components as in the forecasting experiment. The first row shows the evolution of cells for ground truth. The other six rows show the predicted cells at the validation time points for the our method and the five baselines.}
    \label{fig:pbmc-interpolation-11-15}
\end{figure}

\begin{figure}[!ht]
    \centering
    \includegraphics[width=\linewidth]{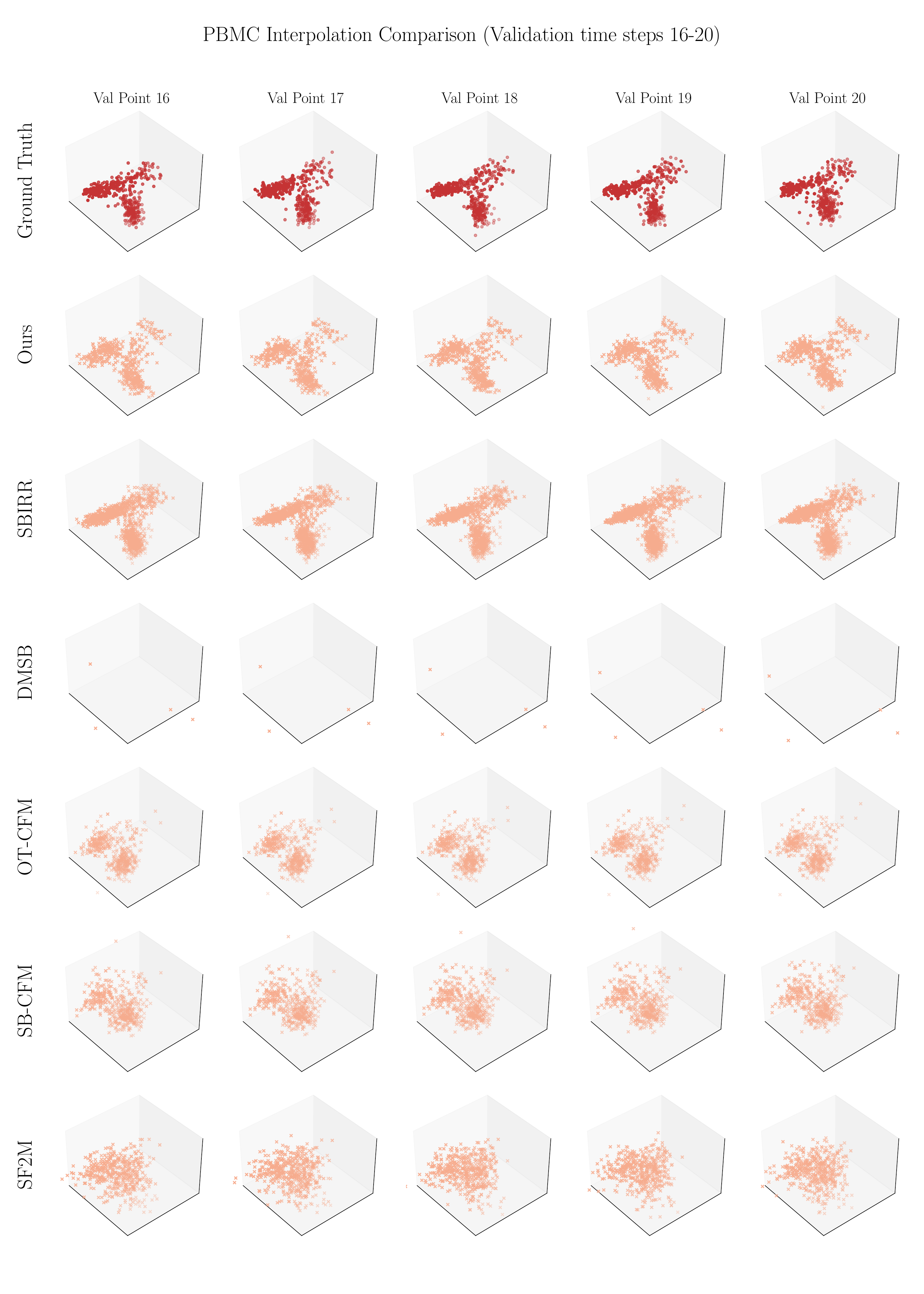}
    \caption{PBMC interpolation results for validation time points 16 to 20. The axis are the first three principal components as in the forecasting experiment. The first row shows the evolution of cells for ground truth. The other six rows show the predicted cells at the validation time points for the our method and the five baselines.}
    \label{fig:pbmc-interpolation-16-20}
\end{figure}

\begin{figure}[!ht]
    \centering
    \includegraphics[width=\linewidth]{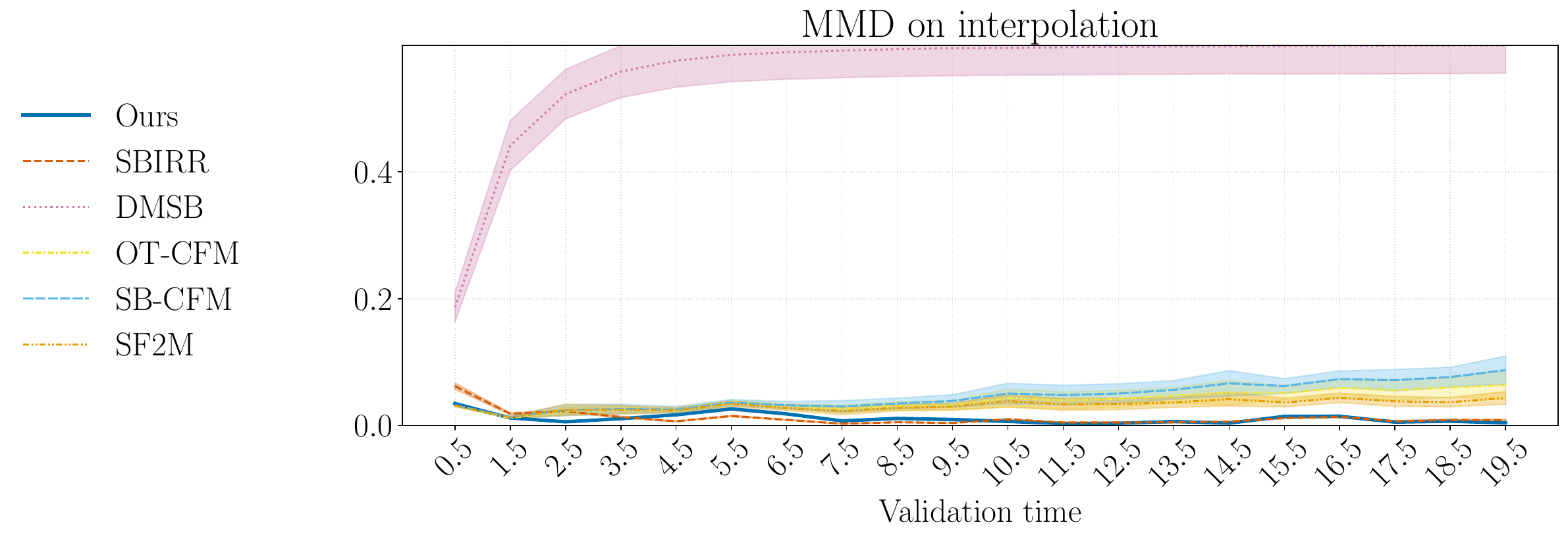}
    \caption{Metrics for Interpolation of pbmc dataset. }
    \label{fig:pbmc-interpol-metric}
\end{figure}

\begin{table}[ht]
\caption{MMD at each validation point for PBMC.}
\centering
\makebox[0pt][c]{
\small
\begin{tabular}{lrrrrrr}
\toprule
Time & Ours & \texttt{SBIRR} & \texttt{DMSB} & \texttt{OT-CFM} & \texttt{SB-CFM} & \texttt{SF2M} \\\midrule
0.5 & $0.035\pm{ 0.002}$ & $0.062\pm{ 0.005}$ & $0.187\pm{ 0.023}$ & \cellcolor{green!25}\bm{$0.031\pm{ 0.001}$} & $0.032\pm{ 0.001}$ & \cellcolor{green!25}$0.031\pm{ 0.001}$ \\
1.5 & \cellcolor{green!25}$0.012\pm{ 0.000}$ & $0.019\pm{ 0.001}$ & $0.442\pm{ 0.039}$ & \cellcolor{green!25}\bm{$0.012\pm{ 0.001}$} & $0.013\pm{ 0.002}$ & $0.013\pm{ 0.002}$ \\
2.5 & \cellcolor{green!25}\bm{$0.006\pm{ 0.001}$} & $0.022\pm{ 0.001}$ & $0.523\pm{ 0.039}$ & $0.023\pm{ 0.006}$ & $0.024\pm{ 0.009}$ & $0.025\pm{ 0.009}$ \\
3.5 & \cellcolor{green!25}\bm{$0.011\pm{ 0.001}$} & $0.013\pm{ 0.002}$ & $0.558\pm{ 0.041}$ & $0.026\pm{ 0.005}$ & $0.026\pm{ 0.008}$ & $0.025\pm{ 0.007}$ \\
4.5 & $0.017\pm{ 0.001}$ & \cellcolor{green!25}\bm{$0.007\pm{ 0.001}$} & $0.576\pm{ 0.042}$ & $0.024\pm{ 0.003}$ & $0.025\pm{ 0.005}$ & $0.024\pm{ 0.004}$ \\
5.5 & $0.026\pm{ 0.002}$ & \cellcolor{green!25}\bm{$0.015\pm{ 0.001}$} & $0.585\pm{ 0.042}$ & $0.036\pm{ 0.003}$ & $0.036\pm{ 0.005}$ & $0.034\pm{ 0.004}$ \\
6.5 & $0.018\pm{ 0.002}$ & \cellcolor{green!25}\bm{$0.009\pm{ 0.001}$} & $0.589\pm{ 0.042}$ & $0.031\pm{ 0.004}$ & $0.032\pm{ 0.006}$ & $0.027\pm{ 0.004}$ \\
7.5 & $0.007\pm{ 0.001}$ & \cellcolor{green!25}\bm{$0.003\pm{ 0.000}$} & $0.591\pm{ 0.043}$ & $0.026\pm{ 0.006}$ & $0.030\pm{ 0.009}$ & $0.023\pm{ 0.006}$ \\
8.5 & $0.011\pm{ 0.002}$ & \cellcolor{green!25}\bm{$0.005\pm{ 0.000}$} & $0.594\pm{ 0.043}$ & $0.031\pm{ 0.007}$ & $0.035\pm{ 0.008}$ & $0.028\pm{ 0.005}$ \\
9.5 & $0.009\pm{ 0.002}$ & \cellcolor{green!25}\bm{$0.004\pm{ 0.000}$} & $0.595\pm{ 0.043}$ & $0.032\pm{ 0.008}$ & $0.039\pm{ 0.010}$ & $0.030\pm{ 0.005}$ \\
10.5 & \cellcolor{green!25}\bm{$0.006\pm{ 0.001}$} & $0.010\pm{ 0.001}$ & $0.596\pm{ 0.043}$ & $0.043\pm{ 0.014}$ & $0.050\pm{ 0.016}$ & $0.038\pm{ 0.010}$ \\
11.5 & \cellcolor{green!25}\bm{$0.003\pm{ 0.000}$} & $0.005\pm{ 0.001}$ & $0.597\pm{ 0.043}$ & $0.040\pm{ 0.014}$ & $0.048\pm{ 0.015}$ & $0.033\pm{ 0.009}$ \\
12.5 & \cellcolor{green!25}\bm{$0.003\pm{ 0.001}$} & $0.005\pm{ 0.001}$ & $0.597\pm{ 0.044}$ & $0.042\pm{ 0.014}$ & $0.050\pm{ 0.015}$ & $0.034\pm{ 0.009}$ \\
13.5 & $0.006\pm{ 0.001}$ & \cellcolor{green!25}\bm{$0.005\pm{ 0.000}$} & $0.598\pm{ 0.044}$ & $0.047\pm{ 0.012}$ & $0.056\pm{ 0.015}$ & $0.036\pm{ 0.008}$ \\
14.5 & \cellcolor{green!25}\bm{$0.004\pm{ 0.001}$} & $0.006\pm{ 0.001}$ & $0.598\pm{ 0.044}$ & $0.052\pm{ 0.019}$ & $0.066\pm{ 0.020}$ & $0.042\pm{ 0.011}$ \\
15.5 & $0.014\pm{ 0.002}$ & \cellcolor{green!25}\bm{$0.012\pm{ 0.001}$} & $0.599\pm{ 0.044}$ & $0.051\pm{ 0.011}$ & $0.062\pm{ 0.012}$ & $0.036\pm{ 0.007}$ \\
16.5 & \cellcolor{green!25}$0.014\pm{ 0.002}$ & \cellcolor{green!25}\bm{$0.014\pm{ 0.002}$} & $0.599\pm{ 0.044}$ & $0.060\pm{ 0.014}$ & $0.073\pm{ 0.013}$ & $0.044\pm{ 0.007}$ \\
17.5 & \cellcolor{green!25}\bm{$0.005\pm{ 0.001}$} & $0.007\pm{ 0.001}$ & $0.599\pm{ 0.044}$ & $0.056\pm{ 0.016}$ & $0.071\pm{ 0.017}$ & $0.038\pm{ 0.008}$ \\
18.5 & \cellcolor{green!25}\bm{$0.007\pm{ 0.001}$} & $0.009\pm{ 0.001}$ & $0.599\pm{ 0.044}$ & $0.060\pm{ 0.015}$ & $0.076\pm{ 0.016}$ & $0.037\pm{ 0.007}$ \\
19.5 & \cellcolor{green!25}\bm{$0.004\pm{ 0.002}$} & $0.008\pm{ 0.002}$ & $0.600\pm{ 0.044}$ & $0.064\pm{ 0.023}$ & $0.087\pm{ 0.022}$ & $0.043\pm{ 0.010}$ \\
\bottomrule
\end{tabular}}
\label{tab:pbmc-mmd}
\end{table}

\clearpage

\section{Identifiability Analysis}
\label{app:identifiability}

In this appendix, we provide further details on the identifiability problem from the the main text discussion.

\textbf{Why drift and volatility are not identified in general.}
Even with complete access to the marginal distributions $\marginal{t}$ over time, the pair $(\truedrift,\truevolatility)$ is not uniquely determined by the Fokker–Planck equation
\begin{align*}
   \frac{\partial \marginal{t}}{\partial t} = \nabla \cdot \left[-\truedrift\,\marginal{t} + \frac{1}{2}\,\truevolatility\,\truevolatility^\top \nabla \marginal{t}\right] 
\end{align*}

For example, suppose $(\truedrift, \truevolatility)$ satisfies the equation for a given $\marginal{t}$. Then, for any vector field $\bm{h}$ that satisfies the continuity condition $\nabla \cdot \left(\bm{h}\,\marginal{t}\right) = 0$, the modified drift $\truedrift' = \truedrift + \bm{h}$ with the same volatility $\truevolatility$ also satisfies the Fokker–Planck equation. This observation indicates that an infinite family of drift functions can generate the same evolution of the marginal distribution if no further constraints are imposed. Furthermore, let $\bm{A}$ be any orthogonal matrix (i.e., $\bm{A}\bm{A}^\top = \bm{I}$). Then, the pair $(\truedrift, \truevolatility\,\bm{A})$ also satisfies the Fokker–Planck equation. These examples illustrate the inherent non-uniqueness (or non-identifiability) of the drift and volatility functions based solely on the evolution of the marginal distributions.

In practice, to achieve identifiability, one must restrict the candidate function classes for $\truedrift$ and $\truevolatility$. For instance, assuming that $\truedrift$ is a gradient field (i.e., $\truedrift = \nabla \Phi$ for some potential $\Phi$ and that $\truevolatility$ is constant is known to yield identifiability under suitable conditions \citep{Lavenant2021, guan2024identifying}. A complete characterization of identifiability in more general settings is beyond the scope of this work and constitutes an important direction for future research.

\clearpage

\end{document}